\declaretheorem[name=Theorem]{theorem}
\declaretheorem[name=Lemma, numberlike=theorem]{lemma}
\declaretheorem[name=Proposition, numberlike=theorem]{proposition}
\declaretheorem[name=Definition]{definition}
\declaretheorem[name=Example]{example}
\declaretheorem[name=Lemma, numbered=no]{lemma*}
\declaretheorem[name=Proposition, numbered=no]{proposition*}
\newcommand{\method}{\texttt{bears}\xspace}
\newcommand{\acronym}{BE Aware of Reasoning Shortcuts\xspace}
\newcommand{\DPL}{\texttt{DPL}\xspace}
\newcommand{\SL}{\texttt{SL}\xspace}
\newcommand{\LTN}{\texttt{LTN}\xspace}
\newcommand{\BK}{\ensuremath{\mathsf{K}}\xspace}
\newcommand{\KL}{\ensuremath{\mathsf{KL}}\xspace}
\newcommand{\de}{\ensuremath{\mathrm{d}}\xspace}
\newcommand{\MZero}{\includegraphics[width=1.85ex]{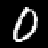}\xspace}
\newcommand{\MOne}{\includegraphics[width=1.85ex]{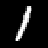}\xspace}
\newcommand{\MTwo}{\includegraphics[width=1.85ex]{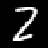}\xspace}
\newcommand{\MThree}{\includegraphics[width=1.85ex]{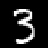}\xspace}
\newcommand{\MFour}{\includegraphics[width=1.85ex]{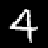}\xspace}
\newcommand{\MFive}{\includegraphics[width=1.85ex]{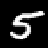}\xspace}
\newcommand{\MSix}{\includegraphics[width=1.85ex]{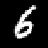}\xspace}
\newcommand{\MSeven}{\includegraphics[width=1.85ex]{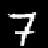}\xspace}
\newcommand{\MEight}{\includegraphics[width=1.85ex]{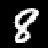}\xspace}
\newcommand{\MNine}{\includegraphics[width=1.85ex]{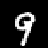}\xspace}
\newcommand{\bear}{\includegraphics[width=1.75ex]{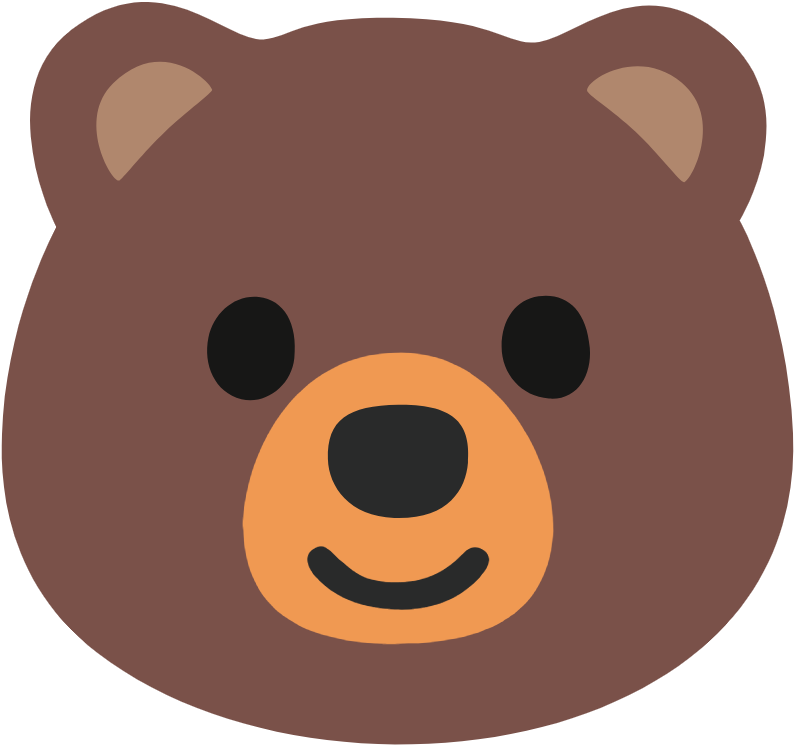}\xspace}
\newcommand{\MNISTAdd}{{\tt MNIST-Addition}\xspace}
\newcommand{\MNISTShortcut}{{\tt MNIST-Even-Odd}\xspace}
\newcommand{\MNISTHalf}{{\tt MNIST-Half}\xspace}
\newcommand{\Kandinsky}{{\tt Kandinsky}\xspace}
\newcommand{\BOIA}{{\tt BDD-OIA} \xspace}
\newcommand{\MCDrop}{{\tt MCDO}\xspace}
\newcommand{\Laplace}{{\tt LA}\xspace}
\newcommand{\DeepEns}{{\tt DE}\xspace}
\newcommand{\ProbCBM}{{\tt PCBM}\xspace}
\newcommand{\LaplaceTorch}{{\tt laplace-torch}\xspace}
\newcommand{\YAcc}{\ensuremath{\mathrm{Acc}_Y}\xspace}
\newcommand{\CAcc}{\ensuremath{\mathrm{Acc}_C}\xspace}
\newcommand{\YECE}{\ensuremath{\mathrm{ECE}_Y}\xspace}
\newcommand{\CECE}{\ensuremath{\mathrm{ECE}_C}\xspace}
\newcommand{\YCONF}{\ensuremath{\mathrm{Conf}_Y}\xspace}
\newcommand{\CCONF}{\ensuremath{\mathrm{Conf}_C}\xspace}
\newcommand{\mFY}{\ensuremath{\mathrm{mF_1}(Y)}\xspace}
\newcommand{\mFC}{\ensuremath{\mathrm{mF_1}(C)}\xspace}
\newcommand{\mYECE}{\ensuremath{\mathrm{mECE}_Y}\xspace}
\newcommand{\mCECE}{\ensuremath{\mathrm{mECE}_C}\xspace}
\newcommand{\RECE}{\ensuremath{\mathrm{ECE}_{C}(R)}\xspace}
\newcommand{\FSECE}{\ensuremath{\mathrm{ECE}_{C}(F,S)}\xspace}
\newcommand{\LECE}{\ensuremath{\mathrm{ECE}_{C}(L)}\xspace}
\newcommand{\YAccOod}{\ensuremath{\mathrm{Acc}_{Yood}}\xspace}
\newcommand{\CAccOod}{\ensuremath{\mathrm{Acc}_{Cood}}\xspace}
\newcommand{\YECEOod}{\ensuremath{\mathrm{ECE}_{Yood}}\xspace}
\newcommand{\CECEOod}{\ensuremath{\mathrm{ECE}_{Cood}}\xspace}
\definecolor{goldenrod}{rgb}{0.85, 0.65, 0.13}
\title{\method Make Neuro-Symbolic Models \\ Aware of their Reasoning Shortcuts}
\author{
    Emanuele Marconato $\bear$ \thanks{$\bear$ = Equal contribution, * = correspondence to.}\\
    DISI, University of Trento, Italy\\
    DI, University of Pisa, Italy \\
    \texttt{name.surname@unitn.it}\\
    \And
    Samuele Bortolotti $\bear$ \\
    DISI, University of Trento, Italy\\
    \texttt{name.surname@unitn.it}\\
    \And
    Emile van Krieken $\bear$ \\
    University of Edinburgh, UK\\
    \texttt{Emile.van.Krieken@ed.ac.uk}\\
    \And
    Antonio Vergari\\
    University of Edinburgh, UK\\
    \texttt{avergari@ed.ac.uk}\\
    \And
    Andrea Passerini\\
    DISI, University of Trento, IT\\
    \texttt{name.surname@unitn.it}\\
    \And
    Stefano Teso\\
    CIMeC\&DISI, University of Trento, IT\\
    \texttt{name.surname@unitn.it}\\
}
\begin{document}
\maketitle

\begin{abstract}
    Neuro-Symbolic (NeSy) predictors that conform to symbolic knowledge -- encoding, \eg safety constraints -- can be affected by Reasoning Shortcuts (RSs): They learn concepts consistent with the symbolic knowledge by exploiting unintended semantics.
    RSs compromise reliability and generalization and, as we show in this paper, they are linked to NeSy models being overconfident about the predicted concepts.
    Unfortunately, the only trustworthy mitigation strategy requires collecting costly dense supervision over the concepts.
    Rather than attempting to avoid RSs altogether, we propose to ensure NeSy models are \textit{aware of the semantic ambiguity of the concepts they learn}, thus enabling their users to identify and distrust low-quality concepts.
    Starting from three simple desiderata, we derive \method (\acronym), an ensembling technique that calibrates the model's concept-level confidence without compromising prediction accuracy, thus encouraging NeSy architectures to be uncertain about concepts affected by RSs.
    We show empirically that \method improves RS-awareness of several state-of-the-art NeSy models, and also facilitates acquiring informative dense annotations for mitigation purposes.
\end{abstract}

\keywords{Neuro-Symbolic AI \and Uncertainty \and Reasoning Shortcuts \and Calibration \and Probabilistic Reasoning \and Concept-Based Models}

\section{Introduction}

\begin{figure*}[!t]
    \centering
    \includegraphics[height=11.5em]{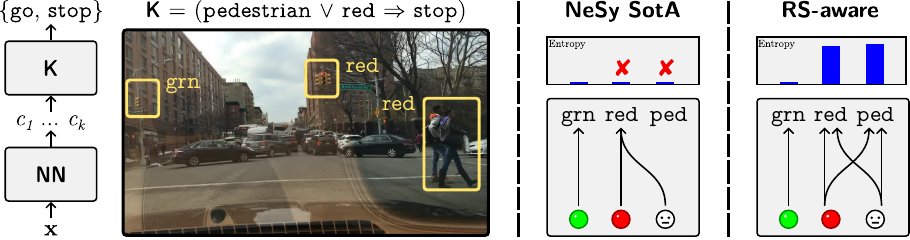}
    \caption{
        \textbf{\scalebox{0.001}{\includegraphics[width=1\textwidth]{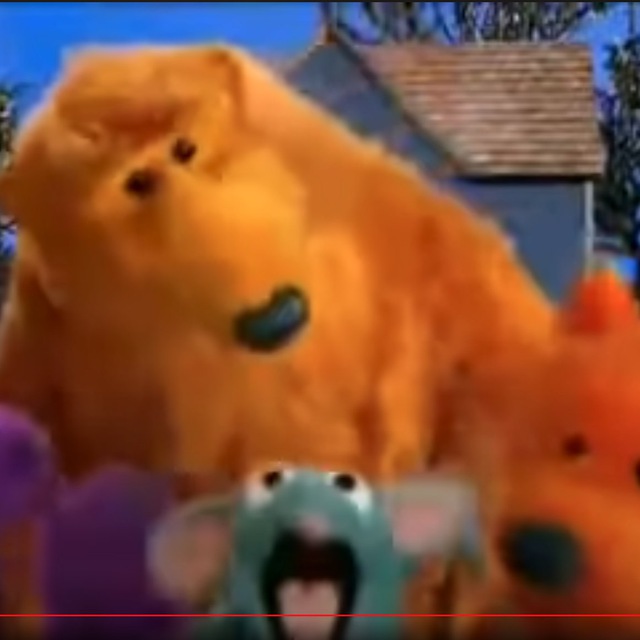}}\method lessens overconfidence due to reasoning shortcuts.}
        \textbf{Left}: In the \BOIA \hspace{-0.9em} \scalebox{0.001}{\includegraphics[width=1\textwidth]{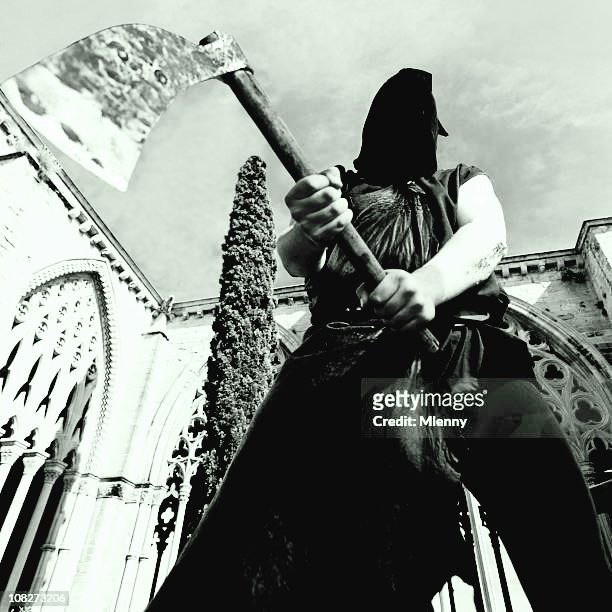}}  autonomous driving task \citep{xu2020boia, sawada2022concept}, NeSy predictors can attain high accuracy \textit{and} comply with the knowledge even when confusing the concepts of pedestrian (${\tt ped}$) and red light (${\tt red}$) \citep{marconato2023not}.
        \textbf{Middle}: State-of-the-art NeSy architectures predict concepts affected by RSs with high confidence, making it impossible to discriminate between reliable and unreliable concept predictions. 
        \textbf{Right}: \method encourages them to allocate probability to \textit{conflicting} concept maps, substantially lessening overconfidence. 
    }
    \label{fig:second-page}
\end{figure*}

Research in Neuro-Symbolic (NeSy) AI \citep{garcez2019neural, de2021statistical, garcez2022neural} has recently yielded a wealth of architectures capable of integrating low-level perception and symbolic reasoning.
Crucially, these architectures encourage \citep{xu2018semantic} or guarantee \citep{manhaeve2018deepproblog, giunchiglia2020coherent, ahmed2022semantic, hoernle2022multiplexnet} that their predictions conform to given prior knowledge encoding, \eg structural or safety constraints, thus offering improved reliability compared to neural baselines \citep{di2020efficient, hoernle2022multiplexnet, giunchiglia2020coherent}.

It was recently shown that, however, NeSy architectures can achieve high prediction accuracy by learning concepts -- \textit{aka} neural predicates \citep{manhaeve2018deepproblog} -- with unintended semantics \citep{marconato2023not, li2023learning}.
E.g., consider an autonomous driving task like \BOIA \citep{xu2020boia} in which a model has to predict safe actions based on the contents of a dashcam image, under the constraint that whenever it detects pedestrians or red lights the vehicle must stop.  Then, the model can achieve perfect accuracy \textit{and} comply with the constraint even when confusing pedestrians for red lights, precisely because both entail the correct (stop) action \citep{marconato2023not}.  See \cref{fig:second-page} for an illustration.

These so-called \textit{reasoning shortcuts} (RSs) occur because the prior knowledge and data may be insufficient to pin-down the intended semantics of all concepts, and cannot be avoided by maximizing prediction accuracy alone.
They compromise in-distribution \citep{wang2023learning} and out-of-distribution generalization \citep{marconato2023not, li2023learning}, continual learning \citep{marconato2023neuro}, reliability of neuro-symbolic verification tools \citep{xie2022neuro}, and concept-based interpretability \citep{koh2020concept, marconato2023interpretability, poeta2023concept} and debugging \citep{teso2023leveraging}.
Importantly, unsupervised mitigation strategies either offer no guarantees or work under restrictive assumptions, while supervised ones involve acquiring costly side information, \eg concept supervision \citep{marconato2023not}.

Rather than attempting to avoid RSs altogether, we suggest NeSy predictors should be \textit{aware of their reasoning shortcuts}, that is, they should assign lower confidence to concepts affected by RSs, thus enabling users to identify and avoid low-quality predictions, all while retaining high accuracy.
Unfortunately, as we show empirically, state-of-the-art NeSy architectures are \textit{not} RS-aware.
We address this issue by introducing \method (\acronym), a simple but effective method for making NeSy predictors RS-aware that does \textit{not} rely on costly dense supervision.
\method replaces the concept extraction module with a diversified \textit{ensemble} specifically trained to encourage the concepts' uncertainty is proportional to how strongly these are impacted by RSs.
Our experiments show that \method successfully improves RS-awareness of three state-of-the-art NeSy architectures on four NeSy data sets, including a high-stakes autonomous driving task, and enables us to design a simple but effective active learning strategy for acquiring concept annotations for mitigation purposes.

\textbf{Contributions}.  Summarizing, we:
\begin{itemize}[leftmargin=1em]

    \item Shift focus from RS mitigation to RS awareness and show that state-of-the-art NeSy predictors are not RS-aware.

    \item Propose \method, which improves RS-awareness of NeSy predictors without relying on dense supervision.

    \item Demonstrate that it outperforms SotA uncertainty calibration methods on several tasks and architectures.

    \item Show that it enables intelligent acquisition of concept annotations, thus lowering the cost of supervised mitigation.
    
\end{itemize}

\section{Preliminaries}
\label{sec:preliminaries}

\paragraph{Notation.}  We denote scalar constants $x$ in lower-case, random variables $X$ in upper case, and ordered sets of constants $\vx$ and random variables $\vX$ in bold typeface.
Throughout, we use the shorthand $[n] := \{1, \ldots, n\}$.

\textbf{Neuro-Symbolic Predictors.}  RSs have been primarily studied in the context of \textit{NeSy predictors} \citep{dash2022review, giunchiglia2022deep}, which we briefly overview next.
Given an input $\vx \in \bbR^n$, these models infer a (multi-)label $\vy \in \{0, 1\}^m$ by leveraging \textit{prior knowledge} $\BK$ encoding, \eg known structural \citep{di2020efficient} or safety \citep{hoernle2022multiplexnet} constraints.
During inference, they first extract a set of \textit{concepts} $\vc \in \{0, 1\}^k$ using a (neural) concept extractor $p_\theta(\vC \mid \vx)$.
Then, they reason over these to obtain a predictive distribution $p_\theta(\vy \mid \vc; \BK)$ that associates lower \citep{xu2018semantic, ahmed2022neuro, van2022anesi} or provably zero \citep{manhaeve2018deepproblog, ahmed2022semantic} probability to outputs $\vy$ that violate the knowledge \BK.
Taken together, these two distributions define a NeSy predictor of the form $p_\theta(\vy \mid \vx; \BK)$.  The complete pipeline is visualized in \cref{fig:second-page} (left).

\begin{example}
    \label{ex:boia} 
    In our running example (\cref{fig:second-page}), given a dashcam image $\vx$, we wish to infer what action $y \in \{ {\tt stop}, {\tt go} \}$ a vehicle should perform.  This task can be modeled using three binary concepts $C_1, C_2, C_3$ encoding the presence of green lights (${\tt grn}$), red lights (${\tt red}$), and pedestrians (${\tt ped}$).  The knowledge specifies that if any of the latter two is detected, the vehicle must stop:  $\BK = ({\tt ped} \lor {\tt red} \Rightarrow {\tt stop})$.
\end{example}

Inference amounts to solving a \textsf{MAP} \citep{koller2009probabilistic} problem $\argmax_{\vy} \ p_\theta(\vy \mid \vx; \BK)$, and learning to maximize the log-likelihood on a training set $\calD = \{ (\vx_i, \vy_i) \}$.
Architectures chiefly differ in how they integrate the concept extractor and the reasoning layer, and in whether inference and learning are exact or approximate, see \cref{sec:related-work} for an overview.
Despite these differences, RSs are a general phenomenon that can affect all NeSy predictors \citep{marconato2023not}.

\paragraph{Reasoning Shortcuts.}  In NeSy, usually only the labels receive supervision, while the concepts are treated as \textit{latent variables}.
It was recently shown that, as a result, NeSy models can fall prey to \textit{reasoning shortcuts} (RSs), \ie they often achieve high label accuracy by learning concepts with unintended semantics \citep{marconato2023not, marconato2023neuro, wang2023learning, li2023learning}.

In order to properly understand RSs, we need to define how the data is generated, cf. \cref{fig:generative-process-assumptions}.
Following \citep{marconato2023not}, we assume there exist $k$ unobserved \textit{ground-truth concepts} $\vg \in \{0, 1\}^k$ drawn from a distribution $p^*(\vG)$, which generate both the observed inputs $\vx$ and the label $\vy$ according to unobserved distributions $p^*(\vX \mid \vG)$ and $p^*(\vY \mid \vG; \BK)$, respectively.
We also assume all observed labels satisfy the prior knowledge $\BK$ given $\vg$.

In essence, a NeSy predictor is affected by a RS whenever the label distribution $p_\theta(\vY \mid \vX; \BK)$ behaves well but the concept distribution $p_\theta(\vC \mid \vX)$ does not, that is, given inputs $\vx$ it extracts concepts $\vc$ that yield the correct label $\vy$ but do not match the ground-truth ones $\vg$.
RSs impact the reliability of learned concepts and thus the trustworthiness of NeSy architectures in out-of-distribution \citep{marconato2023not} scenarios, continual learning \citep{marconato2023neuro} settings, and neuro-symbolic verification \citep{xie2022neuro}.  They also compromise the interpretability of concept-based explanations of the model's inference process \citep{rudin2019stop, kambhampati2021symbols, marconato2023interpretability}.

\begin{example}
    \label{ex:boia-rs}
    In \cref{ex:boia}, we would expect predictors achieving high label accuracy to accurately classify all concepts too.
    It turns out that, however, predictors misclassifying pedestrians as red lights (as in \cref{fig:second-page}, middle) can achieve an equally high label accuracy, precisely because both concepts entail the (correct) ${\tt stop}$ action according to \BK.
    To see why this is problematic, consider tasks where the knowledge allows for ignoring red lights when there is an emergency. This can lead to dangerous decisions when there are pedestrians on the road \citep{marconato2023not}.
\end{example}

\textbf{Causes and Mitigation Strategies.}  The factors controlling the occurrence of RSs include \citep{marconato2023not}:
$1$) The structure of the \textit{knowledge} $\BK$,
$2$) The distribution of the \textit{training data},
$3$) The learning \textit{objective}, and
$4$) The \textit{architecture} of the concept extractor.
For instance, whenever the knowledge \BK admits multiple solutions -- that is, the correct label $\vy$ can be inferred from distinct concept vectors $\vc \ne \vc'$, as in \cref{ex:boia-rs} -- the NeSy model has no incentive to prefer one over the other, as they achieve exactly the same likelihood on the training data, and therefore may end up learning concepts that do not match the ground-truth ones \citep{marconato2023neuro}.

All four root causes are natural targets for mitigation.
For instance, one can reduce the set of unintended solutions admitted by the knowledge via \textit{multi-task learning} \citep{caruana1997multitask}, force the model to distinguish between different concepts by introducing a \textit{reconstruction penalty} \citep{khemakhem2020variational}, and reduce ambiguity by ensuring the concept encoder is \textit{disentangled} \citep{scholkopf2021toward}.
It was shown theoretically and empirically that, while existing unsupervised mitigation strategies \textit{can} and in fact \textit{do} have an impact on the number of RSs affecting NeSy predictors, especially when used in combination, they are also insufficient to prevent RSs in all applications \citep{marconato2023not}.

The most direct mitigation strategy is that of supplying \textit{dense annotations} for the concepts themselves.  Doing so steers the model towards acquiring good concepts and can, in fact, prevent RSs \citep{marconato2023not}, yet concept supervision is expensive to acquire and therefore rarely available in applications.

\section{From Mitigation to Awareness}
\label{sec:method}

Mitigating RSs is highly non-trivial.
Rather than facing this issue head on, we propose to make NeSy predictors \textit{reasoning shortcut-aware}, \ie uncertain about concepts with ambiguous or wrong semantics.
To see why this is beneficial, consider the following example:

\begin{example}
    \label{ex:driving}
    Imagine a NeSy predictor that confuses pedestrians with red lights, as in \cref{ex:boia-rs}.
    If it is always certain about its concept-level predictions, as in \cref{fig:second-page} (middle), there is no way for users to figure out that some predictions should not be trusted.
    A model classifying pedestrians as both ${\tt ped}$estrians and ${\tt red}$ lights with equal probability, as in \cref{fig:second-page} (right), is just as confused, but is also calibrated, in that it is more uncertain about pedestrians and red lights, which are low quality, compared to green lights, which are classified correctly.
    This enables users to distinguish between high- and low-quality predictions and concepts, and thus avoid the latter.
\end{example}

We say a NeSy predictor is \textit{reasoning shortcut-aware} if it satisfies the following desiderata:
\begin{itemize}

    \item[\textbf{D1}.] \textbf{Calibration}:  For all concepts \textit{not} affected by RSs, the system should achieve high accuracy and be highly confident.  Vice versa,  for all concepts that \textit{are} affected by RSs, the model should have low confidence.

    \item[\textbf{D2}.] \textbf{Performance}:  The predictor $p_\theta(\vY \mid \vX; \BK)$ should achieve high \textit{label accuracy} even if RSs are present.

    \item[\textbf{D3}.] \textbf{Cost effectiveness}:  The system should not rely on expensive mitigation strategies.

\end{itemize}
Calibration (\textbf{D1}) captures the essence of our proposal:  a model that knows which ones of the learned concepts are affected by RSs can prevent its users from blindly trusting and reusing them.
Naturally, this should not come at the cost of prediction accuracy (\textbf{D2}) or expensive concept-level annotations (\textbf{D3}), so as not to hinder applicability.

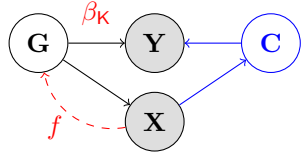
\begin{figure}[!t]
    \centering
    \begin{tikzpicture}[
        scale=1,
        transform shape,
        node distance=.25cm and .75cm,
        minimum width={width("G")+15pt},
        minimum height={width("G")+15pt},
        mynode/.style={draw,ellipse,align=center}
    ]
        \node[mynode, fill=black!13!white] (X) {$\vY$};
        \node[mynode, left=of X] (G) {$\vG$};
        \node[mynode, below=of X, fill=black!13!white] (Y) {$\vX$};
        \node[mynode, right=of X, blue] (C) {$\vC$};

        \draw [->] (G) -- (Y);
        \draw [->, bend angle=45, bend left, dashed, red] (Y) to node[below, pos=.75] {\textcolor{red}{$f$}} (G);
        \draw [->] (G) to node[above, pos=.5] {\textcolor{red}{$\beta_\BK$}} (X);
        \draw [->, blue] (Y) to (C);
        \draw [->, blue] (C) to (X);

    \end{tikzpicture}
    \caption{\textbf{Data generating process}.
    The (unobserved) ground-truth concepts $\vG$ cause the inputs $\vX$ which cause the labels $\vY$ (in \textbf{black}).
    A NeSy predictor learns to map inputs $\vX$ to concepts $\vC$ (in \textcolor{blue}{\textbf{blue}}), which ideally should match the concepts $\vG$ that caused $\vX$.
    The maps $f$ and $\beta_\BK$ from assumptions \textbf{A1} and \textbf{A2} in \cref{sec:method} are shown in \textcolor{red}{\textbf{red}}.
    }
    \label{fig:generative-process-assumptions}
\end{figure}

\subsection{Awareness Via Entropy Maximization}

We start by introducing our basic intuition.
Consider an example $(\vx, \vy)$ and let $\vg$ be the underlying ground-truth concepts.
If $\vg$ is the only concept vector that entails the label $\vy$ according to the prior knowledge \BK, maximizing the likelihood steers the concept extractor towards predicting the correct concept $\vc = \vg$ with high confidence.  In this simplified scenario, NeSy predictors would automatically satisfy \textbf{D1}--\textbf{D3}.
In most NeSy tasks, however, there exist multiple concept vectors $\vc_1 \ne \ldots \ne \vc_u$ that \textit{all} entail the correct label $\vy$.  In this case, there is no reason for the model to prefer one to the others: all of them achieve the same (optimal) likelihood, yet only one of them matches $\vg$.
The issue at hand is that existing NeSy predictors tend to predict only one -- likely incorrect -- $\vc_i$, $i \in [u]$, and they do so \textit{with high confidence}, thus falling short of \textbf{D1}.

We propose an alternative solution.  Let $\Theta^*$ be the set of parameters $\theta$ attaining high accuracy (\textbf{D2}), \ie mapping inputs to concepts $\vc$ yielding good predictions $\vy$.
This set includes the (correct) predictor mapping $\vx$ into $\vg$ as well as all high-performance predictors mapping it to one or more unintended concept vectors $\vc_i \ne \vg$.
We wish to find one that is maximally uncertain about which $\vc_i$'s it should output, that is:
\[
    \max_{\theta \in \Theta^*} \; H (p_\theta( \vC \mid \vG ))
    \label{eq:optimization-d2}
\]
Here, $H (p_\theta( \vC \mid \vG )) = - \bbE_{p^*(\vg) p_\theta(\vc \mid \vg)}[ \log p_\theta (\vc \mid \vg)]$ is the conditional Shannon entropy, and:
\[
    p_\theta(\vC \mid \vG) := \int p_\theta(\vC \mid \vx) \ p^*(\vx \mid \vG) \ \de\vx
    \label{eq:rs-c2}
\]
is the distribution obtained by marginalizing the concept extractor $p_\theta(\vC \mid \vX)$ over the inputs $\vx$.
By construction, this $\theta$ achieves high accuracy (\textbf{D2}) but, despite being affected by RSs, it is less confident about its concepts $\vc$ (\textbf{D1}).
The issue is that, by \textbf{D3}, we have access to neither the ground-truth distribution $p_\theta(\vC \mid \vG)$ nor to samples drawn from it, so we cannot optimize \cref{eq:optimization-d2} directly.

\subsection{Maximizing Entropy}

Next, we show that one can maximize \cref{eq:optimization-d2} by constructing a distribution $p_\theta(\vC \mid \vG)$ affected by \textit{multiple} but \textit{conflicting} RSs.
Our analysis builds on that of \citep{marconato2023not}, which relies on two simplifying assumptions:
\begin{itemize}

    \item[\textbf{A1}.] \textit{Invertibility}: Each $\vx$ is generated by a unique $\vg$, \ie there exists a function $f:\vx \mapsto \vg$ such that $p^*(\vG \mid \vX) = \Ind{ \vG - f(\vX) }$.\footnote{Works on the \textit{identifiability} of the latent variables in independent component analysis \citep{khemakhem2020variational, gresele2021independent, lachapelle2024nonparametric} and causal representation learning \citep{scholkopf2021toward, liang2023causal, buchholz2023learning} build on a similar assumption.}

    \item[\textbf{A2}.] \textit{Determinism}:  The knowledge $\BK$ is \textit{deterministic}, \ie there exists a function $\beta_\BK:\vg \mapsto \vy$ such that $p^*(\vY \mid \vG; \BK) = \Ind{\vY = \beta_\BK (\vg)}$.  This is often the case in NeSy tasks, \eg \BOIA.

\end{itemize}
The link between $\beta_\BK$, $f$, and the NeSy predictor is shown in \cref{fig:generative-process-assumptions}.
In the following, we use $\alpha: \{0,1\}^k \to \{0,1\}^k$ to indicate a generic map from $\vg$ to $\vc$, and denote $\calA$ the set of all such maps and $\calA^* \subseteq \calA$ that of $\alpha$'s that yield distributions $p_\theta(\vC \mid \vG)$ achieving perfect label accuracy (\textbf{D2}).
Each $\alpha$ encodes a corresponding \textit{deterministic} distribution $p_\theta(\vC \mid \vG) = \Ind{\vc = \alpha(\vg)}$:  if $\alpha$ is the identity, this distribution encodes the correct semantics. Otherwise it captures an RS (cf. \cref{fig:second-page}).
Next, we show that every distribution $p_\theta(\vC \mid \vG)$ decomposes as a convex combination of maps $\alpha$.

\begin{lemma}
    \label{lemma:decomposition-of-p}
     For any $p(\vC \mid \vG)$, there exists at least one vector $\vomega$ such that the following holds:
    \[
        p(\vC \mid \vG)
        = \sum_{\alpha \in \calA} \omega_\alpha \Ind{ \vC = \alpha(\vG) }
        := p_\omega(\vC \mid \vG)
        \label{eq:whatever}
    \]
    where $\vomega \ge 0$, $\norm{\vomega}_1 = 1$.
    Crucially, under invertibility (\textbf{A1}) and determinism (\textbf{A2}), if $p_\theta(\vC \mid \vG)$ is optimal (\textbf{D2}), \cref{eq:whatever} holds even if we replace $\calA$ with $\calA^*$.  
\end{lemma}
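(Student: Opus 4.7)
The plan is to view any conditional distribution $p(\vC \mid \vG)$ as a point in the polytope of stochastic maps from the finite set $\{0,1\}^k$ to itself, and to recognize the deterministic maps $\alpha \in \calA$ as exactly its vertices; the lemma then amounts to exhibiting an explicit convex decomposition. Concretely, I would construct $\vomega$ via the ``tensor product'' formula
\[
    \omega_\alpha := \prod_{\vg \in \{0,1\}^k} p(\alpha(\vg) \mid \vg),
\]
so non-negativity is immediate. Normalization $\sum_\alpha \omega_\alpha = \prod_{\vg} \sum_{\vc} p(\vc \mid \vg) = 1$ follows because selecting an $\alpha$ amounts to choosing $\alpha(\vg)$ independently for each $\vg$ from the simplex $\{p(\cdot \mid \vg)\}$. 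To verify $p_\omega = p$, I fix $(\vg_0, \vc_0)$ and compute $\sum_{\alpha:\,\alpha(\vg_0)=\vc_0}\omega_\alpha$: one factor isolates $p(\vc_0 \mid \vg_0)$, while the remaining factors telescope to $1$. This establishes the first half of the lemma.

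For the refinement to $\calA^*$ under \textbf{A1} and \textbf{A2}, I would translate label-level optimality (\textbf{D2}) into a support constraint on the concept conditional. Under \textbf{A1}, $\vg = f(\vx)$ is uniquely determined by $\vx$, so $p_\theta(\vC \mid \vG = \vg)$ is well-defined through $p_\theta(\vC \mid \vx)$ and the integral defining \cref{eq:rs-c2}. Under \textbf{A2}, the induced label distribution factors as $p_\theta(\vY = \vy \mid \vx; \BK) = \sum_{\vc} p_\theta(\vc \mid \vx)\,\Ind{\vy = \beta_\BK(\vc)}$, and the true label on input $\vx$ is $\beta_\BK(\vg)$. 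Hence any optimal $p_\theta(\vC \mid \vG = \vg)$ must be supported on $S_\vg := \{\vc : \beta_\BK(\vc) = \beta_\BK(\vg)\}$. Plugging this back into the product formula, $\omega_\alpha = 0$ whenever $\alpha(\vg) \notin S_\vg$ for some $\vg$, leaving only those $\alpha$ for which $\beta_\BK \circ \alpha = \beta_\BK$ pointwise, i.e. $\alpha \in \calA^*$, which yields the claimed restricted decomposition.

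I expect the conceptually subtle step to be the support argument: translating the global label-accuracy condition on $p_\theta(\vY \mid \vX; \BK)$ into a pointwise support constraint on $p_\theta(\vC \mid \vG = \vg)$ crucially uses both invertibility (so that conditioning on $\vG$ is meaningful and $p_\theta(\vC \mid \vG)$ faithfully captures the concept extractor's behavior) and determinism (so that ``predicting the right $\vy$'' is equivalent to ``landing in $S_\vg$''). Once this support claim is in place, the rest of the proof is essentially the combinatorial fact that a product of simplices has the deterministic maps as its extreme points, which the explicit formula above makes concrete without any appeal to Carath\'eodory.
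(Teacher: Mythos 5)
Your proof is correct, and it differs from the paper's in a way worth noting. For the first half, the paper writes down the linear system $\sum_{\alpha:\,\alpha(\vg)=\vc}\omega_\alpha = p(\vc\mid\vg)$ and simply asserts it is always solvable; you instead exhibit an explicit solution, the product weights $\omega_\alpha=\prod_{\vg}p(\alpha(\vg)\mid\vg)$, and verify non-negativity, normalization, and the marginal identity directly. This is strictly more informative than the paper's argument (it proves solvability rather than asserting it, and it makes the polytope structure --- a product of simplices with the deterministic maps as vertices --- concrete), and it pays off again in the second half: once you show that optimality under \textbf{A1} and \textbf{A2} forces $p_\theta(\vC\mid\vg)$ to be supported on $S_\vg=\{\vc:\beta_\BK(\vc)=\beta_\BK(\vg)\}$, the product formula immediately kills every $\omega_\alpha$ with $\alpha\notin\calA^*$, whereas the paper reaches the same conclusion by contradiction (a stray $\alpha'\notin\calA^*$ with positive weight would place mass on concepts yielding the wrong label). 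Your support argument is the contrapositive of theirs and is, if anything, cleaner. Two small points to tighten: the product and the support constraint should be taken over $\vg\in\mathsf{supp}(\vG)$ only, since $p(\vC\mid\vg)$ is defined via $p^*(\vx\mid\vg)$ and $\calA^*$ in \cref{thm:mc-det-opts} only constrains $\alpha$ on the support (your phrase ``pointwise'' is slightly stronger than needed, though the surviving maps still land inside $\calA^*$, so the claim is unaffected); and you should state explicitly that ``optimal'' here means $p_\theta(\vY\mid\vG;\BK)=p^*(\vY\mid\vG;\BK)$, which under \textbf{A2} is the point mass on $\beta_\BK(\vg)$ --- that is exactly what licenses the jump from label accuracy to the support constraint.
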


All proofs can be found in \cref{sec:proofs}.
This means that most distributions $p(\vC \mid \vG)$ are mixtures of \textit{multiple} maps $\alpha \in \calA^*$, each potentially capturing a different RS.
Naturally, if $\omega_\alpha$ is non-zero only for those $\alpha$'s that fall in $\calA^*$, $p_\theta(\vC \mid \vG)$ achieves high performance (\textbf{D2}).
The question is what $\vomega$'s achieve calibration (\textbf{D1}).
Intuitively, if $\vomega$ mixes $\alpha$'s capturing RSs that disagree on the semantics of some concepts and agree on others, $p_\theta(\vC \mid \vG)$ is RS-aware.

\begin{example}
    Consider \cref{fig:second-page} and two high-performance maps in $\calA^*$:
    $\alpha_1$ mapping green lights to ${\tt grn}$, and both pedestrians and red lights to ${\tt red}$;
    $\alpha_2$ also mapping green lights to ${\tt grn}$, but pedestrians and red lights to ${\tt ped}$.
    Clearly, both maps are affected by RSs and overconfident, yet their mixture $\alpha = \frac{1}{2} (\alpha_1 + \alpha_2)$ yields a distribution $p_\theta(\vC \mid \vG)$ that looks exactly like the one in \cref{fig:second-page} (right), which predicts ${\tt grn}$ correctly with high confidence, and ${\tt red}$ and ${\tt ped}$ with low confidence, and thus satisfies \textbf{D1} and \textbf{D2}.
\end{example}

In other words, this allows us to leverage the model's uncertainty to estimate the extent by which concepts are affected by RSs \textit{without the need for dense annotations} (\textbf{D3}).
Due to space considerations, we report our formal analysis of the connection between uncertainty and RSs in \cref{sec:entropy-vs-rss}.
The next result indicates that this intuition is consistent with our original objective in \cref{eq:optimization-d2}:

\begin{proposition}
    \label{prop:ideal-obj}
    (Informal.)  If $p_\theta$ is expressive enough, under invertibility (\textbf{A1}) and determinism (\textbf{A2}), it holds that:
    \begin{equation}
        \label{eq:objective-d1-d2}
         \max_{\theta \in \Theta^*} H(p_\theta(\vC \mid \vG))
            = \max_{\vomega^*} H(p_{\vomega^*}(\vC \mid \vG))
    \end{equation}
\end{proposition}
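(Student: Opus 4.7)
The plan is to prove the equality by two inequalities: the upper bound ($\leq$) follows directly from \cref{lemma:decomposition-of-p}, while the matching lower bound ($\geq$) follows from a constructive realization that exploits the expressivity hypothesis on $\{p_\theta\}$.

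For the upper bound, I fix any $\theta \in \Theta^*$. Because $\theta$ achieves D2 and assumptions A1, A2 hold, the second part of \cref{lemma:decomposition-of-p} produces weights $\vomega \geq 0$ with $\norm{\vomega}_1 = 1$ supported on $\calA^*$ such that $p_\theta(\vC\mid\vG) = p_{\vomega}(\vC\mid\vG)$ as distributions. Entropy is a functional of the distribution alone, so $H(p_\theta(\vC\mid\vG)) = H(p_{\vomega}(\vC\mid\vG))$. Taking the supremum over $\theta \in \Theta^*$ on the left and over all admissible $\vomega^*$ (supported on $\calA^*$) on the right yields LHS $\leq$ RHS.

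For the matching lower bound, given any feasible $\vomega^*$ I construct $\theta \in \Theta^*$ realizing $p_{\vomega^*}(\vC\mid\vG)$. By A1 the map $f : \vx \mapsto \vg$ is well-defined, so I set $p_\theta(\vc\mid\vx) := \sum_{\alpha \in \calA^*} \omega^*_\alpha \, \Ind{\vc = \alpha(f(\vx))}$. Marginalizing against $p^*(\vx\mid\vg)$, whose support lies inside $f^{-1}(\vg)$, gives $p_\theta(\vC\mid\vG) = \sum_{\alpha \in \calA^*} \omega^*_\alpha \, \Ind{\vC = \alpha(\vG)} = p_{\vomega^*}(\vC\mid\vG)$, hence the entropies agree. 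To verify $\theta \in \Theta^*$, each $\alpha \in \calA^*$ satisfies $\beta_\BK \circ \alpha = \beta_\BK$ on the support of $p^*(\vG)$ by definition of $\calA^*$ under A2, so every mixture component induces the correct label distribution, and so does the convex combination.

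The main obstacle is formalizing the expressivity hypothesis used in the constructive step: the parameterized family $\{p_\theta\}$ must be rich enough to represent stochastic concept extractors of the form $\sum_\alpha \omega^*_\alpha \, \Ind{\vc = \alpha(f(\vx))}$, which a single deterministic deep network cannot realize. I would make "expressive enough" precise by letting $p_\theta(\vC\mid\vX)$ range over all conditional categoricals on $\{0,1\}^k$; this cleanly separates the statistical claim of \cref{eq:objective-d1-d2} from the algorithmic question of how to approximate its optimum, which in turn motivates the ensemble-based approximation underlying \method. A minor technicality is that $\calA$ is finite, so the right-hand optimization reduces to entropy maximization over a probability simplex and raises no measurability issues.
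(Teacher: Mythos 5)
Your proof is correct and follows essentially the same route as the paper's: the forward inequality comes from the second part of \cref{lemma:decomposition-of-p}, and the reverse direction comes from the expressivity hypothesis, which the paper's formal statement phrases exactly as ``for each $\vomega$ there exists $\theta$ s.t.\ $p_\theta(\vC \mid \vG) = p_\vomega(\vC \mid \vG)$'' before converting the maximization over $\theta \in \Theta^*$ into one over $\vomega^*$. Your only addition is making that hypothesis constructive --- explicitly defining $p_\theta(\vc \mid \vx) = \sum_{\alpha \in \calA^*} \omega^*_\alpha \Ind{\vc = \alpha(f(\vx))}$ and verifying membership in $\Theta^*$ --- a step the paper simply assumes.
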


This also tells us that we can solve \cref{eq:optimization-d2} by finding a combination of maps $\alpha$'s with maximal entropy over concepts.

\subsection{RS-Awareness with \method}

Our results suggest that RS-awareness can be achieved by constructing an ensemble $\vtheta = \{ \theta_i \}$ of (deterministic) high-performance concept extractors affected by distinct RSs.
Ideally, we could construct such an ensemble by training multiple concept extractors $p_{\theta_i}(\vC \mid \vX)$ such that each of them picks a different RS, and then defining an overall predictor $p_\vtheta$ as a convex combination thereof, that is, $p_\vtheta(\vC \mid \vX) = \sum_i \lambda_i p_{\theta_i} (\vC \mid \vX)$, where $\vlambda \ge 0$ and $\norm{\vlambda}_1 = 1$.
We next show that, if the ensemble is large enough, such a model does optimize our original objective in \cref{eq:optimization-d2}.

\begin{proposition}
    \label{prop:prior-posterior-optima}
    (Informal.)  Let $p(\vC \mid \vX)$ be a convex combination of models $p_{\theta_i}(\vC \mid \vX)$ with parameters $\vtheta = \{ \theta_i \}$ and weights $\vlambda =\{ \lambda_i \}$, such that $\theta_i \in \Theta^*$.
    Under invertibility and determinism, there exists a $K \leq |\calA^*|$ such that for an ensemble with $K$ members, it holds that:
    \[
        \max_{ \vtheta, \vlambda } H \Big(\sum_{i=1}^K \lambda_i p_{\theta_i}(\vC \mid \vX) \Big)
        = \max_{\vomega^*} H (p_{\vomega^*}(\vC \mid \vG))
    \]
    Moreover, maximizing $H(p_\vtheta(\vC \mid \vX))$ amounts to solving:
    \[  \label{eq:theoretical-obj}
        \begin{aligned}
            \max_{ \vtheta, \vlambda } \int p(\vx) \sum_{i=1}^K \lambda_i [& \KL( p_{\theta_i} (\vc \mid \vx)  \mid \mid \sum_{j=1}^K \lambda_j  p_{\theta_j} (\vc \mid \vx)) \\
            & \quad + H(p_{\theta_i}(\vC  \mid \vx)) ]   \de \vx 
        \end{aligned}
    \]
    where $\KL$ denotes the Kullback-Lieber divergence.
\end{proposition}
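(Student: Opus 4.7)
The plan is to handle the two displayed claims separately. Both rest on Lemma~\ref{lemma:decomposition-of-p}: under \textbf{A1}--\textbf{A2}, any $p_{\theta_i}(\vC \mid \vG)$ with $\theta_i \in \Theta^*$ lies in the convex hull of the deterministic maps $\alpha \in \calA^*$, so any ensemble built from such $\theta_i$ does too. The first claim follows from a two-sided argument along these lines; the second reduces to the standard mixture-entropy identity.

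For the ``$\le$'' direction of the first claim, I would apply Lemma~\ref{lemma:decomposition-of-p} to each ensemble member and write $p_{\theta_i}(\vC \mid \vG) = \sum_\alpha \omega_{i,\alpha} \Ind{\vC = \alpha(\vG)}$ with $\vomega_i$ supported on $\calA^*$; the mixture $\sum_i \lambda_i p_{\theta_i}(\vC \mid \vG)$ then collapses to a single convex combination $p_{\vomega^*}$ with $\vomega^* = \sum_i \lambda_i \vomega_i$, so its entropy cannot exceed the right-hand maximum. For ``$\ge$'', I would start from an optimal $\vomega^*$, let $K \le |\calA^*|$ be the size of its support, use invertibility to define $f:\vx \mapsto \vg$, and realize each surviving $\alpha_i$ via a deterministic extractor $p_{\theta_i}(\vC \mid \vX) = \Ind{\vC = \alpha_i(f(\vX))}$ with $\lambda_i = \omega^*_{\alpha_i}$. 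Expressivity, inherited from Proposition~\ref{prop:ideal-obj}, guarantees such $\theta_i \in \Theta^*$ exist, and by construction the ensemble realizes $p_{\vomega^*}(\vC \mid \vG)$. The bridge between the two entropies is the identity $H(p_\vtheta(\vC \mid \vX)) = H(p_\vtheta(\vC \mid \vG))$: since my ensembles factor through $f$, pushing $p^*(\vX)$ forward by $f$ to $p^*(\vG)$ and using \cref{eq:rs-c2} gives it by a change of variables.

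The second claim is then immediate from the standard mixture-entropy identity: for $q = \sum_i \lambda_i q_i$ on a discrete set, $H(q) = \sum_i \lambda_i H(q_i) + \sum_i \lambda_i \KL(q_i \,\|\, q)$, which is just $H(C) = H(C \mid Z) + \mathsf{I}(Z;C)$ for a mixture indicator $Z$. Applying this pointwise with $q_i = p_{\theta_i}(\cdot \mid \vx)$ and integrating against $p(\vx)$ reproduces the displayed integrand exactly; since the two expressions agree at every $(\vtheta, \vlambda)$, so do their maxima. The main obstacle will be the change-of-variables step: it is painless for the deterministic extractors I construct, but verifying that an optimal ensemble can always be assumed to factor through $f$ -- so that the equality $H(\,\cdot \mid \vX) = H(\,\cdot \mid \vG)$ really applies to the $\max$ on the left-hand side -- needs a short argument invoking Lemma~\ref{lemma:decomposition-of-p} and invertibility once more. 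Making ``expressive enough'' precise, so that every map $\vx \mapsto \alpha(f(\vx))$ is realized inside the model family, is the other delicate point; everything else is direct bookkeeping on top of Lemma~\ref{lemma:decomposition-of-p} or the mixture-entropy identity.
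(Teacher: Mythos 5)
Your proposal is correct and follows essentially the same route as the paper's proof: reduce the ensemble to the $\vomega^*$-parametrization of \cref{lemma:decomposition-of-p}, realize each optimal $\alpha\in\calA^*$ by a deterministic extractor constant on the fibers of $f$ so that $H(p_\vtheta(\vC\mid\vX))=H(p_\vtheta(\vC\mid\vG))$, identify $\vlambda$ with $\vomega^*$, and obtain the second display from the standard mixture-entropy identity. Your two-sided argument is if anything slightly more careful than the paper's (which simply assumes each member is deterministic from the outset), and the one subtlety you flag for the ``$\le$'' direction with non-deterministic members is closed by the elementary observation that, since $\vG=f(\vX)$, conditioning on the finer variable gives $H(\vC\mid\vX)\le H(\vC\mid\vG)$.
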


For the proposition to apply, it may be necessary to collect an enormous number of diverse, deterministic, high-performance models, potentially as many as $|\calA^*|$.
Naturally, constructing such an ensemble is highly impractical.
Thankfully, doing so is often unnecessary in practice:  as long as the ensemble contains models that disagree on the semantics of concepts, it will likely achieve high entropy on concepts affected by RSs and low entropy on the rest, as we show in our experiments.

\method exploits this observation to turn this into a practical algorithm.
In short, it grows an ensemble $\vtheta$ by optimizing a joint training objective combining label accuracy and diversity of concept distributions.
Each model $\theta_i$ is learned in turn by maximizing the following quantity:
\[
\begin{aligned} 
\label{eq:method-implementation}
    \calL(\vx, \vy; \BK, \theta_t)  \ 
        & + \gamma_1 \cdot
        \KL\big(
            p_{\theta_{t}}(\vC \mid \vx)\mid \mid \frac{1}{t}
            \sum_{j=1}^{t} p_{\theta_j}(\vC \mid \vx)
        \big)
    \\
        & + \gamma_2 \cdot H(p_{\theta_t} (\vC \mid \vx))
\end{aligned}
\]
over a training set $\calD$.
Here, $\calL(\vx, \vy; \BK, \theta)$ is the log-likelihood of member $\theta_i$, while the second term is an $\KL$ divergence -- obtained from \cref{eq:theoretical-obj} by taking a uniform $\vlambda$ -- encouraging $\theta_i$ to differ from $\theta_1, \ldots, \theta_{i-1}$ in terms of concept distribution.  Finally, $\gamma_1$ and $\gamma_2$ are hyperparameters.
Pseudocode and further details on the $\KL$ term can be found in \cref{sec:implementation-details}.
We remark that, despite learning $\theta_i$'s that are not necessarily optimal or deterministic, in practice \method still manages to drastically improve RS-awareness in our experiments.

\subsection{\method through a Bayesian Lens}

Bayesian inference is a popular strategy for lessening overconfidence of neural networks \citep{neal2012bayesian, kendall2017uncertainties, wang2020survey}.
It works by marginalizing over a (possibly uncountable) family of alternative predictors, each weighted according to a posterior distribution $p(\theta \mid \calD) \propto p(\calD \mid \theta) \cdot p(\theta)$ accounting for both data fit $p(\calD \mid \theta)$ and prior information $p(\theta)$.
Formally, the label distribution is given by:
\[
    \textstyle
    p(\vy \mid \vx; \calD) = \int p_\theta(\vy \mid \vx; \BK) \cdot p(\theta \mid \calD) \ \de \theta
    \label{eq:bayesian-averaging}
\]
The expectation is computationally intractable and thus often approximated in practice.
E.g., Monte Carlo approaches compute an unbiased estimate of \cref{eq:bayesian-averaging} by averaging the label distribution $p_{\theta_i}(\vy \mid \vx; \BK)$ of a (small) selection of parameters $\{ \theta_i \}$.
More advanced Bayesian techniques for neural networks \citep{wang2020survey}, like the Laplace approximation \citep{daxberger2021laplace} and variational inference methods \citep{osawa2019practical}, locally approximate the posterior around the (parameters of the) trained model.
Conceptually simpler techniques like deep ensembles \citep{deepesembles2017} average over a bag of diverse neural networks trained in parallel and have proven to be surprisingly effective in practice.

Recall that, by \cref{eq:method-implementation}, \method averages over models $\theta_i \in \Theta^*$ that achieve high likelihood but disagree in terms of concepts.
This can be viewed as a form of Bayesian inference.  Specifically, \cref{eq:bayesian-averaging} behaves similarly to \method if we select the prior and likelihood appropriately.
In fact, if 1) the prior $p(\theta)$ associates non-zero probability to all $\theta$'s encoding an RS, and
2) the likelihood $p(\calD \mid \theta)$ allocates non-zero probability only to $\theta$'s that match the data (almost) perfectly,
the resulting posterior $p(\theta \mid \calD)$ associates probability mass only to models that satisfy \textbf{D1} and \textbf{D2}, that is, those in $\Theta^*$.

Compared to stock Bayesian techniques, \method is specifically designed to handle RSs.
First, note that the likelihood $p(\calD \mid \theta)$ is highly multimodal, as it peaks on the ``optimal'' models in $\Theta^*$, thus Bayesian techniques that focus on neighborhood of trained networks have trouble recovering all modes {\citep{jospin2022hands}}.
Moreover, the expectation in \cref{eq:bayesian-averaging} runs over parameters $\theta$, which may be redundant, in the sense that different $\theta_i$'s can entail similar or identical concept encoders $p_\theta(\vC \mid \vX)$.  This suggests that covering the space of $\theta$'s, as done in \citep{d2021repulsive, wild2023rigorous},
is sub-optimal compared to averaging over $\{ \theta_i \}$ that disagree on which concepts they predict.
\method is designed to avoid both issues, as it learns the models $\{ \theta_i \}$ that have both high likelihood and disagree on the semantics of concepts, to capture multiple, different modes of the likelihood, thus encouraging RS-awareness.

\subsection{Active Learning with Dense Annotations}
\label{sec:active}

As mentioned in \cref{sec:preliminaries}, a sure-proof way of avoiding RSs is to leverage concept-level annotations, which is however expensive to acquire.
\method helps to address this issue.  Specifically, we propose to exploit the model's concept-level uncertainty -- which is higher for the concepts most affected by RSs -- to implement a cost-efficient annotation acquisition strategy.

We consider the following scenario:  given a NeSy predictor $p_\theta$ affected by RSs and a pool of examples $\calD = \{(\vx_i, \vy_i)\}$, we seek to mitigate RSs by eliciting concept-level annotations for as few data points as possible.
This immediately suggests leveraging active learning techniques to select informative data points \citep{settles2012active}.
Options include selecting examples $(\vx_i, \vy_i)$ in $\calD$ with the highest concept entropy $H(p(\vC \mid \vx_i))$ and requesting dense annotations for the entire concept vector $\vG_i$, or requesting supervision only for specific concepts $G_j$ by maximizing $H(p(C_j \mid \vx_i))$ for $i$ and $j$.
Both entropies are cheap to compute for most neural networks (as the predicted concepts $C_j$ are conditionally independent given the input $\vx$ \citep{vergari2021compositional}), making acquisition both practical and easy to set up.

Crucially, these strategies only work if the model is NeSy-aware and, in fact, they fail for state-of-the-art NeSy architectures unless paired with \method, as shown in \cref{sec:experiments}.

\subsection{Benefits and Limitations}
\label{sec:benefits-limitations}

The most immediate benefit of \method\ -- and of RS-awareness in general -- is that it enables users to identify and avoid untrustworthy predictions $\vc$ or even individual concepts $c_i$, dramatically improving the reliability of NeSy pipelines.
Moreover, compared to simpler Bayesian approaches for uncertainty calibration, it is specifically designed for dealing with the multimodal nature of the RS landscape and -- as shown by our experiments -- yields more calibrated concept uncertainty in practice.
Finally, \method enables leveraging the model's uncertainty estimates to guide elicitation of concept supervision.

A downside of \method is that training time grows (linearly) with the size of the ensemble $\vtheta$.
This extra cost is justified in tasks where reliability matters, such as high-stakes applications or when learning concepts for model verification \citep{xie2022neuro}.
Regardless, in our experiments, an ensemble of $5$-$10$ concept extractors is sufficient to dramatically improve RS-awareness compared to regular NeSy predictors, with a runtime cost comparable to alternative calibration approaches.
This is not too surprising:  in principle, even an ensemble of \textit{two} models is sufficient to ensure improved calibration, provided these capture RSs holding strong contrasting beliefs.
Finally, \method involves two other hyperparameters: $\gamma_1$ and $\gamma_2$, which can be tuned, \eg via cross-validation on a validation split.
As for the relative importance of different members of the ensemble (that is, $\vlambda$), our experiments suggest that even taking a uniform average already substantially improves RS-awareness compared to existing approaches.

\section{Empirical Analysis}
\label{sec:experiments}

In this section, we tackle the following research questions:
\begin{itemize}

    \item[\textbf{Q1}.] Are existing NeSy predictors RS-aware?

    \item[\textbf{Q2}.] Does \method make NeSy predictors RS-aware?

    \item[\textbf{Q3}.] Does \method facilitate acquiring informative concept-level supervision?
    
\end{itemize}
To answer them, we evaluate three state-of-the-art NeSy architectures before and after applying \method and other well-known uncertainty calibration methods.
Our code can be found on \href{https://github.com/samuelebortolotti/bears}{GitHub}.

\textbf{NeSy predictors.}  We consider the following architectures.
\underline{Dee}p\underline{ProbLo}g (\DPL) \citep{manhaeve2018deepproblog} and the \underline{Semantic Loss} (\SL) \citep{xu2018semantic} leverage probabilistic-logic reasoning \citep{de2015probabilistic} -- implemented via knowledge compilation \citep{vergari2021compositional}, for efficiency -- to constrain or encourage the predicted labels to comply with the knowledge, respectively.
\underline{Lo}g\underline{ic Tensor Networks} (\LTN) \citep{donadello2017logic, badreddine2022logic} softens the prior knowledge \BK using fuzzy logic to define a differentiable measure of label consistency and actively maximizes it during learning.

\textbf{Competitors.}  We evaluate each architecture in isolation and in conjunction with \method and the following well-known calibration methods.
\underline{MC Dro}p\underline{out} (\MCDrop) \citep{gal2016dropout} averages over an ensemble of concept extractors obtained by randomly deactivating neurons during inference;
The \underline{La}p\underline{lace a}pp\underline{roximation} (\Laplace) \citep{daxberger2021laplace} approximates the Bayesian posterior by placing a normal distribution around the trained concept extractor, applying a covariance proportional to the inverse of the Hessian matrix computed on the label loss;
\underline{Dee}p \underline{Ensembles} (\DeepEns) \citep{deepesembles2017}, like \method, average over diverse ensemble of concept extractors, but employ a knowledge-unaware diversification term.
We also consider a P\underline{robabilistic Conce}p\underline{t-bottleneck Models} (\ProbCBM) \citep{kim2023probabilistic} backbone, an interpretable neural network architecture that outputs a normal distribution for each concept, implicitly improving uncertainty calibration.
Hyperparameters and further implementation details are reported in \cref{sec:implementation-details}. 

For both labels and concepts, we report -- averaged over $5$ seeds -- both \textit{prediction quality} and \textit{calibration}, measured in terms of $F_1$ score (or accuracy) and Expected Calibration Error (ECE), respectively.
See \cref{sec:metrics-details} for definitions.
We also report a runtime comparison in  \cref{sec:runtimes}.

\textbf{Data sets.}  We consider two variants of MNIST addition \citep{manhaeve2018deepproblog}, which requires predicting the sum of two MNIST \citep{lecun1998mnist} digits, except that only selected pairs of digits are observed during training.
\underline{\MNISTHalf} includes only sums of digits $0$ through $4$ chosen so that only the semantics of the digit $0$ can be unequivocally determined from data.
\underline{\MNISTShortcut} is similar, except that all digits are affected by RSs \citep{marconato2023not}.
\underline{{\tt Kandinsk}}{\tt y} is a variant of the Kandinsky Patterns task \citep{muller_kandisnky_2021}, where given three images containing three simple colored shapes each (\eg two red squares and a blue triangle) and a logical combination of rules like ``the three objects have different shapes'' or ``they have the same color'', the goal is to predict whether the third image satisfies the same rules as the first two.
\underline{\BOIA} \citep{xu2020boia, sawada2022concept} is a real-world multi-label prediction task in which the goal is to predict what actions, out of $\{ {\tt forward}, {\tt stop}, {\tt right}, {\tt left} \}$, are safe based on objects (like pedestrians and red lights) that are visible in a given dashcam image and prior knowledge akin to that in \cref{ex:boia}.
See \cref{sec:datasets-and-architecture} for a longer description.

\begin{table}[!t]
    \centering
    \scriptsize
    \caption{\textbf{\method dramatically improves RS-awareness across the board}.  All tested architectures achieve substantially better concept-level ECE and out-of-distribution label-level ECE, with comparable in-distribution label-level ECE. Results for \MNISTHalf are shown. \MNISTShortcut shows a similar trend (see \cref{tab:mnistshort-full} in the Appendix).}
    \setlength{\tabcolsep}{5pt}
\begin{tabular}{lcccc}
    \toprule

    \textsc{Method}
        & \textsc{\YECE} 
        & \textsc{\CECE} 
        & \textsc{\YECEOod} 
        & \textsc{\CECEOod} 
    \\
    \midrule
    \DPL            & $0.02 \pm 0.01$ & $0.69 \pm 0.01$ & $0.92 \pm 0.01$ & $0.87 \pm 0.01$ 
    \\
    \rowcolor[HTML]{EFEFEF}
    \; + \MCDrop  & $0.02 \pm 0.01$  &  $0.69 \pm 0.01$ & $0.91 \pm 0.01$ & $0.86 \pm 0.01$ 
    \\
    \; + \Laplace & $0.06 \pm 0.01$ & $0.65 \pm 0.01$ & $0.87 \pm 0.01$ & $0.82 \pm 0.01$ 
    
    \\
    \rowcolor[HTML]{EFEFEF}
    \; + \ProbCBM & $0.07 \pm 0.08$ & $0.64 \pm 0.08$ & $0.86 \pm 0.08$ & $0.80 \pm 0.08$ 
    \\
    \; + \DeepEns & $\bf 0.01 \pm 0.01$ &  $0.64 \pm 0.01$ & $0.83 \pm 0.13$ & $0.77 \pm 0.13$ 
    \\
    \rowcolor[HTML]{EFEFEF}
    \; + \method & $0.09 \pm 0.02$ &  $\bf 0.37 \pm 0.01$ & $\bf 0.39 \pm 0.03$ & $\bf 0.38 \pm 0.02$ 
    \\
    \midrule
    \SL            & ${\bf 0.01} \pm 0.01$ & $0.71 \pm 0.01$ & $0.95 \pm 0.01$ & $0.88 \pm 0.01$ 
    \\
    \rowcolor[HTML]{EFEFEF}
    \; + \MCDrop  & ${\bf 0.01} \pm 0.01$ & $0.70 \pm 0.01$ & $0.92 \pm 0.01$ & $0.88 \pm 0.01$ 
    \\
    \; + \Laplace & $0.06 \pm 0.01$ & $0.59 \pm 0.02$ & $\bf 0.75 \pm 0.01$ & $0.75 \pm 0.02$ 
    \\
    \rowcolor[HTML]{EFEFEF}
    \; + \ProbCBM  & ${\bf 0.01} \pm 0.01$ & $0.70 \pm 0.01$ & $0.91 \pm 0.01$ & $0.88 \pm 0.01$ 
    \\
    \; + \DeepEns & $\bf 0.01 \pm 0.01$ & $0.64 \pm 0.08$ & $0.87 \pm 0.05$ & $0.78 \pm 0.13$ 
    \\
    \rowcolor[HTML]{EFEFEF}
    \; + \method  & $\bf 0.01 \pm 0.01$ & $\bf 0.38 \pm 0.01$ & $\bf 0.75 \pm 0.01$ & $\bf 0.37 \pm 0.03$ 
    \\
    \midrule
    \LTN            & $0.02 \pm 0.01$ & $0.70 \pm 0.01$ & $0.94\pm 0.01$ & $0.87\pm 0.01$ 
    \\
    \rowcolor[HTML]{EFEFEF}
    \; + \MCDrop  & $\bf 0.01 \pm 0.01$ & $0.69 \pm 0.01$  & $0.93 \pm 0.01$ & $0.87 \pm 0.01$ 
    \\
    \; + \Laplace & $0.14 \pm 0.02$ & $0.55 \pm 0.02$ & $0.79 \pm 0.02$ & $0.73 \pm 0.02$ 
    \\
    \rowcolor[HTML]{EFEFEF}
    \; + \ProbCBM & $\bf 0.01 \pm 0.01$ & $0.69 \pm 0.01$ & $0.94 \pm 0.01$ & $0.86 \pm 0.01$ 
    \\
    \; + \DeepEns & $\bf 0.01 \pm 0.01$ & $0.69 \pm 0.01$ & $0.94 \pm 0.01$ & $0.87 \pm 0.01$ 
    \\
    \rowcolor[HTML]{EFEFEF}
    \; + \method & $0.06 \pm 0.01$ & $\bf 0.36 \pm 0.01$ & $\bf 0.36 \pm 0.01$ & $\bf 0.32 \pm 0.01$ 
    \\
    \bottomrule
\end{tabular}
    \label{tab:mnisthalf}
\end{table}

\begin{table}[!t]
    \centering
    \caption{\textbf{\method dramatically improves RS-awareness in the real-world.}
    Results on \BOIA with \DPL show substantial ECE improvements both jointly ($\mCECE$) and for different classes of concepts (F={\tt forward}, 
 S={\tt stop}, R={\tt turn right}, L={\tt turn left}).}
    \scriptsize
\begin{tabular}{lccccc}
    \toprule
        & \textsc{\mCECE} 
        & \textsc{\FSECE} 
        & \textsc{\RECE} 
        & \textsc{\LECE} 
    \\
    \midrule
    \DPL & $0.84 \pm 0.01$ & $0.75 \pm 0.17$ & $0.79 \pm 0.05$ & $0.59 \pm 0.32$
    \\
    \rowcolor[HTML]{EFEFEF}
    \; + \MCDrop & $0.83 \pm 0.01$ & $0.72 \pm 0.19$ & $0.76 \pm 0.08$ & $0.55 \pm 0.33$
    \\
    \; + \Laplace & $0.85 \pm 0.01$ & $0.84 \pm 0.10$ & $0.87 \pm 0.04$ & $0.67 \pm 0.19$
    \\
    \rowcolor[HTML]{EFEFEF}
    \; + \ProbCBM & $0.68 \pm 0.01$ & $0.26 \pm 0.01$ & $0.26 \pm 0.02$ & $0.11 \pm 0.02$
    \\
    \; + \DeepEns  & $0.79 \pm 0.01$ & $0.62 \pm 0.03$ & $0.71 \pm 0.10$ & $0.37 \pm 0.12$
    \\
    \rowcolor[HTML]{EFEFEF}
    \; + \method & $\bf 0.58 \pm 0.01$ & $\bf 0.14 \pm 0.01$ & $\bf 0.10 \pm 0.01$ & $\bf 0.02 \pm 0.01$
    \\
    \bottomrule
\end{tabular}
    \label{tab:boia}
\end{table}

\begin{figure}[!t]
    \centering
    \includegraphics[width=0.6\linewidth]{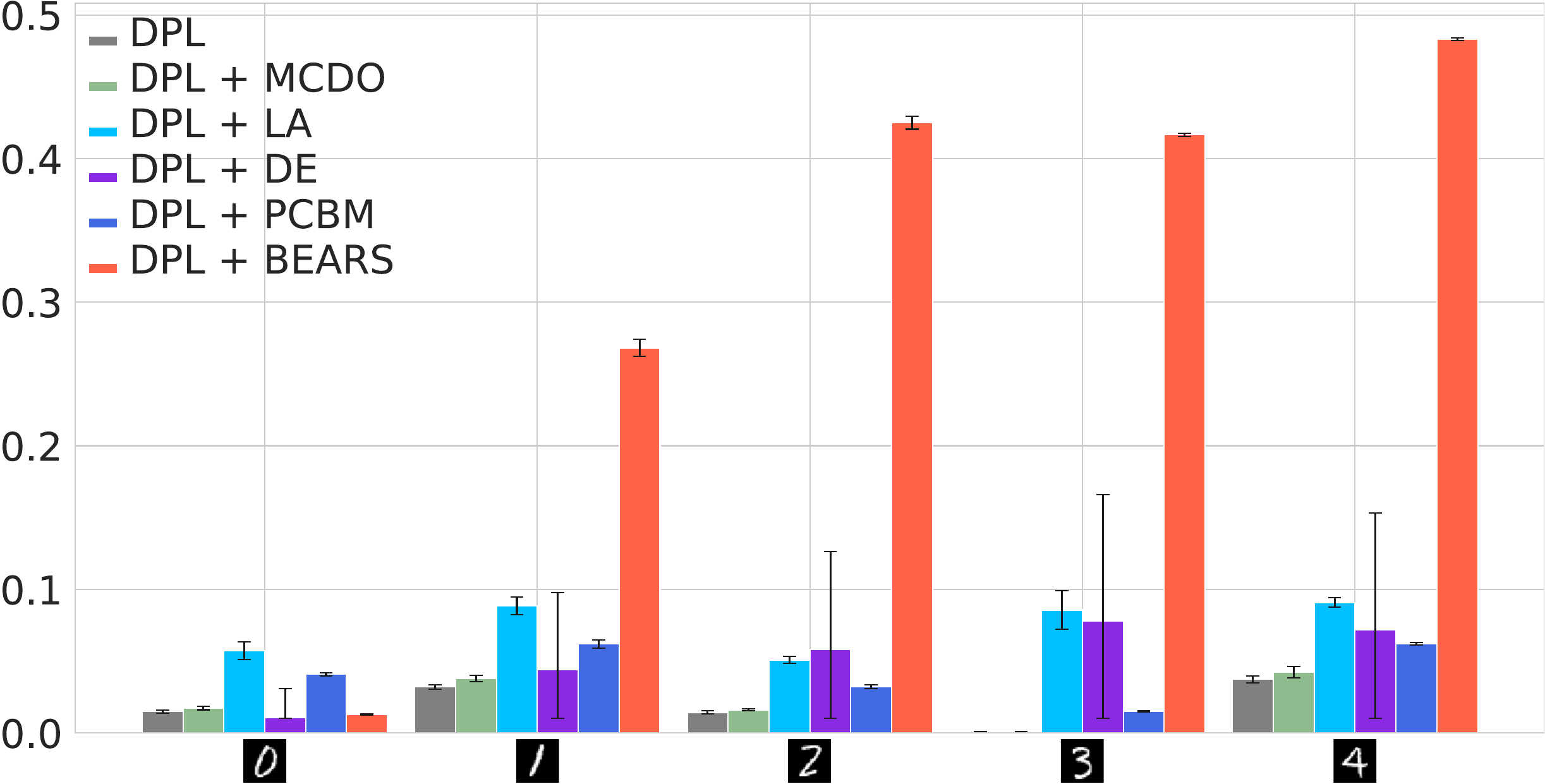}
    \caption{\textbf{Per-concept entropy shows \method is more uncertain about concepts affected by RS} on \MNISTHalf compared to regular \DPL and alternative uncertainty calibration methods.  \SL and \LTN show similar trends, see \cref{sec:all-results}. Importantly, these improvements do not require concept annotations.
    }
    \label{fig:mnisthalf-entropy}
\end{figure}

\textbf{Q1: RSs make NeSy predictors overconfident.} \cref{tab:mnisthalf} lists the label and concept ECE of all competitors on \MNISTHalf, measured both in-distribution (sums in the training set) and out-of-distribution (all other sums).  The label and concept accuracy are reported in the appendix (\cref{tab:mnisthalf-full}) due to space constraints.
Overall, all NeSy predictors achieve high label accuracy ($\ge 90\%)$ but fare poorly in terms of concept accuracy (approx. $43\%$ for \DPL and \SL, and \LTN), meaning they are affected by RSs, as expected. 
Our results also show that they are \textit{not RS-aware}, as they are very confident about their concept predictions (\CECE of approx. $69\%$ for \DPL, $71\%$ for \SL, and $70\%$ fort \LTN).  Moreover, the label predictions are well calibrated (\YECE is approx. $2\%$ for \DPL and \LTN, $1\%$ for \SL), meaning that label uncertainty is not a useful indicator of RSs. 
In general, models performance worsens out-of-distribution in terms of label accuracy (barely above $0$ for all models) and label and concept calibration (ECE around $90\%$), despite concept accuracy remaining roughly stable (about $40\%$).
The results for \MNISTShortcut follow a similar trend, cf. \cref{tab:mnistshort-full} in the appendix.

As for \BOIA, we only evaluate \DPL as it is the only model that guarantees predictions comply with the safety constraints out of the ones we consider.  
The results in \cref{tab:boia} and \cref{tab:boia-full} show that \DPL achieves good label accuracy ($72\%$ macro $F_1$) in this challenging task by leveraging poor concepts ($34\%$ macro $F_1$) with high confidence ($\mCECE \approx 84\%$).
This supports our claim that NeSy architectures are not RS-aware.

\textbf{Q2: Combining NeSy predictors with \method dramatically improves RS-awareness in all data sets while retaining the same prediction accuracy.}
For \MNISTHalf (\cref{tab:mnisthalf}, \cref{tab:mnisthalf-full}), \method shrinks the concept ECE from $69\%$ to $37\%$ for \DPL, from $71\%$ to $38\%$ for \SL, and from $70\%$ to $36\%$ for \LTN in-distribution.  The out-of-distribution improvement even more substantial, as \method improves both concept calibration (\DPL: $87\% \to 38\%$, \SL: $88\% \to 37\%$, \LTN: $87\% \to 32\%$) \textit{and} label calibration (\DPL: $92\% \to 39\%$, \SL: $95\% \to 75\%$, \LTN: $94\% \to 36\%$).
No competitor comes close.  The runner-up, \Laplace, improves concept calibration on average by
$10.5\%$ and at best by $15\%$ (for \LTN in-distribution), while \method averages 
$42.3\%$ and up to $55\%$ (for \LTN out-of-distribution).
\cref{fig:mnisthalf-entropy} shows that \method correctly assigns high uncertainty to all digits but the zero, which is the only one not affected by RSs, while all competitors are largely overconfident on these digits.

In \BOIA the trend is largely the same:  \method improves the test set concept ECE for all concepts jointly (\mCECE) from $84\%$ to $58\%$ ($-26\%$).  The improvement becomes even clearer if we group the various concepts based on what actions they entail:  concepts for {\tt forward}/{\tt stop} improve by $-61\%$, those for {\tt right} by $-69\%$, and those for {\tt left} by $-57\%$.
Here, \Laplace performs quite poorly (in fact, it yields \textit{worsen} calibration), and the runner-up \ProbCBM, which fares well ($-16\%$ $\mCECE$), is also substantially worse than \method.
Finally, we note that, despite their similarities, \DeepEns underperforms overall, showcasing the importance of our knowledge-aware ensemble diversification strategy.

\textbf{Q3: \method allows for selecting better concept annotations.} \cref{fig:kand-results} reports the results in terms of concept accuracy on the \Kandinsky dataset when using an active learning strategy to acquire concept supervision. Results are obtained by pre-training DPL with 10 examples of red squares, and selecting additional objects for supervision based on their concept uncertainty. Results show that standard DPL has the same behaviour of a random sampling strategy, likely because of its poor estimation of concept uncertainty. On the other hand, DPL with \method manages to substantially outperform both alternatives in improving concept accuracy, achieving an accuracy of more than 90\% with just 50 queries, while the other strategies level off at around 75\% accuracy. Note that because of the presence of reasoning shortcuts, all models achieve high label-level accuracy regardless of their concept-level accuracy. See \cref{appendix:active-learning} for the details.

\begin{figure}[!t]
    \centering
    \includegraphics[width=0.6\linewidth]{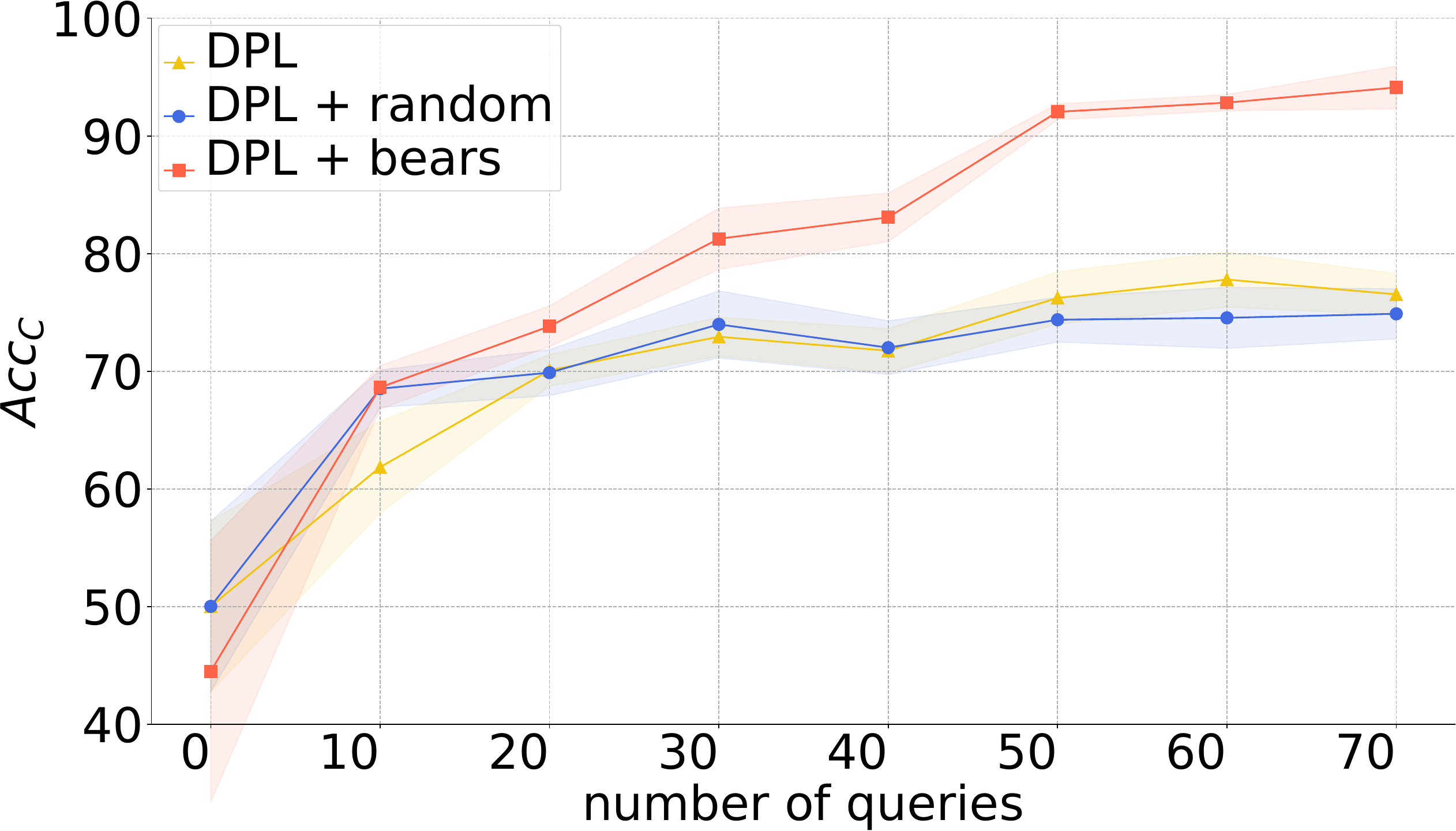}
    \caption{\textbf{\method allows selecting informative concept annotations faster.} A substantial improvement in concept accuracy is achieved by performing active learning guided by RS-aware concept uncertainty (\DPL+\method) with respect to plain concept uncertainty (\DPL) and random selection.
    }
    \label{fig:kand-results}
\end{figure}

\section{Related Work}
\label{sec:related-work}

\textbf{Neuro-Symbolic Integration.}  NeSy AI \citep{dash2022review, giunchiglia2022deep} spans a broad family of models and tasks -- both discriminative and generative -- involving perception and reasoning \citep{de2021statistical, garcez2022neural, di2020efficient, misino2022vael, AhmedKLR23}.
Given discrete reasoning is not differentiable, NeSy architectures support end-to-end training either by imbuing the prior knowledge with probabilistic \citep{lippi2009prediction, manhaeve2018deepproblog, yang2020neurasp, huang2021scallop, marra2021neural, ahmed2022semantic, van2022anesi, skryagin2022neural} or fuzzy \citep{diligenti2017semantic, van2022analyzing, donadello2017logic} semantics, by implementing reasoning in embedding space \citep{rocktaschel2016learning}, or through a combination thereof \citep{pryor2022neupsl}.
Another difference is whether they encourage \citep{xu2018semantic, fischer2019dl2, pryor2022neupsl} vs. guarantee \citep{manhaeve2018deepproblog, giunchiglia2020coherent, ahmed2022semantic, hoernle2022multiplexnet, giunchiglia2024ccn} predictions to be consistent with the knowledge.
Despite their differences, all NeSy approaches can be prone to RSs, which occur whenever prior knowledge -- including label supervision -- is insufficient to pin down the correct concept semantics.

\textbf{Dealing with Reasoning Shortcuts.} Existing works on RSs focus on unsupervised mitigation, often by discouraging learned concepts from collapsing onto each other.
Examples include using a batch-wise entropy loss \citep{manhaeve2021neural}, a reconstruction loss \citep{marconato2023not}, a bottleneck maximization approach \citep{sansone2023learning}, and encouraging constraint satisfaction via non-trivial assignments \citep{li2023learning}. 
Our work builds on the insights from Marconato {\em et al.} \citep{marconato2023not}, who recently showed that unsupervised mitigation only works in specific cases, and that only expensive strategies -- like multi-task learning and dense annotations \citep{marconato2023neuro} -- can provably avoid RSs in all cases.

Our key contribution is that of switching focus from mitigation to awareness, which -- as we show -- \textit{can} be achieved in an unsupervised manner.
In this sense, \method is closely related to unsupervised mitigation heuristics \citep{manhaeve2021neural, sansone2023learning, li2023learning}, but differs in the goal (awareness vs. mitigation). \method specifically averages over neural networks that capture conflicting RSs to achieve knowledge-dependent uncertainty calibration.
It is also related to the neuro-symbolic entropy of \citep{ahmed2022neuro}, which, however, \emph{minimizes} instead of maximizing the entropy of the NeSy predictor, and as such it can exacerbate the negative effects of RSs.

\textbf{Uncertainty calibration in deep learning.}  Overconfidence of deep learning models is a well-known issue \citep{abdar2021review}.
Many strategies for reducing overconfidence of label predictions exist \citep{muller2019does, li2020closed, wei2022mitigating, mukhoti2020calibrating, carratino2022mixup}, many of which based on Bayesian techniques \citep{gal2016dropout,daxberger2021laplace,deepesembles2017}.  Our experiments show that applying them to NeSy predictors fails to produce RS-aware models, whereas \method succeeds.
Techniques from concept-based models for imbuing concepts with probabilistic semantics \citep{kim2023probabilistic, marconato2022glancenets, collins2023human} also can improve calibration, but  underperform in our experiments compared to \method.

\section{Conclusion}
\label{sec:conclusion}

NeSy models tend to be unaware of RSs affecting them, hindering reliability.
We address this by introducing \method, which encourages NeSy models to be more uncertain about concepts affected by RSs, enabling users to identify and distrust bad concepts.
\method vastly improves RS-awareness compared to NeSy baselines and state-of-the-art calibration methods while retaining high prediction accuracy, and lowers the cost of supervised mitigation via uncertainty-based active learning of dense annotations.
In future work, we will explore richer knowledge acquisition strategies to encourage RS-awareness and reduce their impact.

\subsection*{Acknowledgements}

The authors are grateful to Zhe Zeng for useful discussion.
Funded by the European Union. Views and opinions expressed are however those of the author(s) only and do not necessarily reflect those of the European Union or the European Health and Digital Executive Agency (HaDEA). Neither the European Union nor the granting authority can be held responsible for them. Grant Agreement no. 101120763 - TANGO.
AV is supported by the "UNREAL: Unified Reasoning Layer for Trustworthy ML" project (EP/Y023838/1) selected by the ERC and funded by UKRI EPSRC.

\bibliographystyle{unsrtnat}
\bibliography{references, explanatory-supervision}

\begin{thebibliography}{77}
\providecommand{\natexlab}[1]{#1}
\providecommand{\url}[1]{\texttt{#1}}
\expandafter\ifx\csname urlstyle\endcsname\relax
  \providecommand{\doi}[1]{doi: #1}\else
  \providecommand{\doi}{doi: \begingroup \urlstyle{rm}\Url}\fi

\bibitem[Xu et~al.(2020)Xu, Yang, Gong, Lin, Wu, Li, and
  Vasconcelos]{xu2020boia}
Yiran Xu, Xiaoyin Yang, Lihang Gong, Hsuan-Chu Lin, Tz-Ying Wu, Yunsheng Li,
  and Nuno Vasconcelos.
\newblock Explainable object-induced action decision for autonomous vehicles.
\newblock In \emph{IEEE/CVF Conference on Computer Vision and Pattern
  Recognition (CVPR)}, June 2020.

\bibitem[Sawada and Nakamura(2022)]{sawada2022concept}
Yoshihide Sawada and Keigo Nakamura.
\newblock Concept bottleneck model with additional unsupervised concepts.
\newblock \emph{IEEE Access}, 10:\penalty0 41758--41765, 2022.

\bibitem[Marconato et~al.(2023{\natexlab{a}})Marconato, Teso, Vergari, and
  Passerini]{marconato2023not}
Emanuele Marconato, Stefano Teso, Antonio Vergari, and Andrea Passerini.
\newblock Not all neuro-symbolic concepts are created equal: Analysis and
  mitigation of reasoning shortcuts.
\newblock In \emph{NeurIPS}, 2023{\natexlab{a}}.

\bibitem[Garcez et~al.(2019)Garcez, Gori, Lamb, Serafini, Spranger, and
  Tran]{garcez2019neural}
A~Garcez, M~Gori, LC~Lamb, L~Serafini, M~Spranger, and SN~Tran.
\newblock Neural-symbolic computing: An effective methodology for principled
  integration of machine learning and reasoning.
\newblock \emph{Journal of Applied Logics}, 6\penalty0 (4):\penalty0 611--632,
  2019.

\bibitem[De~Raedt et~al.(2021)De~Raedt, Duman{\v{c}}i{\'c}, Manhaeve, and
  Marra]{de2021statistical}
Luc De~Raedt, Sebastijan Duman{\v{c}}i{\'c}, Robin Manhaeve, and Giuseppe
  Marra.
\newblock From statistical relational to neural-symbolic artificial
  intelligence.
\newblock In \emph{Proceedings of the Twenty-Ninth International Conference on
  International Joint Conferences on Artificial Intelligence}, pages
  4943--4950, 2021.

\bibitem[Garcez et~al.(2022)Garcez, Bader, Bowman, Lamb, de~Penning, Illuminoo,
  Poon, and Gerson~Zaverucha]{garcez2022neural}
Artur~d’Avila Garcez, Sebastian Bader, Howard Bowman, Luis~C Lamb, Leo
  de~Penning, BV~Illuminoo, Hoifung Poon, and COPPE Gerson~Zaverucha.
\newblock Neural-symbolic learning and reasoning: A survey and interpretation.
\newblock \emph{Neuro-Symbolic Artificial Intelligence: The State of the Art},
  342:\penalty0 1, 2022.

\bibitem[Xu et~al.(2018)Xu, Zhang, Friedman, Liang, and Broeck]{xu2018semantic}
Jingyi Xu, Zilu Zhang, Tal Friedman, Yitao Liang, and Guy Broeck.
\newblock A semantic loss function for deep learning with symbolic knowledge.
\newblock In \emph{ICML}, 2018.

\bibitem[Manhaeve et~al.(2018)Manhaeve, Dumancic, Kimmig, Demeester, and
  De~Raedt]{manhaeve2018deepproblog}
Robin Manhaeve, Sebastijan Dumancic, Angelika Kimmig, Thomas Demeester, and Luc
  De~Raedt.
\newblock {DeepProbLog: Neural Probabilistic Logic Programming}.
\newblock \emph{NeurIPS}, 2018.

\bibitem[Giunchiglia and Lukasiewicz(2020)]{giunchiglia2020coherent}
Eleonora Giunchiglia and Thomas Lukasiewicz.
\newblock Coherent hierarchical multi-label classification networks.
\newblock \emph{NeurIPS}, 2020.

\bibitem[Ahmed et~al.(2022{\natexlab{a}})Ahmed, Teso, Chang, Van~den Broeck,
  and Vergari]{ahmed2022semantic}
Kareem Ahmed, Stefano Teso, Kai-Wei Chang, Guy Van~den Broeck, and Antonio
  Vergari.
\newblock {Semantic Probabilistic Layers for Neuro-Symbolic Learning}.
\newblock In \emph{NeurIPS}, 2022{\natexlab{a}}.

\bibitem[Hoernle et~al.(2022)Hoernle, Karampatsis, Belle, and
  Gal]{hoernle2022multiplexnet}
Nick Hoernle, Rafael~Michael Karampatsis, Vaishak Belle, and Kobi Gal.
\newblock Multiplexnet: Towards fully satisfied logical constraints in neural
  networks.
\newblock In \emph{AAAI}, 2022.

\bibitem[Di~Liello et~al.(2020)Di~Liello, Ardino, Gobbi, Morettin, Teso, and
  Passerini]{di2020efficient}
Luca Di~Liello, Pierfrancesco Ardino, Jacopo Gobbi, Paolo Morettin, Stefano
  Teso, and Andrea Passerini.
\newblock Efficient generation of structured objects with constrained
  adversarial networks.
\newblock \emph{Advances in neural information processing systems},
  33:\penalty0 14663--14674, 2020.

\bibitem[Li et~al.(2023)Li, Liu, Yao, Xu, Chen, Ma, Jian,
  et~al.]{li2023learning}
Zenan Li, Zehua Liu, Yuan Yao, Jingwei Xu, Taolue Chen, Xiaoxing Ma, L~Jian,
  et~al.
\newblock Learning with logical constraints but without shortcut satisfaction.
\newblock In \emph{ICLR}, 2023.

\bibitem[Wang et~al.(2023)Wang, Tsamoura, and Roth]{wang2023learning}
Kaifu Wang, Efi Tsamoura, and Dan Roth.
\newblock On learning latent models with multi-instance weak supervision.
\newblock In \emph{NeurIPS}, 2023.

\bibitem[Marconato et~al.(2023{\natexlab{b}})Marconato, Bontempo, Ficarra,
  Calderara, Passerini, and Teso]{marconato2023neuro}
Emanuele Marconato, Gianpaolo Bontempo, Elisa Ficarra, Simone Calderara, Andrea
  Passerini, and Stefano Teso.
\newblock Neuro symbolic continual learning: Knowledge, reasoning shortcuts and
  concept rehearsal.
\newblock In \emph{ICML}, 2023{\natexlab{b}}.

\bibitem[Xie et~al.(2022)Xie, Kersting, and Neider]{xie2022neuro}
Xuan Xie, Kristian Kersting, and Daniel Neider.
\newblock Neuro-symbolic verification of deep neural networks.
\newblock In \emph{IJCAI}, 2022.

\bibitem[Koh et~al.(2020)Koh, Nguyen, Tang, Mussmann, Pierson, Kim, and
  Liang]{koh2020concept}
Pang~Wei Koh, Thao Nguyen, Yew~Siang Tang, Stephen Mussmann, Emma Pierson, Been
  Kim, and Percy Liang.
\newblock Concept bottleneck models.
\newblock In \emph{International Conference on Machine Learning}, pages
  5338--5348. PMLR, 2020.

\bibitem[Marconato et~al.(2023{\natexlab{c}})Marconato, Passerini, and
  Teso]{marconato2023interpretability}
Emanuele Marconato, Andrea Passerini, and Stefano Teso.
\newblock Interpretability is in the mind of the beholder: A causal framework
  for human-interpretable representation learning.
\newblock \emph{Entropy}, 25\penalty0 (12):\penalty0 1574, 2023{\natexlab{c}}.

\bibitem[Poeta et~al.(2023)Poeta, Ciravegna, Pastor, Cerquitelli, and
  Baralis]{poeta2023concept}
Eleonora Poeta, Gabriele Ciravegna, Eliana Pastor, Tania Cerquitelli, and Elena
  Baralis.
\newblock Concept-based explainable artificial intelligence: A survey.
\newblock \emph{arXiv preprint arXiv:2312.12936}, 2023.

\bibitem[Teso et~al.(2023)Teso, Alkan, Stammer, and Daly]{teso2023leveraging}
Stefano Teso, {\"O}znur Alkan, Wolfang Stammer, and Elizabeth Daly.
\newblock Leveraging explanations in interactive machine learning: An overview.
\newblock \emph{Frontiers in Artificial Intelligence}, 2023.

\bibitem[Dash et~al.(2022)Dash, Chitlangia, Ahuja, and
  Srinivasan]{dash2022review}
Tirtharaj Dash, Sharad Chitlangia, Aditya Ahuja, and Ashwin Srinivasan.
\newblock A review of some techniques for inclusion of domain-knowledge into
  deep neural networks.
\newblock \emph{Scientific Reports}, 12\penalty0 (1):\penalty0 1--15, 2022.

\bibitem[Giunchiglia et~al.(2022)Giunchiglia, Stoian, and
  Lukasiewicz]{giunchiglia2022deep}
Eleonora Giunchiglia, Mihaela~Catalina Stoian, and Thomas Lukasiewicz.
\newblock Deep learning with logical constraints.
\newblock \emph{arXiv preprint arXiv:2205.00523}, 2022.

\bibitem[Ahmed et~al.(2022{\natexlab{b}})Ahmed, Wang, Chang, and Van~den
  Broeck]{ahmed2022neuro}
Kareem Ahmed, Eric Wang, Kai-Wei Chang, and Guy Van~den Broeck.
\newblock Neuro-symbolic entropy regularization.
\newblock In \emph{UAI}, 2022{\natexlab{b}}.

\bibitem[van Krieken et~al.(2022{\natexlab{a}})van Krieken, Thanapalasingam,
  Tomczak, van Harmelen, and Teije]{van2022anesi}
Emile van Krieken, Thiviyan Thanapalasingam, Jakub~M Tomczak, Frank van
  Harmelen, and Annette~ten Teije.
\newblock A-nesi: A scalable approximate method for probabilistic neurosymbolic
  inference.
\newblock \emph{arXiv preprint arXiv:2212.12393}, 2022{\natexlab{a}}.

\bibitem[Koller and Friedman(2009)]{koller2009probabilistic}
Daphne Koller and Nir Friedman.
\newblock \emph{Probabilistic graphical models: principles and techniques}.
\newblock MIT press, 2009.

\bibitem[Rudin(2019)]{rudin2019stop}
Cynthia Rudin.
\newblock Stop explaining black box machine learning models for high stakes
  decisions and use interpretable models instead.
\newblock \emph{Nature Machine Intelligence}, 1\penalty0 (5):\penalty0
  206--215, 2019.

\bibitem[Kambhampati et~al.(2022)Kambhampati, Sreedharan, Verma, Zha, and
  Guan]{kambhampati2021symbols}
Subbarao Kambhampati, Sarath Sreedharan, Mudit Verma, Yantian Zha, and Lin
  Guan.
\newblock {Symbols as a Lingua Franca for Bridging Human-AI Chasm for
  Explainable and Advisable AI Systems}.
\newblock In \emph{Proceedings of Thirty-Sixth AAAI Conference on Artificial
  Intelligence (AAAI)}, 2022.

\bibitem[Caruana(1997)]{caruana1997multitask}
Rich Caruana.
\newblock Multitask learning.
\newblock \emph{Machine learning}, 28:\penalty0 41--75, 1997.

\bibitem[Khemakhem et~al.(2020)Khemakhem, Kingma, Monti, and
  Hyvarinen]{khemakhem2020variational}
Ilyes Khemakhem, Diederik Kingma, Ricardo Monti, and Aapo Hyvarinen.
\newblock {Variational autoencoders and nonlinear ICA: A unifying framework}.
\newblock In \emph{International Conference on Artificial Intelligence and
  Statistics}, pages 2207--2217. PMLR, 2020.

\bibitem[Sch{\"o}lkopf et~al.(2021)Sch{\"o}lkopf, Locatello, Bauer, Ke,
  Kalchbrenner, Goyal, and Bengio]{scholkopf2021toward}
Bernhard Sch{\"o}lkopf, Francesco Locatello, Stefan Bauer, Nan~Rosemary Ke, Nal
  Kalchbrenner, Anirudh Goyal, and Yoshua Bengio.
\newblock Toward causal representation learning.
\newblock \emph{Proceedings of the IEEE}, 2021.

\bibitem[Gresele et~al.(2021)Gresele, Von~K{\"u}gelgen, Stimper, Sch{\"o}lkopf,
  and Besserve]{gresele2021independent}
Luigi Gresele, Julius Von~K{\"u}gelgen, Vincent Stimper, Bernhard
  Sch{\"o}lkopf, and Michel Besserve.
\newblock Independent mechanism analysis, a new concept?
\newblock \emph{Advances in neural information processing systems},
  34:\penalty0 28233--28248, 2021.

\bibitem[Lachapelle et~al.(2024)Lachapelle, L{\'o}pez, Sharma, Everett, Priol,
  Lacoste, and Lacoste-Julien]{lachapelle2024nonparametric}
S{\'e}bastien Lachapelle, Pau~Rodr{\'\i}guez L{\'o}pez, Yash Sharma, Katie
  Everett, R{\'e}mi~Le Priol, Alexandre Lacoste, and Simon Lacoste-Julien.
\newblock Nonparametric partial disentanglement via mechanism sparsity: Sparse
  actions, interventions and sparse temporal dependencies.
\newblock \emph{arXiv preprint arXiv:2401.04890}, 2024.

\bibitem[Liang et~al.(2023)Liang, Keki{\'c}, von K{\"u}gelgen, Buchholz,
  Besserve, Gresele, and Sch{\"o}lkopf]{liang2023causal}
Wendong Liang, Armin Keki{\'c}, Julius von K{\"u}gelgen, Simon Buchholz, Michel
  Besserve, Luigi Gresele, and Bernhard Sch{\"o}lkopf.
\newblock Causal component analysis.
\newblock \emph{arXiv preprint arXiv:2305.17225}, 2023.

\bibitem[Buchholz et~al.(2023)Buchholz, Rajendran, Rosenfeld, Aragam,
  Sch{\"o}lkopf, and Ravikumar]{buchholz2023learning}
Simon Buchholz, Goutham Rajendran, Elan Rosenfeld, Bryon Aragam, Bernhard
  Sch{\"o}lkopf, and Pradeep Ravikumar.
\newblock Learning linear causal representations from interventions under
  general nonlinear mixing.
\newblock \emph{arXiv preprint arXiv:2306.02235}, 2023.

\bibitem[Neal(2012)]{neal2012bayesian}
Radford~M Neal.
\newblock \emph{Bayesian learning for neural networks}, volume 118.
\newblock Springer Science \& Business Media, 2012.

\bibitem[Kendall and Gal(2017)]{kendall2017uncertainties}
Alex Kendall and Yarin Gal.
\newblock What uncertainties do we need in bayesian deep learning for computer
  vision?
\newblock \emph{Advances in neural information processing systems}, 30, 2017.

\bibitem[Wang and Yeung(2020)]{wang2020survey}
Hao Wang and Dit-Yan Yeung.
\newblock A survey on bayesian deep learning.
\newblock \emph{ACM computing surveys (csur)}, 53\penalty0 (5):\penalty0 1--37,
  2020.

\bibitem[Daxberger et~al.(2021)Daxberger, Kristiadi, Immer, Eschenhagen, Bauer,
  and Hennig]{daxberger2021laplace}
Erik Daxberger, Agustinus Kristiadi, Alexander Immer, Runa Eschenhagen,
  Matthias Bauer, and Philipp Hennig.
\newblock Laplace redux-effortless bayesian deep learning.
\newblock \emph{Advances in Neural Information Processing Systems},
  34:\penalty0 20089--20103, 2021.

\bibitem[Osawa et~al.(2019)Osawa, Swaroop, Khan, Jain, Eschenhagen, Turner, and
  Yokota]{osawa2019practical}
Kazuki Osawa, Siddharth Swaroop, Mohammad Emtiyaz~E Khan, Anirudh Jain, Runa
  Eschenhagen, Richard~E Turner, and Rio Yokota.
\newblock Practical deep learning with bayesian principles.
\newblock \emph{Advances in neural information processing systems}, 32, 2019.

\bibitem[Lakshminarayanan et~al.(2017)Lakshminarayanan, Pritzel, and
  Blundell]{deepesembles2017}
Balaji Lakshminarayanan, Alexander Pritzel, and Charles Blundell.
\newblock Simple and scalable predictive uncertainty estimation using deep
  ensembles.
\newblock In I.~Guyon, U.~Von Luxburg, S.~Bengio, H.~Wallach, R.~Fergus,
  S.~Vishwanathan, and R.~Garnett, editors, \emph{Advances in Neural
  Information Processing Systems}, volume~30. Curran Associates, Inc., 2017.

\bibitem[Jospin et~al.(2022)Jospin, Laga, Boussaid, Buntine, and
  Bennamoun]{jospin2022hands}
Laurent~Valentin Jospin, Hamid Laga, Farid Boussaid, Wray Buntine, and Mohammed
  Bennamoun.
\newblock Hands-on bayesian neural networks—a tutorial for deep learning
  users.
\newblock \emph{IEEE Computational Intelligence Magazine}, 17\penalty0
  (2):\penalty0 29--48, 2022.

\bibitem[D'Angelo and Fortuin(2021)]{d2021repulsive}
Francesco D'Angelo and Vincent Fortuin.
\newblock Repulsive deep ensembles are bayesian.
\newblock \emph{Advances in Neural Information Processing Systems},
  34:\penalty0 3451--3465, 2021.

\bibitem[Wild et~al.(2023)Wild, Ghalebikesabi, Sejdinovic, and
  Knoblauch]{wild2023rigorous}
Veit~David Wild, Sahra Ghalebikesabi, Dino Sejdinovic, and Jeremias Knoblauch.
\newblock A rigorous link between deep ensembles and (variational) bayesian
  methods.
\newblock In \emph{Thirty-seventh Conference on Neural Information Processing
  Systems}, 2023.
\newblock URL \url{https://openreview.net/forum?id=eTHawKFT4h}.

\bibitem[Settles(2012)]{settles2012active}
B.~Settles.
\newblock \emph{Active Learning}.
\newblock Synthesis lectures on artificial intelligence and machine learning.
  Morgan \& Claypool, 2012.
\newblock ISBN 9781608457250.
\newblock URL \url{https://books.google.it/books?id=z7toC3z_QjQC}.

\bibitem[Vergari et~al.(2021)Vergari, Choi, Liu, Teso, and Van~den
  Broeck]{vergari2021compositional}
Antonio Vergari, YooJung Choi, Anji Liu, Stefano Teso, and Guy Van~den Broeck.
\newblock A compositional atlas of tractable circuit operations for
  probabilistic inference.
\newblock \emph{Advances in Neural Information Processing Systems}, 34, 2021.

\bibitem[De~Raedt and Kimmig(2015)]{de2015probabilistic}
Luc De~Raedt and Angelika Kimmig.
\newblock Probabilistic (logic) programming concepts.
\newblock \emph{Machine Learning}, 2015.

\bibitem[Donadello et~al.(2017)Donadello, Serafini, and
  Garcez]{donadello2017logic}
Ivan Donadello, Luciano Serafini, and Artur~D'Avila Garcez.
\newblock Logic tensor networks for semantic image interpretation.
\newblock In \emph{IJCAI}, 2017.

\bibitem[Badreddine et~al.(2022)Badreddine, Garcez, Serafini, and
  Spranger]{badreddine2022logic}
Samy Badreddine, Artur~d'Avila Garcez, Luciano Serafini, and Michael Spranger.
\newblock Logic tensor networks.
\newblock \emph{Artificial Intelligence}, 303:\penalty0 103649, 2022.

\bibitem[Gal and Ghahramani(2016)]{gal2016dropout}
Yarin Gal and Zoubin Ghahramani.
\newblock Dropout as a bayesian approximation: Representing model uncertainty
  in deep learning.
\newblock In \emph{international conference on machine learning}, pages
  1050--1059. PMLR, 2016.

\bibitem[Kim et~al.(2023)Kim, Jung, Park, Kim, and Yoon]{kim2023probabilistic}
Eunji Kim, Dahuin Jung, Sangha Park, Siwon Kim, and Sungroh Yoon.
\newblock Probabilistic concept bottleneck models.
\newblock \emph{arXiv preprint arXiv:2306.01574}, 2023.

\bibitem[LeCun(1998)]{lecun1998mnist}
Yann LeCun.
\newblock The mnist database of handwritten digits.
\newblock \emph{http://yann. lecun. com/exdb/mnist/}, 1998.

\bibitem[Müller and Holzinger(2021)]{muller_kandisnky_2021}
Heimo Müller and Andreas Holzinger.
\newblock Kandinsky patterns.
\newblock \emph{Artificial Intelligence}, 300:\penalty0 103546, November 2021.
\newblock ISSN 0004-3702.
\newblock \doi{10.1016/j.artint.2021.103546}.
\newblock URL \url{http://dx.doi.org/10.1016/j.artint.2021.103546}.

\bibitem[Misino et~al.(2022)Misino, Marra, and Sansone]{misino2022vael}
Eleonora Misino, Giuseppe Marra, and Emanuele Sansone.
\newblock {VAEL: Bridging Variational Autoencoders and Probabilistic Logic
  Programming}.
\newblock \emph{NeurIPS}, 2022.

\bibitem[Ahmed et~al.(2023)Ahmed, Chang, and Van~den Broeck]{AhmedKLR23}
Kareem Ahmed, Kai-Wei Chang, and Guy Van~den Broeck.
\newblock A pseudo-semantic loss for deep generative models with logical
  constraints.
\newblock In \emph{Knowledge and Logical Reasoning in the Era of Data-driven
  Learning Workshop}, July 2023.

\bibitem[Lippi and Frasconi(2009)]{lippi2009prediction}
Marco Lippi and Paolo Frasconi.
\newblock Prediction of protein $\beta$-residue contacts by markov logic
  networks with grounding-specific weights.
\newblock \emph{Bioinformatics}, 2009.

\bibitem[Yang et~al.(2020)Yang, Ishay, and Lee]{yang2020neurasp}
Zhun Yang, Adam Ishay, and Joohyung Lee.
\newblock {NeurASP: Embracing neural networks into answer set programming}.
\newblock In \emph{IJCAI}, 2020.

\bibitem[Huang et~al.(2021)Huang, Li, Chen, Samel, Naik, Song, and
  Si]{huang2021scallop}
Jiani Huang, Ziyang Li, Binghong Chen, Karan Samel, Mayur Naik, Le~Song, and
  Xujie Si.
\newblock Scallop: From probabilistic deductive databases to scalable
  differentiable reasoning.
\newblock \emph{NeurIPS}, 2021.

\bibitem[Marra and Ku{\v{z}}elka(2021)]{marra2021neural}
Giuseppe Marra and Ond{\v{r}}ej Ku{\v{z}}elka.
\newblock Neural markov logic networks.
\newblock In \emph{Uncertainty in Artificial Intelligence}, 2021.

\bibitem[Skryagin et~al.(2022)Skryagin, Stammer, Ochs, Dhami, and
  Kersting]{skryagin2022neural}
Arseny Skryagin, Wolfgang Stammer, Daniel Ochs, Devendra~Singh Dhami, and
  Kristian Kersting.
\newblock Neural-probabilistic answer set programming.
\newblock In \emph{Proceedings of the International Conference on Principles of
  Knowledge Representation and Reasoning}, volume~19, pages 463--473, 2022.

\bibitem[Diligenti et~al.(2017)Diligenti, Gori, and
  Sacca]{diligenti2017semantic}
Michelangelo Diligenti, Marco Gori, and Claudio Sacca.
\newblock Semantic-based regularization for learning and inference.
\newblock \emph{Artificial Intelligence}, 2017.

\bibitem[van Krieken et~al.(2022{\natexlab{b}})van Krieken, Acar, and van
  Harmelen]{van2022analyzing}
Emile van Krieken, Erman Acar, and Frank van Harmelen.
\newblock Analyzing differentiable fuzzy logic operators.
\newblock \emph{Artificial Intelligence}, 2022{\natexlab{b}}.

\bibitem[Rockt{\"a}schel and Riedel(2016)]{rocktaschel2016learning}
Tim Rockt{\"a}schel and Sebastian Riedel.
\newblock Learning knowledge base inference with neural theorem provers.
\newblock In \emph{Proceedings of the 5th workshop on automated knowledge base
  construction}, pages 45--50, 2016.

\bibitem[Pryor et~al.(2022)Pryor, Dickens, Augustine, Albalak, Wang, and
  Getoor]{pryor2022neupsl}
Connor Pryor, Charles Dickens, Eriq Augustine, Alon Albalak, William Wang, and
  Lise Getoor.
\newblock Neupsl: Neural probabilistic soft logic.
\newblock \emph{arXiv preprint arXiv:2205.14268}, 2022.

\bibitem[Fischer et~al.(2019)Fischer, Balunovic, Drachsler-Cohen, Gehr, Zhang,
  and Vechev]{fischer2019dl2}
Marc Fischer, Mislav Balunovic, Dana Drachsler-Cohen, Timon Gehr, Ce~Zhang, and
  Martin Vechev.
\newblock Dl2: Training and querying neural networks with logic.
\newblock In \emph{International Conference on Machine Learning}, pages
  1931--1941. PMLR, 2019.

\bibitem[Giunchiglia et~al.(2024)Giunchiglia, Tatomir, Stoian, and
  Lukasiewicz]{giunchiglia2024ccn}
Eleonora Giunchiglia, Alex Tatomir, Mihaela~C{\u{a}}t{\u{a}}lina Stoian, and
  Thomas Lukasiewicz.
\newblock Ccn+: A neuro-symbolic framework for deep learning with requirements.
\newblock \emph{International Journal of Approximate Reasoning}, page 109124,
  2024.

\bibitem[Manhaeve et~al.(2021)Manhaeve, Duman{\v{c}}i{\'c}, Kimmig, Demeester,
  and De~Raedt]{manhaeve2021neural}
Robin Manhaeve, Sebastijan Duman{\v{c}}i{\'c}, Angelika Kimmig, Thomas
  Demeester, and Luc De~Raedt.
\newblock Neural probabilistic logic programming in deepproblog.
\newblock \emph{Artificial Intelligence}, 298:\penalty0 103504, 2021.

\bibitem[Sansone and Manhaeve(2023)]{sansone2023learning}
Emanuele Sansone and Robin Manhaeve.
\newblock Learning symbolic representations through joint generative and
  discriminative training.
\newblock In \emph{International Joint Conference on Artificial Intelligence
  2023 Workshop on Knowledge-Based Compositional Generalization}, 2023.

\bibitem[Abdar et~al.(2021)Abdar, Pourpanah, Hussain, Rezazadegan, Liu,
  Ghavamzadeh, Fieguth, Cao, Khosravi, Acharya, et~al.]{abdar2021review}
Moloud Abdar, Farhad Pourpanah, Sadiq Hussain, Dana Rezazadegan, Li~Liu,
  Mohammad Ghavamzadeh, Paul Fieguth, Xiaochun Cao, Abbas Khosravi, U~Rajendra
  Acharya, et~al.
\newblock A review of uncertainty quantification in deep learning: Techniques,
  applications and challenges.
\newblock \emph{Information fusion}, 76:\penalty0 243--297, 2021.

\bibitem[M{\"u}ller et~al.(2019)M{\"u}ller, Kornblith, and
  Hinton]{muller2019does}
Rafael M{\"u}ller, Simon Kornblith, and Geoffrey~E Hinton.
\newblock When does label smoothing help?
\newblock \emph{Advances in neural information processing systems}, 32, 2019.

\bibitem[Li et~al.(2020)Li, Huang, Hong, Chen, Wu, and Zhu]{li2020closed}
Qing Li, Siyuan Huang, Yining Hong, Yixin Chen, Ying~Nian Wu, and Song-Chun
  Zhu.
\newblock Closed loop neural-symbolic learning via integrating neural
  perception, grammar parsing, and symbolic reasoning.
\newblock In \emph{International Conference on Machine Learning}, pages
  5884--5894. PMLR, 2020.

\bibitem[Wei et~al.(2022)Wei, Xie, Cheng, Feng, An, and Li]{wei2022mitigating}
Hongxin Wei, Renchunzi Xie, Hao Cheng, Lei Feng, Bo~An, and Yixuan Li.
\newblock Mitigating neural network overconfidence with logit normalization.
\newblock In \emph{International Conference on Machine Learning}, pages
  23631--23644. PMLR, 2022.

\bibitem[Mukhoti et~al.(2020)Mukhoti, Kulharia, Sanyal, Golodetz, Torr, and
  Dokania]{mukhoti2020calibrating}
Jishnu Mukhoti, Viveka Kulharia, Amartya Sanyal, Stuart Golodetz, Philip Torr,
  and Puneet Dokania.
\newblock Calibrating deep neural networks using focal loss.
\newblock \emph{Advances in Neural Information Processing Systems},
  33:\penalty0 15288--15299, 2020.

\bibitem[Carratino et~al.(2022)Carratino, Ciss{\'e}, Jenatton, and
  Vert]{carratino2022mixup}
Luigi Carratino, Moustapha Ciss{\'e}, Rodolphe Jenatton, and Jean-Philippe
  Vert.
\newblock On mixup regularization.
\newblock \emph{The Journal of Machine Learning Research}, 23\penalty0
  (1):\penalty0 14632--14662, 2022.

\bibitem[Marconato et~al.(2022)Marconato, Passerini, and
  Teso]{marconato2022glancenets}
Emanuele Marconato, Andrea Passerini, and Stefano Teso.
\newblock Glancenets: Interpretabile, leak-proof concept-based models.
\newblock \emph{NeurIPS}, 2022.

\bibitem[Collins et~al.(2023)Collins, Barker, Espinosa~Zarlenga, Raman, Bhatt,
  Jamnik, Sucholutsky, Weller, and Dvijotham]{collins2023human}
Katherine~Maeve Collins, Matthew Barker, Mateo Espinosa~Zarlenga, Naveen Raman,
  Umang Bhatt, Mateja Jamnik, Ilia Sucholutsky, Adrian Weller, and
  Krishnamurthy Dvijotham.
\newblock Human uncertainty in concept-based ai systems.
\newblock In \emph{Proceedings of the 2023 AAAI/ACM Conference on AI, Ethics,
  and Society}, pages 869--889, 2023.

\bibitem[Kingma and Ba(2015)]{KingmaB14@adam}
Diederik~P. Kingma and Jimmy Ba.
\newblock Adam: {A} method for stochastic optimization.
\newblock In Yoshua Bengio and Yann LeCun, editors, \emph{3rd International
  Conference on Learning Representations, {ICLR} 2015, San Diego, CA, USA, May
  7-9, 2015, Conference Track Proceedings}, 2015.
\newblock URL \url{http://arxiv.org/abs/1412.6980}.

\bibitem[Ren et~al.(2015)Ren, He, Girshick, and Sun]{ren2015fasterrcnn}
Shaoqing Ren, Kaiming He, Ross Girshick, and Jian Sun.
\newblock Faster r-cnn: Towards real-time object detection with region proposal
  networks.
\newblock In C.~Cortes, N.~Lawrence, D.~Lee, M.~Sugiyama, and R.~Garnett,
  editors, \emph{Advances in Neural Information Processing Systems}, volume~28.
  Curran Associates, Inc., 2015.
\newblock URL
  \url{https://proceedings.neurips.cc/paper_files/paper/2015/file/14bfa6bb14875e45bba028a21ed38046-Paper.pdf}.

\end{thebibliography}

\newpage
\appendix

\section{Implementation Details}
\label{sec:implementation-details}

In this Section, we provide additional details about all metrics, datasets and models useful for reproducibility.

\subsection{Implementation}

All the experiments are implemented using Python 3.8 and Pytorch 1.13 and run on one A100 GPU.
The implementations of \DPL, \SL, and \LTN were taken verbatim from \citep{marconato2023not}.
We implemented \MCDrop and \DeepEns by adapting the code to capture the original algorithms~\citep{gal2016dropout, deepesembles2017}.
For \ProbCBM, we followed the original paper~\citep{kim2023probabilistic}. For \Laplace, we adapted the original library from \LaplaceTorch~\citep{daxberger2021laplace}.
In our experiments, we computed the Laplace approximation on the second last layer, mapping the embeddings to the concept layer. The images for \Kandinsky patterns were synthetically originated from the resource provided in \citep{muller_kandisnky_2021}.

\subsection{Metrics}
\label{sec:metrics-details}

For all datasets, we evaluate the predictions on the labels by measuring the accuracy and the $F_1$-score with macro average. We assess calibration using the Expected Calibration Error (ECE), which measures how accurately the model-predicted probabilities align with actual data likelihood. Specifically, for a given label $y_i \in \bbN$, the \YECE$(i)$ for each label error is evaluated as:
\[ 
    \YECE (i) = \sum_{\ell=1}^{M} \frac{|B_\ell|}{n} |\YAcc(B_\ell) - \YCONF(B_\ell)|, \quad \forall i \in [m]
\]
where $M$ is the number of bins, $B_m$ represent the $m$-th bin and \YCONF denotes the predicted probability. Essentially, the predicted probabilities are categorized into intervals, denoted as bins. Each data point is assigned to a bin based on its predicted probability. Within each bin, the average predicted probability and accuracy are computed. Ultimately, the ECE value is obtained by summing the averages of absolute differences between predicted probabilities and accuracies.
Similarly, we evaluate \CECE$(j)$ as:
\[ 
    \CECE(j) = \sum_{\ell=1}^{M} \frac{|B_\ell|}{n} |\CAcc(B_\ell) - \CCONF(B_\ell)|, \quad \forall j \in [k]
\]
In \MNISTHalf and \MNISTShortcut we use the very same network to extract the first and second digits, and similarly in \Kandinsky for extracting the color and shape of each object.  For this reason, \CECE was evaluated by stacking the concepts predicted by the architecture for each object.  \YECE was evaluated on the final predictions. 

In contrast, \BOIA images involve multiple concepts and multiple labels. In this case, we  adopted a softer approach, specifically we averaged over the performances on each separate component: 
\[ 
    \mYECE = \frac{1}{m} \sum_{i = 1}^{m} \YECE(i)
\]
\[ 
    \mCECE = \frac{1}{k} \sum_{i = 1}^{k} \CECE(i)
\]
where $l$ and $k$ are the numbers of labels and concepts, respectively.

In \MNISTAdd and its variations, we evaluate all metrics both in-distribution and out-of-distribution.
In \Kandinsky, labels and concepts are both balanced, so we report accuracy for both.
In \BOIA the data is not as balanced, so we report the mean-$F_1$, score as in \citep{sawada2022concept, marconato2023not}, that is, we first compute the $F_1$-score for each action and then average them:
\[
    \mFY = \frac{F_1(\text{forward}) + F_1(\text{stop})+ F_1(\text{left}) +F_1(\text{right})}{4}
\]
For all datasets, to measure uncertainty concept-wise for a specific model \(p_\theta\), we rely on the one-vs-all entropy. We evaluate the average entropy of $p_\theta (C = c \mid \vx)$ as:

\[
    H_{OVA}(p_{\theta}(C = c| X)) = - \frac{1}{|\calD|}\sum_{\vx \in \calD} \big[  p_{\theta}(C = c | \vx) \log(p_{\theta}(C = c | \vx)) + (  1- p_{\theta}(C = c | \vx)) \log (1- p_{\theta}(C = c | \vx)) \big]
\]

\subsection{\method implementation}

Implementation-wise, \method is an extension of \DeepEns with a new concept-level repulsive term.
In short, \method works as follows.
For each new model $\theta_t$ to be added to the ensemble, we compute the following loss by considering all other members in $\vtheta = \{ \theta_j \}_{j=1}^{t-1} $:
\[
  \max_{\theta_t} \frac{1}{|\calD|}  \sum_{ (\vx, \vy) \in \calD } \big[ \log p_{\theta_t} (\vy \mid \vx; \BK) + \gamma_1 \KL \big( p_{\theta_t} (\vC \mid \vx) \mid \mid  \frac{1}{t} \sum_{j = 1}^t p_{\theta_j} (\vC \mid \vx)  \big) +  \gamma_2 H( p_{\theta_t}(\vC \mid \vx) )  \big]
\]

We can analyze further the expression of the $\KL$ divergence to express it differently:
\begin{align}
    \KL \big( p_{\theta_t} (\vC \mid \vx) \mid \mid \frac{1}{t} \sum_{j = 1}^t p_{\theta_j} (\vC \mid \vx)  \big) &= \sum_{\vc \in \{0,1\}^k } p_{\theta_t} (\vc \mid \vx) \log \frac{p_{\theta_t} (\vc \mid \vx) }{\frac{1}{t}\sum_{j = 1}^{t-1} p_{\theta_j} (\vc \mid \vx) + \frac{1}{t} p_{\theta_t} (\vc \mid \vx) } \\
    &= - \sum_{\vc \in \{0,1\}^k } p_{\theta_t} (\vc \mid \vx) \log \frac{1}{t} \cdot \frac{\sum_{j = 1}^{t-1} p_{\theta_j} (\vc \mid \vx) + p_{\theta_t} (\vc \mid \vx) }{p_{\theta_t} (\vc \mid \vx)}\\
    &=  \sum_{\vc \in \{0,1\}^k } p_{\theta_t} (\vc \mid \vx) \log {t} - \log  \left[ 1 + (t-1) \cdot \sum_{j=1}^{t-1} \frac{1}{t-1} \frac{p_{\theta_j} (\vc \mid \vx)}{p_{\theta_t} (\vc \mid \vx)}  \right] \\
    &= \log {t} - \sum_{\vc \in \{0,1\}^k } p_{\theta_t} (\vc \mid \vx) \log \left[ 1 + (t-1) \cdot \frac{p_{rest} (\vc \mid \vx)}{p_{\theta_t} (\vc \mid \vx)}  \right]
\end{align}  

where in the second line we introduced a minus sign to flip the term in the logarithm, in the third line we have taken out $p_{\theta_t} (\vc \mid \vx) $ from the numerator and multiplied and divided the remaining terms for $(t-1)$, and in the last line we denoted with $p_{rest} (\vc \mid \vx) $ the average on the members of the ensemble up to $t-1$.

In general, the $\KL$ divergence is unbounded from above but since the same distribution $p_{\theta_t}(\vC \mid \vx)$ appears from both sides this gives an upper-bound. Notice that, since the \KL is always greater or equal than zero we have that:
\[
    0 \leq \KL \big( p_{\theta_t} (\vC \mid \vx) \mid \mid \frac{1}{t} \sum_{j = 1}^t p_{\theta_j} (\vC \mid \vx)  \big) \leq \log t
\]

Following, we consider the composite expression with the term proportional to the entropy on the concepts $p_\theta (\vC \mid \vx)$, without accounting for the $\log t$ term.
In our implementation, we minimize the term:
\[
\begin{aligned}
    \min_\theta \frac{1}{|\calD|} \sum_{(\vx, \vy) \in \calD} p_\theta(\vy \mid \vx; \BK) 
    &+
    \frac{\gamma_1}{\log t} \sum_{\vc \in \{0,1\}^k } p_{\theta_t} (\vc \mid \vx) \log \left[1+ (t-1) \cdot \frac{p_{rest} (\vc \mid \vx)}{p_\theta(\vc \mid \vx)} \right] \\
    &+ \gamma_2 \Big( 1 - \frac{H(p_{\theta_t} (\vC \mid \vx)) }{k \log 2} \Big)  
\end{aligned}
\]
for each new member of the ensemble, where we divided the $\KL$ term by $\log t$ to ensure its normalization, and we normalized the entropy for the maximal value $k \log 2$. The pseudo-code of \method is shown in~\cref{alg-biretta}. 

\begin{algorithm}[ht]
\caption{\method}
\begin{algorithmic}[1]
    \Procedure{\method}{$\texttt{n}, \texttt{seeds}, \gamma_1, \gamma_2, \texttt{epochs}, \texttt{train\_loader}$}
        \State Initialize empty \texttt{ensemble}\;
        \For{$i = 1 \ldots \texttt{n}$}
            \State $\texttt{seed} \leftarrow \texttt{seeds}[i]$ \textcolor{teal}{\# Set seed using $\texttt{seeds}[i]$} \;
            \State ${\texttt{model} =\tt  get\_neq\_model(\texttt{seed})} $ \textcolor{teal}{\# Create a new ANN model from the seed} \;
            \For{$e =1, \ldots, \texttt{epochs}$}
                \For{\texttt{data} $(x, y)$ \textbf{in} $\texttt{train\_loader}$}
                    \State $\hat{y}, pcx = \texttt{model}(x)$ \textcolor{teal}{\# Compute $\hat{y}$ and $p(c \mid x)$} \;
                    \State $\text{loss} = \text{C}(y, \hat{y})$ \textcolor{teal}{\# Calculate the loss in classification for the NeSy model}
                    \;
                    \If{$i > 0$}
                        \State $\overline{pcx} = \text{mean}(pcx)$ \textcolor{teal}{\# Compute the ensemble average $\overline{p(c|x)}$}
                        \;
                        \State $\text{loss} = \text{loss} + \gamma_1 \; \KL(pcx \mid \mid \overline{pcx}) + \gamma_2 \; H(pcx)$
                        \textcolor{teal}{\# Update loss with the KL and entropy}
                        \;
                    \EndIf
                    \State loss.backprop() \textcolor{teal}{\# Backpropagate the loss and update model parameters} \;
                \EndFor
            \EndFor
            \State \texttt{ensemble}$[i] \leftarrow \texttt{model}$ \textcolor{teal}{\# Add $\texttt{model}$ to $\texttt{ensemble}$} \;
        \EndFor
        \State \textbf{return} $\texttt{ensemble}$\;
    \EndProcedure
\end{algorithmic}
\label{alg-biretta}
\end{algorithm}

\subsection{Datasets details}
\label{sec:datasets-and-architecture}

In our experiments, when possible, we processed different digits and objects with the same neural network. This happens in both \MNISTAdd tasks and \Kandinsky, whereas for \BOIA this choice is not available.

\subsubsection{\MNISTShortcut} 

As done in \citep{marconato2023not}, we considered the \MNISTShortcut dataset, initially introduced in \citep{marconato2023neuro}. This variant of \MNISTAdd has only a few specific combinations of digits, containing either only even or only odd digits:

\[
    \begin{array}{ccc}
        \begin{cases}
            \MZero + \MSix &= 6\\ 
            \MTwo + \MEight &= 10\\ 
            \MFour + \MSix &= 10\\ 
            \MFour + \MEight &= 12\\
        \end{cases}
        &
        \land
        &
        \begin{cases}
            \MOne + \MFive &= 6\\ 
            \MThree + \MSeven &= 10\\ 
            \MOne + \MNine &= 10\\ 
            \MThree + \MNine &= 12\\
        \end{cases}
    \end{array}
\]

\MNISTShortcut consists of a total of 6720 fully annotated samples in the training set, 1920 samples in the validation set, and 960 samples in the in-distribution test set. Additionally, there are 5040 samples in the out-of-distribution test dataset comprising all other sums that are not observed during training.

\paragraph{\textbf{Reasoning Shortcuts}:} As described in \citep{marconato2023not}, the number of deterministic RSs can be calculated by finding the integer values for the digits $\MZero, \ldots, \MNine$ that solve the above linear system. In total, it was shown that the number of deterministic RSs amounts to 49.

\subsubsection{\MNISTHalf} This dataset constitutes a biased version of \MNISTAdd, including only half of the digits, specifically, those ranging from $0$ to $4$. Moreover, we selected the following combinations of digits:

\[ \label{eq:mnist-half-combs}
    \begin{cases}
        \MZero + \MZero &= 0\\ 
        \MZero + \MOne &= 1\\ 
        \MTwo + \MThree &= 5\\ 
        \MTwo + \MFour &= 6\\ 
    \end{cases}
\]
This allows introducing several RSs for the system.  Unlike \MNISTShortcut, two digits are not affected by reasoning shortcuts: namely $0$ and $1$. The remaining, $2$, $3$, and $4$ can be predicted differently, as shown below.

In total, \MNISTHalf comprises 2940 fully annotated samples in the training set, 840 samples in the validation set, 420 samples in the test set, and an additional 1080 samples in the out-of-distribution test dataset. These only comprise the remaining sums with these digits, like $\MOne+\MThree = 4$.

\paragraph{\textbf{Reasoning shortcuts}:}

We identify all the possible RSs empirically, since the system of observed sums can be written as a linear system from \cref{eq:mnist-half-combs}.
There are in total three possible optimal solutions, of which two are reasoning shortcuts. Explicitly:
\[
    \begin{aligned}
            \MZero  \mapsto 0, \
            \MOne   \mapsto 1, \
            & \MTwo   \mapsto 2, \
            \MThree \mapsto 3,  \
            \MFour  \mapsto 4 \\
        &\lor \\
            \MZero \mapsto 0, \ 
            \MOne \mapsto 1,  \
            &\MTwo \mapsto 3, \ 
            \MThree \mapsto 2, \
            \MFour \mapsto 3,  \\
        &\lor \\
            \MZero \mapsto 0,  \
            \MOne \mapsto 1,  \
            &\MTwo \mapsto 4,  \
            \MThree \mapsto 1, \  
            \MFour \mapsto 2,   
    \end{aligned}
\]

\subsubsection{\Kandinsky}

This dataset, introduced in \citep{muller_kandisnky_2021}, consists of visual patterns inspired by the artistic works of Wassily Kandinsky. These patterns are made of geometric figures, with several features. In our experiment, we propose a variant of \Kandinsky where each image has a fixed number of figures, and the associated concepts are shape and color. 
In total, each object can take one among three possible colors $(\tt red, blue, yellow)$ and one among three possible shapes $(\tt square, circle, triangle)$.

We propose our \Kandinsky variant for an active learning setup resembling an IQ test for machines. The task is to predict the pattern of a third image given two images sharing a common pattern. The example below provides an idea of this task.

\begin{figure}[h]
    \centering
    \includegraphics[width=0.5\textwidth]{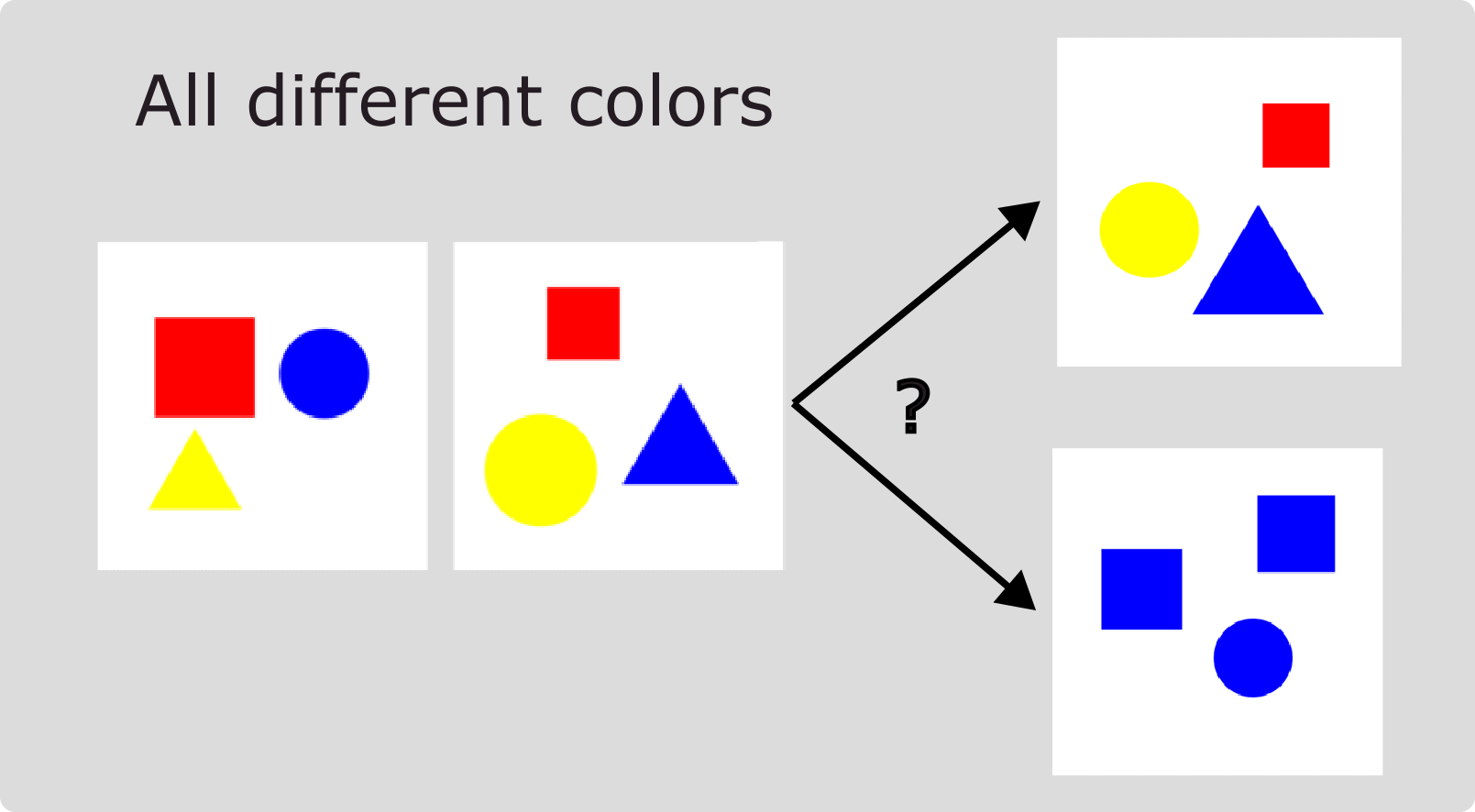}
    \caption{An example of a test sample for the \Kandinsky task. At inference time, the NeSy model has to choose according to the previous two images the third that completes the \textit{pattern}.
    For each image, the model computes a series of predicates, like $\texttt{same\_cs}$, $\texttt{same\_ss}$, and so on.
    In this case, the first two images have different colors, so the model should pick the first option. 
    }
    \label{fig:kand}
\end{figure}

Formally, let $x$ be an object in the figure, $S(x)$ the shape of $x$, and $C(x)$ its color.
Let the image be denoted as $Figure$. In total, each figure contains three objects with possibly different colors and shapes. To enhance the clarity and conciseness of our logical expressions, we introduce the following shorthand predicates:

\[
    \begin{cases}
        & \text{\texttt{diff\_s}}(Figure) \equiv \forall x, y \in Figure : \left( x \neq y \rightarrow \neg \left( S(x) = S(y) \right) \right) \\
        & \text{\texttt{diff\_c}}(Figure) \equiv \forall x, y \in Figure : \left( x \neq y \rightarrow \neg \left( C(x) = C(y) \right) \right) \\
        & \text{\texttt{same\_s}}(Figure) \equiv \forall x, y \in Figure : \left( S(x) = S(y) \right) \\
        & \text{\texttt{same\_c}}(Figure) \equiv \forall x, y \in Figure : \left( C(x) = C(y) \right) \\
        & \text{\texttt{pair\_c}}(Figure) \equiv \neg \text{\texttt{same\_c}}(Figure) \land \neg \text{\texttt{diff\_c}}(Figure)\\
        & \text{\texttt{pair\_s}}(Figure) \equiv \neg \text{\texttt{same\_s}}(Figure) \land \neg \text{\texttt{diff\_s}}(Figure)\\
    \end{cases}
\]

Let $Sample$ represent a training sample consisting of two figures for the sake of simplicity; the extension to more figures is trivial. The final logic statement, that determines the model output, is:

\[
    \begin{aligned}
        &\text{\texttt{shared\_pattern}} \Rightarrow \forall f_1, f_2 \in \textit{Sample} : \\
        &   (\texttt{same\_c}(f_1) \land \texttt{same\_c}(f_2) )
        \lor  
            (\texttt{pair\_c}(f_1) \land 
        \texttt{pair\_c}(f_2))
        \lor 
            (\texttt{diff\_c}(f_1) \land \texttt{diff\_c}(f_2)) \\
        & \hspace{20em} \lor\\ 
        &   (\texttt{same\_s}(f_1) \land     \texttt{same\_s}(f_2))
        \lor 
            (\texttt{pair\_s}(f_1) \land \texttt{pair\_s}(f_2))
        \lor 
            (\texttt{diff\_s}(f_1) \land \texttt{diff\_s}(f_2))
    \end{aligned}
\]

Our \Kandinsky dataset version comprises $4$k examples in training, $1$k in validation, and $1$k in test.

We create our dataset to include a balanced number of positive and negative examples. Positive examples consist of three images sharing the same pattern, while in negative examples the third image does not match the pattern which is shared by the first two images. The order of examples does not introduce bias into the neural network learning procedure, as the network treats each figure independently.

\paragraph{\textbf{Preprocessing}:} When processing an entire figure at once, we empirically observed that the model faces challenges in achieving satisfactory accuracy. Consequently, we opted to process one object at a time. Therefore, we employed a simplified version of the dataset that comprises of rescaled objects manually extracted via bounding boxes. Thus, each example of the dataset consists of 9 objects, namely 3 objects for each figure, ordered based on their distance from the origin of the figure.

\paragraph{\textbf{Reasoning shortcuts}:} The knowledge we build for \Kandinsky admits many RSs. As there are no constraints on specific colors or shapes, in principle, each permutation of colors and shapes can achieve perfect accuracy. Furthermore, the logic is symmetrical; hence, the concepts of colors and shapes could be swapped. Working on this dataset, we have observed various RSs. An example is illustrated below:

\begin{figure}[h]
    \centering
    \includegraphics[width=0.4\textwidth]{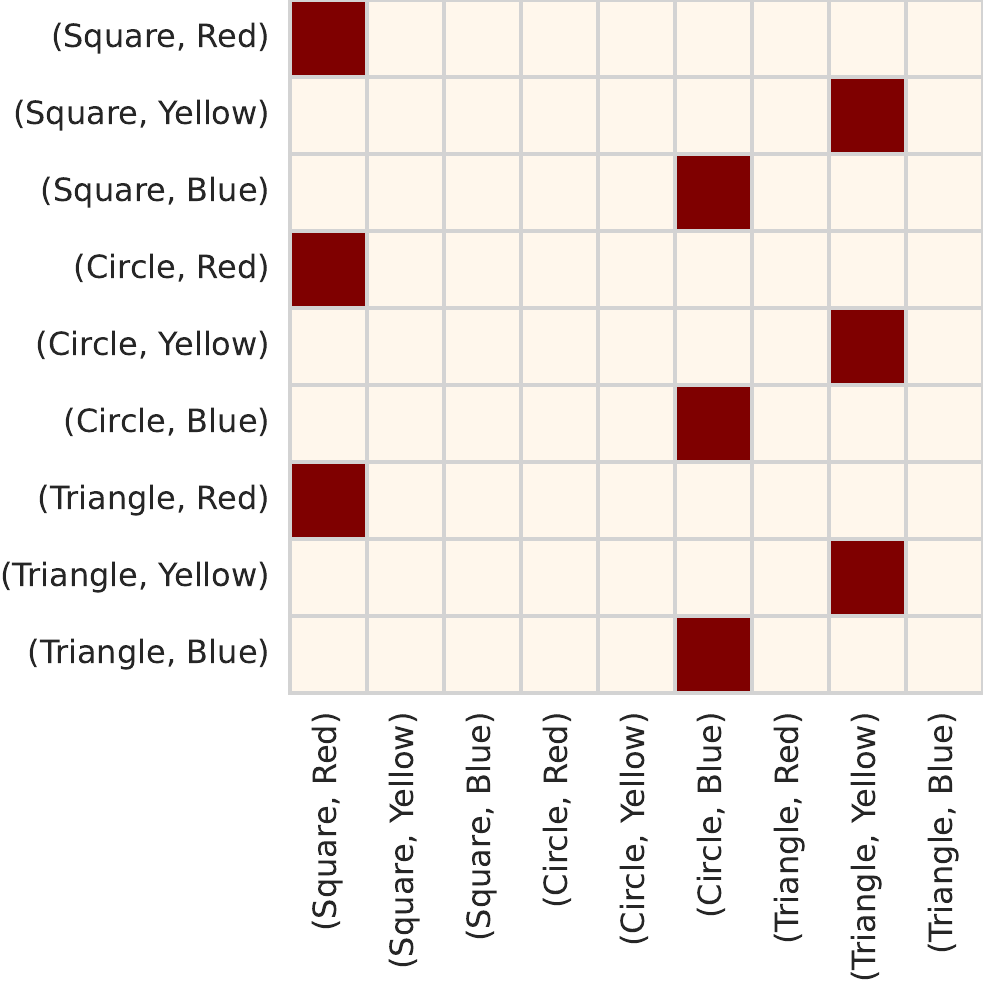}
    \caption{This plot shows an example of a RS in the \Kandinsky task. The model achieves perfect accuracy by predicting shapes based on their colors. In this scenario, all red objects are correctly identified as squares, blue ones as circles, and yellow ones as triangles.}
    \label{fig:kand-rs}
\end{figure}

\subsubsection{\BOIA}

This dataset is made of frames retrieved from driving scene videos for autonomous driving predictions~\citep{xu2020boia}. Each frame is labeled with four binary actions (\texttt{move\_forward}, \texttt{stop}, \texttt{turn\_left}, \texttt{turn\_right}). Scenes are annotated with $21$ binary concepts, providing explanations for the chosen actions. The training set includes $16$k fully labeled frames, while the validation and test sets have $2$k and $4.5$k annotated data, respectively.

The prior knowledge employed is the same as in \citep{marconato2023not}.
We report it here for the sake of completeness. 
For the \texttt{move\_forward} and \texttt{stop} move, the rules are:

\[
    \begin{cases}
        & \text{\texttt{red\_light}}  \Rightarrow \lnot\text{\texttt{green\_light}}\\
        & \text{\texttt{obstacle}} =  \text{\texttt{car}} \lor \text{\texttt{person}} \lor \text{\texttt{rider}}
        \lor \text{\texttt{other\_obstacle}}\\
        & \text{\texttt{road\_clear}} \Longleftrightarrow \lnot \text{\texttt{obstacle}}\\
        & \text{\texttt{green\_light}} \lor \text{\texttt{follow}} \lor \text{\texttt{clear}} \Rightarrow \text{\texttt{move\_forward}} \\
        & \text{\texttt{red\_light}} \lor \text{\texttt{stop\_sign}} \lor \text{\texttt{obstacle}} \Rightarrow \text{\texttt{stop}}\\
        & \text{\texttt{stop}} \Rightarrow \lnot \text{\texttt{move\_forward}}\\
    \end{cases}
\]

While for the \texttt{turn\_left} and the \texttt{turn\_right} action, the rules are:

\[
    \begin{cases}
        & \text{\texttt{can\_turn}} = \text{\texttt{left\_lane}} \lor \text{\texttt{left\_green\_lane}} \lor \text{\texttt{left\_follow}}\\
        & \text{\texttt{cannot\_turn}} = \text{\texttt{no\_left\_lane}} \lor \text{\texttt{left\_obstacle}} \lor \text{\texttt{left\_solid\_line}}\\
        & \text{\texttt{can\_turn}} \land \lnot \text{\texttt{cannot\_turn}} \Rightarrow \text{\texttt{turn\_left}} \\
    \end{cases}
\]

Moreover, for convenience in metric computations, we decided to group the actions into three classes of concepts. Specifically, we define $F-S$, which groups concepts concerning the \texttt{move\_forward} and \texttt{stop} actions, $L$, which groups concepts concerning the \texttt{turn\_left} action, and the $R$ group, which denotes the actions concerning the \texttt{turn\_right} action. The classes are shown in \cref{tab:boia-concept-classses}.

\begin{table}[h]
    \centering
\begin{tabular}{llr}
\toprule
\textbf{Concept Class}                 & \textbf{Concepts} \\ \hline
\multirow{9}{*}{$F-S$} & $\texttt{green\_light}$        &           \\
                                         & $\texttt{follow}$              &           \\
                                         & $\texttt{road\_clear}$         &           \\ \cline{2-3}
                                         & $\texttt{red\_light}$          &           \\
                                         & $\texttt{traffic\_sign}$        &           \\
                                         & $\texttt{car}$                 &            \\
                                         & $\texttt{person}$              &            \\
                                         & $\texttt{rider}$               &           \\
                                         & $\texttt{other\_obstacle}$     &            \\ \hline
\multirow{6}{*}{$L$}                     & $\texttt{left\_lane}$                                                    &            \\
                                         & $\texttt{left\_green\_light}$  &            \\
                                         & $\texttt{left\_follow}$        &            \\ 
                                         & $\texttt{no\_left\_lane}$      &            \\
                                         & $\texttt{left\_obstacle}$      &            \\
                                         & $\texttt{letf\_solid\_line}$   &            \\ \hline
\multirow{6}{*}{$R$}                     & $\texttt{right\_lane}$                                                   &           \\
                                         & $\texttt{right\_green\_light}$ &           \\
                                         & $\texttt{right\_follow}$       &           \\ 
                                         & $\texttt{no\_right\_lane}$     &           \\
                                         & $\texttt{right\_obstacle}$     &           \\
                                         & $\texttt{right\_solid\_line}$  &  \\
\bottomrule
\end{tabular}
    \caption{Concept classes in \BOIA}
    \label{tab:boia-concept-classses}
\end{table}

\subsection{Hyperparameters and Model selection}

In our work, we opted for the widely used Adam optimizer~\citep{KingmaB14@adam}. For \MNISTHalf and \MNISTShortcut, the learning rate follows an exponential decay with $\gamma = 0.95$. Regarding \BOIA, the weight decay is $\omega = 1 \cdot 10^{-3}$ for all \DPL variants, except for \ProbCBM where we set it to $0.01$. For the learning rate $\gamma$, it is set to $0.2$ for \DPL and its variants. However, we observed that a $\gamma = 1$ works best for \ProbCBM since this model does not converge very early. In the active learning experiment on \Kandinsky, we applied exponential decay with $\gamma = 0.9$.

To choose the hyperparameters, we conducted a grid search over a predefined set of values, and selected the best values based on both qualitative and quantitative results from a validation set. The learning rate for all experiments was fine-tuned within the range of $10^{-4}$ to $10^{-2}$. Specifically, for \MNISTHalf, we set the learning rate to $5 \cdot 10^{-4}$ for \DPL, and $1 \cdot 10^{-3}$ for \SL, \LTN, and \ProbCBM. For \Kandinsky, the learning rate was set to $1 \cdot 10^{-3}$. In the case of \BOIA, we explored a learning rate range between $10^{-4}$ and $10^{-2}$ and selected $10^{-3}$ for all the models.

Regarding batch sizes, we observed that $64$ worked well for \MNISTShortcut and \MNISTHalf, and $512$ for \BOIA. For \Kandinsky, a smaller batch size of $16$ was chosen as more frequent updates helped with model convergence.

Empirically, for \method, we discovered that optimizing $\gamma_1$ and $\gamma_2$ significantly influenced ensemble diversity, leading to different outcomes. Specifically, when these hyperparameters are much lower compared to the classification loss, the ensemble models tend to converge toward a single reasoning shortcut, reducing the impact of \method. Conversely, if these hyperparameters are bigger, the ensemble may consist of entirely different solutions, but potentially sub-optimal ones. These hyperparameters should be carefully tuned to strike a balance. We performed a grid search for both parameters over $\eta = \{0.1, 0.8, 0.5, 1, 2, 5, 10\}$ and selected the best values based on minimizing the classification objective and maximizing ensemble diversity.

For \MNISTHalf and \MNISTShortcut, we observed that the impact of entropy is negligible. Consequently, we set $\gamma_2 = 0$ for all experiments. In contrast, relying solely on the \KL term in \BOIA and \Kandinsky does not effectively explore a consistent space of reasoning shortcuts. Thus, for \method, we set $\gamma_1 = 0.1$ for \LTN and $\gamma_1 = 5$ for \SL. For \DPL, we set $\gamma_1 = 0.8$ for \MNISTHalf and \MNISTShortcut, $\gamma_1 = 0.1$ and $\gamma_2 = 1$ for \BOIA, and $\gamma_1 = 0.01$ for \Kandinsky.

Concerning the number of ensembles, a shared hyperparameter for \DeepEns and \method, we chose $5$ for \MNISTHalf, \MNISTShortcut, and \Kandinsky, and $20$ for \BOIA, considering the larger number of reasoning shortcuts in the latter. In \BOIA there $7^{21} \cdot 57^{114} \cdot 280^{280}$, compared to the $49$ present in \MNISTShortcut~\cite{marconato2023not}.

Additionally, we observed that \LTN behavior is quite unstable, resulting in sub-optimal models regardless of hyperparameter choices. To address this, we introduced an entropy penalization of $0.3$ to aid model convergence. The same approach was applied to \DPL on \Kandinsky, where this value was set to $0.2$.

Regarding the number of \Laplace sampling and \MCDrop, we observed no big difference, thus we selected $30$ for our experiments. 

Specifically for \Laplace, we applied the Laplace approximation to the concept layer of the pre-trained frequentist model. 
For \MNISTHalf and \MNISTShortcut, we used the Kronecker approximation of the Hessian matrix, but we could not use it for \BOIA due to excessive time and memory requirements.  For \BOIA, we switched to the diagonal approximation.

For \ProbCBM, the optimization involves the sum of two losses: a cross-entropy loss and a concept loss. The concept loss, denoted as $\calL_{\text{concept}}$, is defined as $\calL_{\text{concept}} = \calL_{BCE} + \lambda_{\text{KL}} \calL_{KL}$~\citep{kim2023probabilistic}. Here, $\calL_{BCE}$ represents the standard binary cross-entropy, and $\calL_{KL}$ serves as a regularization term for the Gaussian distribution, defined as $\calL_{KL} = \KL (N(\mu_c, \text{diag}(\sigma_c))||N(0, I))$. Since we lack concept supervision during training, the weight associated with the binary cross-entropy is set to $0$. The regularization term $\lambda_{kl}$ is maintained at $0.001$ for both examples, as setting it too high led to sub-optimal models in our specific context.

Finally, concerning the active learning example on \Kandinsky, we found that to achieve optimal convergence while still learning concepts, effective parameters for the concept supervision loss are $25$ for \DPL and $10$ for \DPL + \method.

\subsection{Architectures and Model Details}

\paragraph{\MNISTAdd:} The architectures employed for \MNISTShortcut and \MNISTHalf are essentially the one implemented in \citep{marconato2023not}, outlined in \cref{tab:encoder-mnist}. The only difference among the two datasets is the size of the bottleneck, which depends on the number of concepts. For \MNISTHalf, the last layer dimension is 10, while for \MNISTShortcut is 20. For \ProbCBM, the architecture is shown in \cref{tab:encoder-pcbm}. Additionally, for \SL only, we introduced an MLP with a hidden size of 50 neurons. 
This MLP takes the logits of both concepts as input and processes them to produce the final label.

\begin{table}[!t]
    \centering
    \caption{Encoder architecture for \MNISTHalf, \MNISTShortcut.}
    \begin{tabular}{lccccc}
        \toprule
        & \textsc{Input}  & \textsc{Layer Type} & \textsc{Parameter} & \textsc{Activation} \\
        \midrule
        & $(28, 56, 1)$ & Convolution & depth=32, kernel=4, stride=2, padding=1  & ReLU   \\
        & $(32, 14, 28)$ & Dropout & $p = 0.5$ & & \\
        & $(32, 14, 28)$ & Convolution & depth=64, kernel=4, stride=2, padding=1  & ReLU   \\
        & $(64, 7, 14)$ & Dropout & $p = 0.5$ & \\
        & $(64, 7, 14)$ & Convolution & depth=$128$, kernel=4, stride=2, padding=1  & ReLU   \\
        & $(128, 3, 7)$ & Flatten & \\
        & $(2688)$ & Linear & dim=20, bias = True & \\
        \bottomrule
    \end{tabular}
    \label{tab:encoder-mnist}
\end{table}

\begin{table}[!t]
    \centering
    \caption{Encoder architecture for \Kandinsky}
    \begin{tabular}{lcccccp{7cm}}
        \toprule
        & \textsc{Input}  & \textsc{Layer Type} & \textsc{Parameter} & \textsc{Activation} \\
        \midrule
        & $(28, 28, 3)$ & Flatten & & \\
        & $(2352)$ & Linear & dim=256, bias=True & ReLU \\
        & $(256)$ & Dropout & $p = 0.5$ & & \\
        & $(256)$ & Linear & dim=128, bias=True & ReLU \\
        & $(128)$ & Dropout & $p = 0.5$ & & \\
        & $(128)$ & Linear & dim=8, bias = True & & \\
        \bottomrule
    \end{tabular}
    \label{tab:encoder-kand}
\end{table}

\begin{table}[!t]
    \centering
    \caption{Encoder architecture for \ProbCBM}
    \begin{tabular}{lcccccp{7cm}}
        \toprule
        & \textsc{Input}  & \textsc{Layer Type} & \textsc{Parameter} & \textsc{Activation} & \textsc{Note} \\
        \midrule
        & $(28, 56, 1)$ & Convolution & depth=32, kernel=4, stride=2, padding=1  & ReLU &  \\
        & $(32, 14, 28)$ & Dropout & $p = 0.5$ & & \\
        & $(32, 14, 28)$ & Convolution & depth=64, kernel=4, stride=2, padding=1  & ReLU &  \\
        & $(64, 7, 14)$ & Dropout & $p = 0.5$ & \\
        & $(64, 7, 14)$ & Convolution & depth=$128$, kernel=4, stride=2, padding=1  & ReLU &  \\
        & $(128, 3, 7)$ & Flatten & \\
        & $(2688)$ & Linear & dim=160, bias = True & & Head for $\mu$\\
        & $(2688)$ & Linear & dim=160, bias = True & & Head for $\sigma$\\
        \bottomrule
    \end{tabular}
    \label{tab:encoder-pcbm}
\end{table}

\paragraph{\BOIA:} Likewise for \citep{marconato2023not}, \BOIA images have been preprocessed, as detailed in~\citep{sawada2022concept}, employing a Faster-RCNN~\citep{ren2015fasterrcnn} pre-trained on MS-COCO and fine-tuned on BDD-100k, for initial preprocessing. Subsequently, we employ a pre-trained convolutional layer from~\citep{sawada2022concept} to extract linear features with a dimensionality of 2048. These linear features serve as inputs for the NeSy model, implemented with a fully-connected classifier network as outlined in~\cref{tab:dpl-classifier-boia} for \DPL, and in~\cref{tab:pcbm-classifier-boia} for \ProbCBM.

\paragraph{\Kandinsky:} For \Kandinsky, we chose to use an MLP-based encoder, as depicted in \cref{tab:encoder-kand}.

\begin{table}[!t]
    \centering
    \caption{\DPL architecture for \BOIA}
    \begin{tabular}{cccccccp{7cm}}
        \toprule
        & \textsc{Input}  & \textsc{Layer Type} & \textsc{Parameter} & \textsc{Activation} & \textsc{Note} \\
        \midrule
        & $(2048, 1) $ & Linear & dim=512, bias=True & ReLU &  \\
        & $(512) $ & Dropout &  &  &  \\
        & $(512) $ & Linear & dim=21, bias=True &  & Head for \texttt{move\_forward} action \\
        & $(512) $ & Linear & dim=12, bias=True &  & Head for \texttt{stop} action \\
        & $(512) $ & Linear & dim=12, bias=True &  & Head for \texttt{turn\_left} action \\
        & $(512) $ & Linear & dim=12, bias=True &  & Head for \texttt{turn\_right} action \\
        \bottomrule
    \end{tabular}
    \label{tab:dpl-classifier-boia}
\end{table}

\begin{table}[!t]
    \centering
    \caption{\ProbCBM architecture for \BOIA}
    \begin{tabular}{llllllcp{7cm}}
        \toprule
        & \textsc{Input}  & \textsc{Layer Type} & \textsc{Parameter} & \textsc{Activation} & \textsc{Note} \\
        \midrule
        & $(2048, 1) $ & Linear & dim=336, bias=True & & Head for $\mu$ \\
        & $(2048, 1) $ & Linear & dim=336, bias=True & & Head for $\sigma$ \\
        \bottomrule
    \end{tabular}
    \label{tab:pcbm-classifier-boia}
\end{table}

\subsection{Active Learning Setup}
\label{appendix:active-learning}

The active learning setup proposed is based on \Kandinsky. The examples consist of three figures, each composed of three objects. Each object is characterized by shape and color properties. In this setup, the model processes each object independently, producing a 6-dimensional vector that includes the one-hot encoding of shapes and colors, with each dimension representing one of the three shapes or colors. Overall, for each figure the model produces an 18-dimensional vector. The supervision is provided for a single object and consists of its shape and color.

To configure the experiment, we masked all the concepts in the training set, revealing them only when the object is chosen for supervision. Therefore, the active learning setup does not involve adding new examples to the training set but rather unveiling concepts in the existing ones.

The model was initialized by providing supervision on 10 \texttt{red-squares}, that was sufficient to allow it to achieve optimal accuracy (by learning a reasoning shortcut). Notice that without any initial concept-level supervision, the model was incapable of achieving decent accuracy results because of the complexity of the knowledge.

At each step of the active learning setup, both \DPL and \method compute the Shannon entropy on an object, defined as:
\[
    H(\mathbf{s}, \mathbf{c} | x) = - \sum_{i,j} p_{\theta}(s_i, c_j | x) \log p_{\theta}(s_i, c_j | x)
\]

where $p_{\theta}(s_i, c_j | x)$ is the probability of shape $s_i$ and color $c_j$ for object $x$. Plain \DPL computes the probability of a certain configuration of concepts for an object $x$ as: 
\[
    p_{\theta}(s_i, c_j | x) = p_{\theta}(s_i | x) p_{\theta}(c_j | x)
\]

while \method computes it as: 

\[
    p_{\vtheta}(s_i, c_j | x) = \cfrac{1}{|\vtheta|} \sum_{\theta' \in \vtheta} p_{\theta'}(s_i | x) p_{\theta'}(c_j | x)
\]

Where $\vtheta$ is the learned ensemble. 
The top 10 elements with largest entropy are then selected to acquire concept-level supervision. The baseline method \DPL + \texttt{random} ignores concept uncertainty altogether and simply chooses 10 random elements from the training set.

\subsection{Runtime Comparison}
\label{sec:runtimes}

\begin{table*}[!h]
    \centering
    \scriptsize
    \caption{Wall-clock time for a single batch in \MNISTHalf}

\begin{tabular}{lcccc}
    \toprule
    \textsc{Method}
        & \textsc{Train - batch}
        & \textsc{Inference - batch}
        & \textsc{Pre-process}
    \\
    \midrule
    \DPL & $0.011$ & $0.001$ &
    \\
    \rowcolor[HTML]{EFEFEF}
    \DPL + \MCDrop & & $0.026$ &
    \\
    \DPL + \Laplace & & $0.019$ & $32.249$
    \\
    \rowcolor[HTML]{EFEFEF}
    \DPL + \ProbCBM & $0.043$ & $0.043$ &
    \\
    \DPL + \DeepEns & $0.017$ & $0.027$ &
    \\
    \rowcolor[HTML]{EFEFEF}
    \DPL + \method & $0.073$ & $0.018$ &
    \\
    \midrule
    \SL & $0.010$ & $0.001$ &
    \\
    \rowcolor[HTML]{EFEFEF}
    \SL + \MCDrop & & $0.017$ &
    \\
    \SL + \Laplace & & $0.013$ & $21.990$
    \\
    \rowcolor[HTML]{EFEFEF}
    \SL + \ProbCBM & $0.043$ & $0.054$ &
    \\
    \SL + \DeepEns & $0.033$ & $0.020$ &
    \\
    \rowcolor[HTML]{EFEFEF}
    \SL + \method & $0.085$ & $0.014$ &
    \\
    \midrule
    \LTN & $0.010$ & $0.001$ &
    \\
    \rowcolor[HTML]{EFEFEF}
    \LTN + \MCDrop & & $0.018$ &
    \\
    \LTN + \Laplace & & $0.014$ & $26.434$
    \\
    \rowcolor[HTML]{EFEFEF}
    \LTN + \ProbCBM & $0.035$ & $0.045$ &
    \\
    \LTN + \DeepEns & $0.031$ & $0.017$ &
    \\
    \rowcolor[HTML]{EFEFEF}
    \LTN + \method & $0.072$ & $0.011$ &
    \\
    \bottomrule
\end{tabular}
    \label{tab:halfmnist-time}
\end{table*}

To estimate the order of magnitude of \method, we measured the wall-clock time of a run on \MNISTHalf. Specifically, we computed the wall-clock time of the model inference on a single batch (all batches have the same dimension, i.e. $64$). For both \DeepEns and \method, we evaluated only a single model of the ensemble, namely the last model out of $5$. In this way, we isolate the time from the number of ensembles. We do not report the training time for \MCDrop and \Laplace as they are applied on pre-trained models. Additionally, we account for the pre-processing time of \Laplace, which is needed to compute the Hessian matrix for the Laplace approximation. However, it is important to note that this step is done only once. 

As shown in \cref{tab:halfmnist-time}, the inference time of \method is comparable to all the competitors, as long as the ensemble is not too big. In terms of training, although we take more time due to the overhead associated with the retrieval of the $p(C|x)$ for ensemble members and the computation of the loss function, we are comparable with \DeepEns.

\section{Theoretical Material}
\label{sec:proofs}

In this section, we include the proofs and the theoretical material needed for the main text. Before moving to the proofs of the main text claims, we report the statement of Lemma 1, Theorem 2, and Proposition 3 from \citep{marconato2023not} for ease of comparison. These rely on two assumptions, cf. \cref{sec:method}:
\begin{itemize} 

    \item[\textbf{A1}.] \textit{Invertibility}: Each $\vx$ is generated by a unique $\vg$, \ie there exists a function $f:\vx \mapsto \vg$ such that $p^*(\vG \mid \vX) = \Ind{ \vG - f(\vX) }$.

    \item[\textbf{A2}.] \textit{Determinism}:  The knowledge $\BK$ is \textit{deterministic}, \ie there exists a function $\beta_\BK:\vg \mapsto \vy$ such that $p^*(\vY \mid \vG; \BK) = \Ind{\vY = \beta_\BK (\vg)}$. 

\end{itemize}
We begin by reporting three useful results from \citep{marconato2023not} that will be used in our proofs. First of all, we will indicate with $\mathsf{supp} (\vG)$ the support of the probability distribution given by $p^*(\vG)$.

\begin{lemma}
    \label{lemma:abstraction-from-lh}
    It holds that:
    (\textit{ii}) Under \textbf{A1}, there exists a bijection between the deterministic concept distributions $p_\theta(\vC \mid \vX)$ {that are constant over the support of $p(\vX \mid \vg)$, for each $\vg \in \mathrm{supp}(\vG)$,} and the deterministic distributions of the form $p_\theta(\vC \mid \vG)$.
\end{lemma}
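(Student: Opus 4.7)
The plan is to identify each deterministic conditional with its underlying function: $p_\theta(\vC \mid \vx) = \Ind{\vC = \psi(\vx)}$ for some $\psi: \vX \to \vC$, and analogously $p_\theta(\vC \mid \vg) = \Ind{\vC = \alpha(\vg)}$ for some $\alpha: \vG \to \vC$. Under \textbf{A1}, the posterior $p^*(\vG \mid \vX) = \Ind{\vG = f(\vX)}$ collapses to the deterministic map $f$, so every $\vx \in \mathrm{supp}(p^*(\vX))$ has a unique ground-truth concept $f(\vx)$, and $\mathrm{supp}(p(\vX \mid \vg)) \subseteq f^{-1}(\vg)$ for each $\vg \in \mathrm{supp}(\vG)$. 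I will use this structure to pass between $\psi$ and $\alpha$ via composition with $f$.

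The forward direction is $\Phi: \alpha \mapsto \psi_\alpha := \alpha \circ f$. I need to check (i) $\psi_\alpha$ defines a valid deterministic distribution $p_\theta(\vC \mid \vX)$, which is immediate, and (ii) it is constant on $\mathrm{supp}(p(\vX \mid \vg))$ for each $\vg$: any $\vx, \vx'$ in this support satisfy $f(\vx) = f(\vx') = \vg$, so $\psi_\alpha(\vx) = \alpha(\vg) = \psi_\alpha(\vx')$. The backward direction is $\Psi: \psi \mapsto \alpha_\psi$, where $\alpha_\psi(\vg)$ is defined as the common value of $\psi$ on $\mathrm{supp}(p(\vX \mid \vg))$; this is well-defined precisely by the constancy hypothesis, and for $\vg \notin \mathrm{supp}(\vG)$ the value is irrelevant since it carries zero mass under $p^*$.

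It then remains to verify that $\Phi$ and $\Psi$ are mutual inverses on the two restricted classes (deterministic concept distributions on $\vX$ constant on each fiber-support, and deterministic concept distributions on $\vG$, both taken modulo equivalence $p^*$-almost-everywhere). For $\Psi \circ \Phi$: starting from $\alpha$, we get $\psi_\alpha = \alpha \circ f$, and then $\alpha_{\psi_\alpha}(\vg) = \psi_\alpha(\vx) = \alpha(f(\vx)) = \alpha(\vg)$ for any $\vx \in \mathrm{supp}(p(\vX\mid \vg))$, recovering $\alpha$ on $\mathrm{supp}(\vG)$. For $\Phi \circ \Psi$: starting from $\psi$ constant on each fiber-support, we get $\alpha_\psi$ and then $\psi_{\alpha_\psi}(\vx) = \alpha_\psi(f(\vx)) = \psi(\vx)$ on $\mathrm{supp}(p^*(\vX)) = \bigcup_\vg \mathrm{supp}(p(\vX \mid \vg))$, recovering $\psi$ almost everywhere.

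The main subtlety I expect to address is the precise sense in which the bijection is stated: a function $\psi$ on $\vX$ is only pinned down on $\mathrm{supp}(p^*(\vX))$ by the corresponding $\alpha$, since outside of the fibers $\mathrm{supp}(p(\vX \mid \vg))$ the value of $\psi$ is unconstrained. Hence the bijection must be interpreted as an equality of equivalence classes under $p^*$-almost-everywhere agreement (on both sides), rather than a pointwise identification of functions. Once this is made explicit, the rest of the argument is a straightforward verification that composition with $f$ and its (set-valued) section on the relevant support induce inverse bijections.
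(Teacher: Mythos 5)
Your argument is correct. Note that the paper does not actually prove this lemma: it is restated verbatim from Marconato et al.\ (2023) as imported background, so there is no in-paper proof to compare against. Your construction --- identifying deterministic kernels with functions $\psi$ and $\alpha$, passing between them via composition with $f$ and via the common value on each $\mathrm{supp}(p(\vX\mid\vg))$, and working modulo $p^*$-almost-everywhere equivalence on both sides --- is the natural proof and is sound; the only point worth making explicit is that your backward map $\Psi$ coincides with the marginalization $p_\theta(\vC\mid\vG)=\int p_\theta(\vC\mid\vx)\,p^*(\vx\mid\vG)\,\de\vx$ that defines $p_\theta(\vC\mid\vG)$ in the first place (which it does, precisely because $\psi$ is constant on each fiber support), so that the bijection is the canonical one rather than an arbitrary set-theoretic pairing.
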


\begin{theorem}
    \label{thm:mc-det-opts}
    Let $\calA$ be the set of mappings $\alpha: \vg \mapsto \vc$ induced by all possible deterministic distributions $p_\theta(\vC \mid \vG)$, \ie each $p_\theta(\vC \mid \vG) = \Ind{\vC = \alpha(\vG)}$ for exactly one $\alpha \in \calA$.
    Under \textbf{A1} and \textbf{A2}, the number of deterministic optima $p_\theta(\vC \mid \vG)$ is: 
    \[ 
        \textstyle
        \sum_{\alpha \in \calA} \Ind{
            \bigwedge_{\vg \in \mathsf{supp}(\vG)}
                (\beta_\BK \circ \alpha)(\vg) = \beta_\BK(\vg)
        }
        \label{eq:model-count}
    \]
    In particular, the set of optimal maps $\calA^*$ is given by:
    \[  \textstyle
        \calA^* = \big\{ \alpha \in \calA: \; \bigwedge_{\vg \in \mathsf{supp}(\vG)} (\beta_\BK \circ \alpha)(\vg) = \beta_\BK(\vg)  \big\}
    \]
\end{theorem}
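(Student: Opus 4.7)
The plan is to reduce the claim about deterministic concept extractors $p_\theta(\vC\mid\vX)$ to a counting problem over maps $\alpha:\{0,1\}^k\to\{0,1\}^k$ via \cref{lemma:abstraction-from-lh}(ii), and then identify exactly which such maps are optimal. By that lemma, under \textbf{A1} there is a bijection between deterministic $p_\theta(\vC\mid\vX)$ (constant over each $p^*(\vX\mid\vg)$-fibre) and deterministic $p_\theta(\vC\mid\vG)=\Ind{\vC=\alpha(\vG)}$ with $\alpha\in\calA$. So enumerating deterministic optima at the level of $(\vC\mid\vX)$ is the same as enumerating optimal $\alpha$'s, which is what the formula claims.

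Next, I would make the induced label distribution explicit. Starting from $p_\theta(\vY\mid\vX;\BK)=\sum_{\vc}p^*(\vY\mid\vc;\BK)\,p_\theta(\vc\mid\vX)$ and invoking \textbf{A2}, each $p^*(\vY\mid\vc;\BK)$ is the point mass at $\beta_\BK(\vc)$; invoking \textbf{A1}, $p_\theta(\vc\mid\vX)=\Ind{\vc=\alpha(f(\vX))}$. Composing yields $p_\theta(\vY\mid\vX;\BK)=\Ind{\vY=(\beta_\BK\circ\alpha\circ f)(\vX)}$. Since by construction the data-generating label at $\vx$ is $\beta_\BK(f(\vx))$, maximizing the log-likelihood on the data distribution forces $(\beta_\BK\circ\alpha)(\vg)=\beta_\BK(\vg)$ for $p^*(\vG)$-almost every $\vg$, i.e.\ for every $\vg\in\mathsf{supp}(\vG)$. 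This is precisely the indicator condition appearing inside the sum in \cref{eq:model-count}, and the corresponding characterization of $\calA^*$ is the set of $\alpha$'s satisfying it. A brief strict-convexity remark would confirm that no non-deterministic $p_\theta(\vC\mid\vG)$ can strictly improve on a deterministic optimum once we restrict to the deterministic class being counted.

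The main subtlety I expect will be the handling of behaviour off $\mathsf{supp}(\vG)$: two distinct maps $\alpha,\alpha'\in\calA$ can agree on $\mathsf{supp}(\vG)$ yet differ elsewhere, and both would induce the same conditional distribution $p_\theta(\vC\mid\vG)$ restricted to the support, hence correspond (via \cref{lemma:abstraction-from-lh}(ii)) to the same deterministic $p_\theta(\vC\mid\vX)$ on the observed inputs. I would therefore be careful to declare the enumeration in $\calA$ to be over equivalence classes that agree on $\mathsf{supp}(\vG)$, so that the indicator sum in \cref{eq:model-count} is counting genuinely distinct deterministic optima rather than accidental duplicates. With this bookkeeping fixed, the proof reduces to the two clean steps above: (i) every optimal deterministic $p_\theta(\vC\mid\vX)$ comes from some $\alpha\in\calA$ via \textbf{A1}, and (ii) under \textbf{A2} optimality is equivalent to $\beta_\BK\circ\alpha=\beta_\BK$ on $\mathsf{supp}(\vG)$.
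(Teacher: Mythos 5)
The paper itself contains no proof of this statement: Theorem~\ref{thm:mc-det-opts} is imported from \citep{marconato2023not} and is only restated in \cref{sec:proofs} ``for ease of comparison,'' so there is no in-paper argument to compare against. Your derivation is correct and is the standard one: reduce to maps $\alpha$ via \cref{lemma:abstraction-from-lh}(\textit{ii}), note that under \textbf{A1}--\textbf{A2} the induced label predictor is $\Ind{\vY = (\beta_\BK \circ \alpha \circ f)(\vX)}$ while the data label is $\beta_\BK(f(\vx))$, and observe that a deterministic predictor attains the likelihood optimum iff it never misses, i.e.\ iff $\beta_\BK \circ \alpha = \beta_\BK$ on $\mathsf{supp}(\vG)$. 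Your bookkeeping concern about maps that agree on $\mathsf{supp}(\vG)$ but differ off-support is legitimate; the theorem's phrasing (``for exactly one $\alpha \in \calA$'') implicitly resolves it by identifying each deterministic $p_\theta(\vC \mid \vG)$ with a single representative map, so your equivalence-class fix is consistent with the intended reading.
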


\begin{proposition}
    \label{prop:structure-of-nondet-ops}
    For probabilistic logic approaches (including DPL and SL):
    (\textit{i}) All convex combinations of two or more deterministic optima $p_\theta(\vC \mid \vX)$ of the likelihood are also (non-deterministic) optima. {However, not all convex combinations can be expressed in DPL and SL.}
    (\textit{ii}) Under \textbf{A1} and \textbf{A2}, all optima of the likelihood can be expressed as a convex combination of deterministic optima.
    (\textit{iii}) If \textbf{A2} does not hold, there may exist non-deterministic optima that are not convex combinations of deterministic ones. These may be the only optima.
\end{proposition}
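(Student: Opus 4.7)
The plan is to construct $\vomega$ explicitly as a product distribution over the maps $\alpha \in \calA$, with one factor per value $\vg$, and then verify the marginalization identity by direct computation. Intuitively, this realizes the standard fact that a conditional distribution over a finite domain can be sampled by drawing, independently for each input point, its image from the corresponding conditional.

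More concretely, fix $p(\vC \mid \vG)$ and, where needed, extend $p(\vc \mid \vg)$ to an arbitrary distribution for each $\vg \notin \mathsf{supp}(\vG)$. Since $\calA = \{\alpha : \{0,1\}^k \to \{0,1\}^k\}$ is finite, I would define
\[
\omega_\alpha := \prod_{\vg \in \{0,1\}^k} p(\alpha(\vg) \mid \vg).
\]
Non-negativity is immediate, and $\sum_{\alpha} \omega_\alpha = \prod_{\vg} \bigl(\sum_{\vc} p(\vc \mid \vg)\bigr) = 1$, giving $\norm{\vomega}_1 = 1$. To verify the decomposition, fix $\vg$ and $\vc$ and compute
\[
\sum_{\alpha \in \calA} \omega_\alpha \Ind{\vc = \alpha(\vg)}
 = \sum_{\alpha : \alpha(\vg) = \vc} \prod_{\vg'} p(\alpha(\vg') \mid \vg')
 = p(\vc \mid \vg) \prod_{\vg' \neq \vg} \sum_{\vc'} p(\vc' \mid \vg')
 = p(\vc \mid \vg),
\]
where for each $\vg' \neq \vg$ the inner sum over possible values of $\alpha(\vg')$ marginalizes its factor to $1$. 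This settles the unrestricted claim.

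For the second part, I would invoke \cref{thm:mc-det-opts}: under \textbf{A1}--\textbf{A2}, a distribution $p_\theta(\vC \mid \vG)$ attains perfect label accuracy only if, for every $\vg \in \mathsf{supp}(\vG)$, $p_\theta(\vc \mid \vg) > 0$ forces $\beta_\BK(\vc) = \beta_\BK(\vg)$. Plugging such a $p_\theta$ into the product construction, any $\alpha$ with $\omega_\alpha > 0$ must have $\alpha(\vg) \in \beta_\BK^{-1}(\beta_\BK(\vg))$ for every $\vg \in \mathsf{supp}(\vG)$; that is, $(\beta_\BK \circ \alpha)(\vg) = \beta_\BK(\vg)$ on the support, which is exactly the defining condition of $\calA^*$ in \cref{thm:mc-det-opts}. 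Hence the sum over $\calA$ can be restricted to $\calA^*$ without changing its value.

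The main subtlety I expect concerns the treatment of $\vg \notin \mathsf{supp}(\vG)$: the definition of $\calA^*$ only constrains $\alpha$ on the support, so the arbitrary extension of $p(\vC \mid \vg)$ (and hence of $\omega_\alpha$) off-support is a harmless bookkeeping choice that affects neither the decomposition identity nor membership in $\calA^*$. Beyond this, the proof is a direct calculation.
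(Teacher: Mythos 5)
There is a mismatch between what you prove and what the statement asserts. \cref{prop:structure-of-nondet-ops} is imported by this paper from \citep{marconato2023not} without proof, so there is no in-paper argument to compare against directly; the closest in-paper proof is that of \cref{lemma:decomposition-of-p}, and your argument is essentially a proof of \emph{that} lemma. For that purpose your product construction $\omega_\alpha = \prod_{\vg} p(\alpha(\vg)\mid\vg)$ is correct and arguably nicer than the paper's approach: the paper only argues existence of $\vomega$ by exhibiting a solvable linear system in $(2^k)^{2^k}$ unknowns, whereas you give an explicit, verifiable solution, and your restriction to $\calA^*$ via \cref{thm:mc-det-opts} is cleaner than the paper's proof by contradiction. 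Your remark about off-support values of $\vg$ being a harmless bookkeeping choice is also right.

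However, as a proof of \cref{prop:structure-of-nondet-ops} the proposal has genuine gaps. First, parts (\textit{i}) and (\textit{iii}) are not addressed at all. Part (\textit{i}) requires observing that $p_\theta(\vy\mid\vx;\BK)=\sum_{\vc}p(\vy\mid\vc;\BK)\,p_\theta(\vc\mid\vx)$ is linear in the concept distribution, so a convex combination of distributions each placing all mass on label-consistent concepts still assigns probability one to the correct label; and the caveat that DPL and SL cannot express all such combinations stems from their factorized concept encoders ($\vC$ conditionally independent given $\vx$), which cannot represent arbitrary correlated mixtures. Part (\textit{iii}) requires a counterexample with non-deterministic knowledge, which your argument cannot supply since you invoke \textbf{A2} throughout. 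Second, even for part (\textit{ii}) you work at the level of $p(\vC\mid\vG)$, while the proposition concerns optima $p_\theta(\vC\mid\vX)$: passing between the two requires \textbf{A1} together with \cref{lemma:abstraction-from-lh}(\textit{ii}), and an optimum $p_\theta(\vC\mid\vX)$ may vary across inputs $\vx$ sharing the same $\vg$, so a mixture of maps $\alpha\circ f$ with $\vx$-independent weights does not recover it without an additional argument (or a constancy assumption on the support of $p(\vX\mid\vg)$, as in \cref{lemma:abstraction-from-lh}). You should either state that you are proving the $\vG$-level decomposition of \cref{lemma:decomposition-of-p} or add the missing translation step.
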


\subsection{Proof of Lemma~\ref{lemma:decomposition-of-p}}

\begin{lemma*}
    Take any input-concept distribution $p(\vC \mid \vX)$ and let $p(\vC \mid \vG)$ be the concept-concept distribution entailed by it.  Then there exists (at least one) vector $\vomega$ such that $p$ is a convex combination of maps $\alpha \in \calA$, that is:
    $$
        p(\vC \mid \vG)
        = \sum_{\alpha \in \calA} \omega_\alpha \Ind{ \vC = \alpha(\vG) }
        := p_\omega(\vC \mid \vG)
    $$
    parameterized by $\vomega \ge 0$, $\norm{\vomega}_1 = 1$.
    Moreover, under invertibility (\textbf{A1}) and determinism (\textbf{A2}), the set of all maps $\calA$ restricts to the set of optimal maps $\calA^*$.  
\end{lemma*}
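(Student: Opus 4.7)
My plan is to construct the weights $\vomega$ explicitly via a product formula, verify the identity by a factorization-of-sums argument, and then obtain the restriction to $\calA^*$ in the optimal case essentially for free from a support analysis.

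First, for each $\alpha \in \calA$ I would set
\[
    \omega_\alpha := \prod_{\vg \in \mathsf{supp}(\vG)} p(\alpha(\vg) \mid \vg) .
\]
Non-negativity is immediate. For normalization, I would use the fact that $\calA$ is in bijection with the set of all functions $\mathsf{supp}(\vG) \to \{0,1\}^k$, so that summing over $\alpha$ corresponds to choosing $\alpha(\vg)$ independently for each $\vg$. Distributing products over sums then gives $\sum_{\alpha \in \calA} \omega_\alpha = \prod_{\vg} \sum_{\vc} p(\vc \mid \vg) = 1$. To verify the decomposition identity, I would fix any $\vg_0 \in \mathsf{supp}(\vG)$ and $\vc_0 \in \{0,1\}^k$ and pull the constraint $\alpha(\vg_0) = \vc_0$ out of the product:
\[
    \sum_{\alpha \in \calA} \omega_\alpha \Ind{\vc_0 = \alpha(\vg_0)} = p(\vc_0 \mid \vg_0) \cdot \prod_{\vg \neq \vg_0} \sum_{\vc} p(\vc \mid \vg) = p(\vc_0 \mid \vg_0),
\]
which is exactly the required $p(\vC \mid \vG) = p_\vomega(\vC \mid \vG)$. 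This alone already proves the first part of the lemma; no further structure on $p$ is needed.

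For the second claim, I would observe that $\omega_\alpha > 0$ forces $p(\alpha(\vg) \mid \vg) > 0$ for every $\vg \in \mathsf{supp}(\vG)$, so the support restriction is driven by the pointwise support of $p$. Under \textbf{A1}--\textbf{A2}, an optimal $p_\theta(\vC \mid \vG)$ must reproduce $\beta_\BK(\vg)$ on every $\vg \in \mathsf{supp}(\vG)$, so its support is contained in $\{\vc : \beta_\BK(\vc) = \beta_\BK(\vg)\}$. Hence any $\alpha$ with $\omega_\alpha > 0$ satisfies $\beta_\BK(\alpha(\vg)) = \beta_\BK(\vg)$ for all $\vg \in \mathsf{supp}(\vG)$, and by \cref{thm:mc-det-opts} this is exactly membership in $\calA^*$. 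The sum over $\calA$ in the decomposition can therefore be restricted to $\calA^*$, giving the second claim.

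The main obstacle I expect is not the algebra --- which is essentially Fubini applied to a product of simplices --- but the second claim: rigorously tying "$p_\theta(\vC \mid \vG)$ is optimal" to the pointwise support condition $p(\vc \mid \vg) > 0 \Rightarrow \beta_\BK(\vc) = \beta_\BK(\vg)$. This requires invoking \textbf{A1} (via \cref{lemma:abstraction-from-lh}) to translate the model's $p_\theta(\vC \mid \vX)$ into a well-defined $p_\theta(\vC \mid \vG)$, and \textbf{A2} to collapse label optimality into a deterministic constraint through $\beta_\BK$; once that bridge is built, the support-based reduction to $\calA^*$ is immediate.
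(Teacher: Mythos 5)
Your proof is correct, and for the first claim it takes a genuinely more constructive route than the paper. The paper writes the decomposition as an underdetermined linear system ($2^k\cdot 2^k$ equations in $(2^k)^{2^k}$ unknowns) and simply asserts it can always be solved; you instead exhibit an explicit solution, the product measure $\omega_\alpha = \prod_{\vg} p(\alpha(\vg)\mid\vg)$, and verify normalization and the marginal identity by distributing the product over the sums. This buys you something concrete: the support of $\vomega$ becomes transparent ($\omega_\alpha>0$ iff $p(\alpha(\vg)\mid\vg)>0$ for every $\vg$), which is exactly what makes your second step clean. For the restriction to $\calA^*$, the paper argues by contradiction (a non-optimal $\alpha'$ with positive weight would place mass on concepts yielding the wrong label), whereas you argue the contrapositive directly from the pointwise support condition $p(\vc\mid\vg)>0\Rightarrow\beta_\BK(\vc)=\beta_\BK(\vg)$ and the characterization of $\calA^*$ in \cref{thm:mc-det-opts}; these are logically equivalent, but yours is tighter because the explicit $\vomega$ makes the support argument immediate rather than requiring a hypothetical decomposition with a bad $\alpha'$. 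Two minor bookkeeping points: the paper's $\calA$ consists of maps on all of $\{0,1\}^k$, so your bijection with functions on $\mathsf{supp}(\vG)$ implicitly lumps together maps agreeing on the support (or requires extending the product over all $\vg$ for which $p(\cdot\mid\vg)$ is defined) --- harmless, since the decomposition only ever evaluates $\alpha$ on the support, and arguably your restriction is the more careful choice given that $p(\vx\mid\vg)$ is only defined there; and you correctly flag that the bridge from label optimality to the pointwise support condition is where \textbf{A1} (via \cref{lemma:abstraction-from-lh}) and \textbf{A2} actually do work, which the paper's proof also relies on but states less explicitly.
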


\begin{proof}
    By definition, $p(\vC \mid \vG)$ is given by:
    \[  
        p(\vC \mid \vG) := \bbE_{p(\vx \mid \vg)} p(\vC \mid \vx)
    \]
    For each $\vg \in \{0,1\}^k$, $p(\vC \mid \vg)$ can be written as a convex combination of the maps $\alpha \in \calA$: 
    \[  
    \begin{aligned}
        p (\vC \mid \vg) &=  \sum_{\vc} p(\vc \mid \vg) \Ind{ \vC = \vc } \\
        &= \sum_\vc \left[  \sum_{\alpha \in \calA} \omega_\alpha  \Ind{\alpha(\vg) = \vc} \right] \Ind{\vC = \vc} \\
        & = \sum_{\alpha \in \calA} \omega_\alpha \Ind{ \vC = \alpha(\vg)}
    \end{aligned}
    \]
    where in the last line we swapped the summation and used the condition that $\alpha(\vg)= \vc$. Altogether, this yields for the single $\vg$ that the following must hold: 
    \[
        \sum_{\alpha \in \calA: \, \alpha(\vg) = \vc }  \omega_\alpha = p(\vc \mid \vg) 
    \]
    Combining all the cases this gives a system of $2^k \cdot {2^k}$ equations, one for each $\vg$ and for each $\vc$, for a total of $(2^k)^{2^k}$ variables $\vomega$:
    \[
        \sum_{\alpha \in \calA: \, \alpha(\vg) = \vc }  \omega_\alpha = p(\vc \mid \vg), \quad \forall \vg \in \{ 0, 1\}^k, \; \forall \vc \in \{ 0, 1\}^k
    \]
    This shows that the linear system can always be solved, proving that $\calA$ spans the space of $p(\vC \mid \vG)$.
    
    Next, under \textbf{A1} and \textbf{A2} we have that to have an optimal model we only need to consider the optimal elements $\alpha \in \calA^*$, where the set $\calA^*$ is defined from \cref{thm:mc-det-opts}:
    \[ \textstyle
        \calA^* = \big\{\alpha \in \calA: \; \bigwedge_{\vg \in \mathrm{supp}(\vG)} (\beta_\BK \circ \alpha) (\vg) = \beta_\BK (\vg)   \big\}
    \]
    We proceed by contradiction. Suppose there exists one $\alpha' \not \in \calA^*$ such that: 
    \[
        p(\vC \mid \vG) = \omega(\alpha') \Ind{\vC = \alpha' (\vg)} + \sum_{\alpha \in \calA^*} \omega_\alpha \Ind{\vC = \alpha (\vg)}
    \]
    is still optimal. Notice that there exists at least one $\vg$ such that $\alpha'(\vg) \neq \alpha(\vg)$, $\forall \alpha \in \calA^*$. This means that for those values $\vg$ we have $(\beta_\BK \circ \alpha') (\vg) \neq \beta_\BK (\vg)$. 
    Therefore, the NeSy predictor will result in a suboptimal model, since it does not place the mass on concepts attaining the same label. This proves the contradiction, yielding the claim.
\end{proof}

\subsection{Entropy on concept vectors and Reasoning Shortcuts}
\label{sec:entropy-vs-rss}

Under \textit{invertibility} (\textbf{A1}) and \textit{determinism} (\textbf{A2}), it is possible to describe entirely the set of optimal maps $\alpha:\vG \mapsto \vC$ through \cref{thm:mc-det-opts}, which we denote with $\calA^*$.
Before moving on to prove the main results in the main text, it is useful to introduce here the notion of ``equivalence set'' of a concept vector $\vg$ given the optimal maps $\alpha \in \calA^*$:
\[
    \calE (\vg; \BK) = \{ \alpha(\vg): \; \forall \alpha \in \calA^* \}
\]
that contains all the concepts $\vc \in \{ 0,1 
\}^k$ that are predicted by the maps $\alpha \in \calA^*$. With this, we can formally define when a ground-truth concept $\vg$ is mispredicted by RSs:
\begin{definition}
    We say that a concept vector $\vg \in \{ 0,1\}^k$ is ``mispredicted'' by RSs when $|\calE(\vg; \BK)| >1$, \ie there exist at least two different $\alpha_i, \alpha_j \in \calA^*$, such that $\alpha_i(\vg) \neq \alpha_j(\vg)$. Conversely,  a concept vector is ``correctly predicted'' if $\alpha(\vg) = \vg$, $\forall \alpha \in \calA^*$.
\end{definition}

Following, we can use the decomposition in terms of the map $\alpha$'s to inspect what combinations with optimal weights $\vomega^* = (\omega_1, \ldots, \omega_N)$, where $N = |\calA^*|$ and $\norm{\vomega^*}_1=1$, 
give high entropy on single concepts $\vg \in \{0,1\}^k$:

\begin{proposition} 
\label{prop:0-maximal-entropy}
    Suppose that $p_\vomega (\vC \mid \vG)$ admits a decomposition as a weighted sum of at least two distinct $\alpha \in \calA$, with weights $\vomega$. Then,
    \begin{itemize}
        \item[(\textit{i})] for any $\vg \in \{0,1 \}^k $ it holds that
        $
            H( p_\vomega(\vC \mid \vg) ) = 0
        $,
    when $\forall \alpha_i, \alpha_j \in \calA$ such that $\omega_{\alpha_i} > 0$ and $\omega_{\alpha_j} > 0$, it holds  $\alpha_i(\vg) = \alpha_j(\vg)$. 
    \end{itemize}
    Assuming that \textbf{A1} and \textbf{A2} hold: 
    \begin{itemize}
        \item[(\textit{ii})] If a concept $\vg \in \{0,1\}^k $ is not mispredicted by RSs, then all combinations  $\vomega^*$ of $\alpha \in \calA^*$ will give zero entropy $H (p_{\vomega^*} (\vC \mid \vg))$
        \item[(\textit{iii})] Vice versa, if $\vg$ is mispredicted by RSs,
        there is always at least one combination $\vomega^*$ such that the entropy $H ( p_{\vomega^*}(\vC \mid \vg)  )$ attains a maximal value of: 
        \[
            H ( p_{\vomega^*}(\vC \mid \vg)  ) = \log |\calE(\vg; \BK)|
        \]
    \end{itemize}    
\end{proposition}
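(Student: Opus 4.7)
The plan is to treat the three claims in sequence, leveraging the fact that the mixture structure $p_\vomega(\vC\mid\vg)=\sum_\alpha\omega_\alpha\Ind{\vC=\alpha(\vg)}$ makes $p_\vomega(\vC\mid\vg)$ a categorical distribution over the image set $\{\alpha(\vg):\omega_\alpha>0\}$, with mass on each value $\vc$ equal to the total weight of those maps sending $\vg$ to $\vc$. For part (\textit{i}), I would simply observe that the hypothesis forces every $\alpha$ with positive weight to produce the same value $\vc_0$, so $p_\vomega(\vC\mid\vg)=\bigl(\sum_{\alpha:\omega_\alpha>0}\omega_\alpha\bigr)\Ind{\vC=\vc_0}=\Ind{\vC=\vc_0}$ since $\norm{\vomega}_1=1$; a Dirac mass has zero Shannon entropy, which is the claim.

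For part (\textit{ii}), the condition "$\vg$ is not mispredicted by RSs" translates to $|\calE(\vg;\BK)|=1$, so every $\alpha\in\calA^*$ assigns the same value to $\vg$. Since any admissible $\vomega^*$ is supported on $\calA^*$, the hypothesis of (\textit{i}) is automatically satisfied and $H(p_{\vomega^*}(\vC\mid\vg))=0$ follows immediately; no further work is needed.

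Part (\textit{iii}) is where the substantive argument sits, and I would split it into a matching upper and lower bound. For the upper bound, $p_{\vomega^*}(\vC=\vc\mid\vg)=\sum_{\alpha\in\calA^*:\,\alpha(\vg)=\vc}\omega_\alpha^*$ vanishes whenever $\vc\notin\calE(\vg;\BK)$, so the support is contained in $\calE(\vg;\BK)$ and the entropy is bounded above by $\log|\calE(\vg;\BK)|$ by the standard maximum-entropy inequality on a finite support. For a matching lower bound, I would construct an explicit $\vomega^*$ that attains it: partition $\calA^*$ into the fibers $\calA^*_{\vc}=\{\alpha\in\calA^*:\alpha(\vg)=\vc\}$ for $\vc\in\calE(\vg;\BK)$ (each non-empty by the very definition of $\calE$), pick one representative $\alpha_\vc$ per fiber, and set $\omega^*_{\alpha_\vc}=1/|\calE(\vg;\BK)|$ with every other coordinate zero. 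The induced distribution is then uniform on $\calE(\vg;\BK)$, with entropy exactly $\log|\calE(\vg;\BK)|$.

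The one point of care, and what I would flag as the main obstacle, is verifying that the constructed $\vomega^*$ in (\textit{iii}) really yields a feasible combination supported on $\calA^*$, so that the resulting $p_{\vomega^*}(\vC\mid\vG)$ is still NeSy-optimal and the proposition's premises hold. This follows from \cref{thm:mc-det-opts}: each chosen representative $\alpha_\vc$ individually satisfies $\beta_\BK\circ\alpha_\vc=\beta_\BK$ on $\mathrm{supp}(\vG)$, and any convex combination of optimal maps preserves the label distribution, so no additional constraint on $\vomega^*$ is violated. Everything else reduces to routine bookkeeping between the two sets of maps $\calA$ and $\calA^*$ and the entropy of a categorical distribution with known support.
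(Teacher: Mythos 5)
Your proposal is correct and follows essentially the same route as the paper's proof: collapsing the mixture onto the fibers $\{\alpha\in\calA^*:\alpha(\vg)=\vc\}$, bounding the entropy by $\log|\calE(\vg;\BK)|$ via the maximum-entropy inequality on a finite support, and exhibiting weights that make the induced distribution uniform on $\calE(\vg;\BK)$. Your explicit one-representative-per-fiber construction is a slightly cleaner witness than the paper's appeal to solvability of the system $\sum_{\alpha:\alpha(\vg)=\vc}\omega^*_\alpha=|\calE(\vg;\BK)|^{-1}$, and your feasibility check (that the construction stays supported on $\calA^*$) is correctly resolved, but these are cosmetic differences.
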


\begin{proof}
    (\textit{i}) We start by considering a $p_\vomega(\vC \mid \vG)$ given by a fixed convex combination of maps $\alpha \in \calA$, with a vector $\vomega$. 
    We proceed to show that for any $\vg \in \{0,1 \}^k$, the entropy is zero holds \textit{if and only if} $\forall \alpha_i, \alpha_j \in \calA$ with $\omega(\alpha_i) > 0$ and $\omega(\alpha_j) > 0$, we have that $\alpha_i(\vg) = \alpha_j(\vg)$. 
    
    We consider a vanishing conditional entropy $H ( p(\vC \mid \vg) ) $ that is given only when $ p(\vC \mid \vg) = \Ind{\vC = \vc}$, for $\vc \in \{ 0,1 \}^k$. 
    This occurs only if ($1$) $ p(\vC \mid \vg ) = \Ind{ \vC = \alpha(\vg)}$, for $\alpha \in \calA$, or if ($2$) $ p(\vC \mid \vg) = \sum_{\alpha \in \calA} \omega_\alpha \Ind{ \vC = \alpha(\vg) }$, with $\omega_\alpha > 0$ only if $\alpha(\vg)$ is the same. Since we are considering probabilities $p(\vC \mid \vg)$ with at least two $\alpha$'s, only ($2$) holds, proving the result.

    (\textit{ii}) Next, under \textbf{A1} and \textbf{A2}, we consider the case where we have optimal maps $\alpha \in \calA^*$. 
    For those $\vg$'s that are \textit{correctly predicted} even by RSs, by definition $\calE(\vg; \calA^*) = 1$, and in particular $\alpha(\vg) = \vg$ for all $\alpha \in \calA^*$. This means that whatever combination of weights $\vomega^*$ is chosen, there will be only one element for $p(\vC \mid \vg)$ with all the probability mass. Therefore:
    \[
        p_{\vomega^*}(\vC \mid \vG) = \sum_{\alpha \in \calA^*} \omega^*_\alpha \Ind{ \vC = \alpha(\vg) } = \Ind{\vC = \vg } \sum_{\alpha \in \calA^*} \omega^*_\alpha
    \]
    that leads to a vanishing entropy.

    (\textit{iii}) 
    For any optimal solution, it holds that:
    \[
        \mathsf{supp}(p(\vC \mid \vg)) \subseteq \calE(\vg;\calA^*), \quad \forall \vg \in \{ 0, 1 \}^{k}
    \]
    since all concept vectors having non-zero mass in $p(\vC \mid \vg)$ must be optimal. 
    We now consider a concept vector $\vg$ that is affected by RSs, in that there exists $\alpha_i, \alpha_j \in \calA^*$ such that $\alpha_i (\vg) \neq \alpha_j (\vg) $. 
    We then rewrite it as follows:
    \[
    \begin{aligned}
        p_{\vomega^*} (\vC \mid \vg) &= \sum_{\alpha \in \calA^*} \omega^*_\alpha \Ind{\vC = \alpha (\vg)} \\
        &= \sum_{\vc \in \calE(\vg; \calA^*)  } \big[ \sum_{\alpha \in \calA^*: \, \alpha(\vg) =\vc} \omega^*_\alpha \big] \Ind{\vC = \vc} \\
        &= \sum_{\vc \in \calE(\vg; \calA^*)} \lambda_\vc \Ind{\vC = \vc}
    \end{aligned}
    \]
    where we denoted $\lambda_\vc = \sum_{\alpha \in \calA^*: \, \alpha(\vg)=\vc} \omega^*(\alpha)$ the weight associated to $\Ind{\vC = \vc}$.  
    When plugging this into the entropy we have that:
    \[ 
    \begin{aligned}
        H(p(\vC \mid \vg)) &= - \sum_{\vc \in \{0,1\}^k} p(\vc \mid \vg) \log p(\vc \mid \vg) \\
        &= - \sum_{\vc \in \{ 0,1\}^k} \sum_{\vc' \in \calE(\vg; \calA^*)} \lambda_{\vc'} \Ind{\vc' = \vc} \log \sum_{\vc' \in \calE(\vg; \calA^*)} \lambda_{\vc'} \Ind{\vc' = \vc} \\
        &= - \sum_{\vc \in \calE(\vg; \calA^*)} \lambda_\vc \log \lambda_\vc \\
        &\leq - \sum_{\vc \in \calE(\vg; \calA^*)} \frac{1}{| \calE(\vg; \calA^*) |} \log \frac{1}{| \calE(\vg; \calA^*) |} \\
        &= \log {| \calE(\vg; \calA^*) |} 
    \end{aligned}
    \]
    where the equality holds if and only if $ \lambda_\vc = \frac{1}{| \calE(\vg; \calA^*) |}  $ for all $\vc$ in $p(\vc \mid \vg)$.
    We can therefore choose $\omega^*$ such that:
    \[
        \sum_{\alpha \in \calA^*: \, \alpha(\vg)  = \vc} \omega^*(\alpha)= |\calE(\vg; \calA^*)|^{-1}, \quad \forall \vc \in \calE(\vg; \calA^*)
    \]
    which fixes $|\calE(\vg; \calA^*)|$ equations for at least $|\calE(\vg; \calA^*)|$ variables $\omega^*$. In fact, the number of maps $\alpha \in \calA^*$ is equal to $|\calE(\vg; \calA^*)|$ only when all maps $\alpha_i(\vg) \neq \alpha_j(\vg)$, $\forall \alpha_i, \alpha_j \in \calA^*$, \ie there are not two different maps that predict the same concept for $\vg$.
    This shows that by choosing the coefficients $\omega_\alpha$ correctly, it is possible to obtain a maximally entropic distribution $p(\vC \mid \vg)$. This concludes the proof.
\end{proof}

Point (\textit{ii}) of \cref{prop:0-maximal-entropy} essentially captures the intuition that concept vectors that are ``correctly predicted'' even by RSs will not contribute to increasing the entropy of the distribution $p (\vC \mid \vG)$.  Conversely, when a concept is ``mispredicted'' by RSs, there is always a combination attaining maximal entropy from point (\textit{iii}). Achieving maximal entropy for one ground-truth concept, however, is not enough to guarantee the others will also display maximal entropy.  This can happen because a combination $\vomega^*$ may increase the entropy of one $\vg_i$ while decreasing that of another $\vg_j$.

\subsection{Proof of Proposition~\ref{prop:ideal-obj}}

Before proceeding, it is useful to pin down what we mean precisely with the set of parameters. Based on the generative process with $p^*(\vY \mid \vG;\BK)$, we define as ``optimal'' those parameters $\theta$ that meet the following criterion:
\[
    p_\theta (\vY \mid \vG; \BK) := \int p_\theta(\vY \mid \vx; \BK) p(\vx \mid \vG)  \de \vx = p^*(\vY \mid \vG;\BK)
\]
and denote the whole set with $\Theta^*$.

\begin{proposition*}
    Consider only optimal parameters $\theta \in \Theta^*$ for $p_\theta (\vC \mid \vG)$.
    Assuming that $p_\theta$ is expressive enough to capture every possible combination $p_\vomega$, \ie for each $\vomega$ there exists $\theta$ s.t. $p_\theta(\vC \mid \vG) =p_\vomega(\vC \mid \vG)$,
    under invertibility (\textbf{A1}) and determinism (\textbf{A2}), it holds that:
    \begin{equation*}
        \label{eq:objective-d1-d2}
         \max_{\theta \in \Theta^*} H(p_\theta(\vC \mid \vG))
            = \max_{\vomega^*} H(p_{\vomega^*}(\vC \mid \vG))
    \end{equation*}
\end{proposition*}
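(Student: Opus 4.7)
The plan is to establish set equality between the feasible set of conditional distributions $\{p_\theta(\vC\mid\vG):\theta\in\Theta^*\}$ and the set $\{p_{\vomega^*}(\vC\mid\vG): \vomega^*\ge 0,\ \norm{\vomega^*}_1=1,\ \mathrm{supp}(\vomega^*)\subseteq\calA^*\}$. Once this set equality is in hand, equality of the two maxima is automatic, since the Shannon entropy $H(\cdot)$ is a functional of the joint $p^*(\vG)\,p(\vC\mid\vG)$ and the marginal $p^*(\vG)$ is fixed throughout.

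First I would prove the $\subseteq$ inclusion. Given any $\theta\in\Theta^*$, the concept-to-concept marginal $p_\theta(\vC\mid\vG)$ is well defined via \cref{eq:rs-c2}. By \cref{lemma:decomposition-of-p} --- specifically its strengthened form under \textbf{A1} and \textbf{A2}, which restricts the decomposition to $\calA^*$ whenever $p_\theta$ is optimal --- it admits a convex decomposition $p_\theta(\vC\mid\vG)=\sum_{\alpha\in\calA^*}\omega^*_\alpha\,\Ind{\vC=\alpha(\vG)}=p_{\vomega^*}(\vC\mid\vG)$ for some valid $\vomega^*$.

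Next I would prove the $\supseteq$ inclusion. Fix any $\vomega^*$ on the simplex over $\calA^*$. By the expressiveness hypothesis there exists a parameter $\theta$ such that $p_\theta(\vC\mid\vG)=p_{\vomega^*}(\vC\mid\vG)$, and it remains to verify that this $\theta$ lies in $\Theta^*$. Under \textbf{A2}, the entailed label distribution factors as $p_\theta(\vY\mid\vG;\BK)=\sum_{\vc}\Ind{\vY=\beta_\BK(\vc)}\,p_{\vomega^*}(\vc\mid\vG)=\sum_{\alpha\in\calA^*}\omega^*_\alpha\,\Ind{\vY=\beta_\BK(\alpha(\vG))}$. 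By the characterization of $\calA^*$ in \cref{thm:mc-det-opts}, every $\alpha\in\calA^*$ satisfies $\beta_\BK\circ\alpha=\beta_\BK$ on $\mathrm{supp}(\vG)$, so the sum collapses to $\Ind{\vY=\beta_\BK(\vG)}=p^*(\vY\mid\vG;\BK)$, confirming $\theta\in\Theta^*$.

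The main obstacle I anticipate is justifying that the strengthened form of \cref{lemma:decomposition-of-p} really applies to every $\theta\in\Theta^*$, not just to deterministic optima: any $\alpha\notin\calA^*$ carrying positive weight would, by \cref{thm:mc-det-opts}, induce a pointwise mismatch with $p^*(\vY\mid\vG;\BK)$ on at least one $\vg\in\mathrm{supp}(\vG)$, contradicting optimality of $\theta$. Hence the support of $\vomega^*$ is automatically restricted to $\calA^*$, closing the argument. With both inclusions established, taking $\sup H(\cdot)$ over the common set yields the claimed equality.
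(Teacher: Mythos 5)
Your proof is correct and follows essentially the same route as the paper's: one inclusion via the $\calA^*$-restricted decomposition of \cref{lemma:decomposition-of-p}, the other via the expressiveness hypothesis, with equality of the maxima following because the entropy depends only on the induced $p(\vC \mid \vG)$. Your explicit verification that the $\theta$ produced by expressiveness actually lies in $\Theta^*$ (using $\beta_\BK \circ \alpha = \beta_\BK$ on $\mathrm{supp}(\vG)$ for every $\alpha \in \calA^*$) is a small but worthwhile step that the paper leaves implicit.
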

    
\begin{proof}
    We start from the fact that by \cref{lemma:decomposition-of-p} we can always express $p_\theta (\vC \mid \vG)$ as a convex combination of maps $\alpha \in \calA^*$ for some weights $\vomega^* (\theta)$. 
    Vice versa, since $p_\theta$ is flexible enough to capture any combination $\vomega^*$ for $p_{\vomega^*} (\vC \mid \vG)$, there will exists some weights $\theta(\vomega^*)$ associated to any vector $\vomega^*$. 
    Notice that, in general, neither $\vomega^* (\theta)$ nor $\theta(\vomega^*)$ are unique.
    This, nonetheless, allows us to convert a problem formulated in terms $\theta \in \Theta^*$ to one in terms of $\vomega^*$: 
    \[
    \begin{aligned}
        \max_{ \theta \in \Theta^* } H(p_\theta(\vC \mid \vG))
        &= \max_{ \theta \in \Theta^* } - \sum_{\vg \in \{ 0,1 \}^{2k} } p^*(\vg) \sum_{\vc \in \{ 0,1 \}^{k} } 
        p_\theta(\vc \mid \vg) \log p_\theta(\vc \mid \vg)  \\
        &= \max_{ \theta \in \Theta^* } - \sum_{\vg \in \{ 0,1 \}^{2k} } p^*(\vg) \sum_{\vc \in \{ 0,1 \}^{k} } \left[ 
        \sum_{\alpha \in \calA^*} \omega^*_\alpha(\theta) \Ind{ \vc = \alpha(\vg) } \log \sum_{\alpha' \in \calA^*} \omega^*_{\alpha'}(\theta) \Ind{ \vc = \alpha'(\vg) }
        \right]  \\
        &=  \max_{ \vomega^*, \ ||\vomega^*||_1=1} - \sum_{\vg \in \{ 0,1 \}^{2k} } p^*(\vg) \sum_{\vc \in \{ 0,1 \}^{k} } \left[ 
        \sum_{\alpha \in \calA^*} \omega^*(\alpha) \Ind{ \vc = \alpha(\vg) } \log \sum_{\alpha \in \calA^*} \omega^*(\alpha) \Ind{ \vc = \alpha(\vg) }
        \right] \\
        &=  \max_{ \vomega^*, \ ||\vomega^*||_1=1} - \sum_{\vg \in \{ 0,1 \}^{2k} } p^*(\vg) \sum_{\vc \in \{ 0,1 \}^{k} } 
        p_{\vomega^*}(\vc \mid \vg) \log p_{\vomega^*}(\vc \mid \vg)
 \\
        &=  \max_{ \vomega^*, \ ||\vomega^*||_1=1} H(p_{\vomega^*} (\vC \mid \vG))
    \end{aligned}
    \]
    where in the third line we converted the maximization problem on the parameters $\theta \in \Theta^*$ to the weights $\vomega^*$.
     This concludes the proof.
\end{proof}

\subsection{Proof of Proposition~\ref{prop:prior-posterior-optima}}

\begin{proposition*}
    Let $p(\vC \mid \vX)$ be given by a convex combination of models $p_{\theta_i}(\vC \mid \vX)$, for $i \in [K]$, where $K$ denotes the total number of components of $\vtheta = \{ \theta_i \}$, and each $\theta_i \in \Theta^*$.
    Let also $\vlambda =\{ \lambda_i \}$ contain all the weights $\lambda_i$ associated to each component $\theta_i$. 
    Under invertibility (\textbf{A1}) and determinism (\textbf{A2}), there exists $K \leq |\calA^*|$ such that
    maximizing the entropy of $p_{\vomega^*}(\vC \mid \vG)$ can be solved by maximizing $H(p_\vtheta(\vC \mid \vX))$ on $\vtheta$ and $\vlambda$, that is:
    $$
        \max_{ \vtheta, \vlambda } H \Big(\sum_{i=1}^K \lambda_i p_{\theta_i}(\vC \mid \vX) \Big)
        = \max_{\vomega^*} H (p_{\vomega^*}(\vC \mid \vG))
    $$
    Furthermore, we can write the maximization of $H(p_\vtheta(\vC \mid \vX))$ as:
    $$
            \max_{ \vtheta, \vlambda } \int p(\vx) \sum_{i=1}^K \lambda_i [
            \KL( p_{\theta_i} (\vc \mid \vx)  \mid \mid \sum_{j=1}^K \lambda_j  p_{\theta_j} (\vc \mid \vx)) 
            %
            + H(p_{\theta_i}(\vC  \mid \vx)) ]   \de \vx
    $$
    where $\KL$ denotes the Kullback-Lieber divergence.
\end{proposition*}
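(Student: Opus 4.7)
My plan is to split the proof into two parts, corresponding to the two claims: (i) the equality of the two maximization problems, and (ii) the identity that rewrites $H(p_\vtheta(\vC\mid\vX))$ as an expectation of a KL plus a conditional entropy.

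For part (ii), which is the easier half, I would proceed by a direct calculation. Writing $q_\vlambda(\vc\mid\vx) := \sum_j \lambda_j p_{\theta_j}(\vc\mid\vx)$ and expanding one of the two occurrences of $q_\vlambda$ inside $H(q_\vlambda(\vC\mid\vX)) = -\int p(\vx)\sum_\vc q_\vlambda(\vc\mid\vx)\log q_\vlambda(\vc\mid\vx)\,\de\vx$, the mixing linearity gives $\sum_i \lambda_i \int p(\vx) \sum_\vc p_{\theta_i}(\vc\mid\vx) [-\log q_\vlambda(\vc\mid\vx)]\,\de\vx$. Adding and subtracting $\log p_{\theta_i}(\vc\mid\vx)$ inside the bracket splits each summand into $H(p_{\theta_i}(\vC\mid\vx)) + \KL(p_{\theta_i}(\vc\mid\vx)\,\|\,q_\vlambda(\vc\mid\vx))$, yielding the stated identity. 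This is standard and essentially the derivation of the ``decoupled ensemble entropy'' formula used in deep-ensemble literature.

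For part (i), the strategy is to move between two representations of the same object. By \cref{lemma:decomposition-of-p}, any $p(\vC\mid\vG)$ generated by an optimal concept distribution decomposes as $\sum_{\alpha\in\calA^*}\omega_\alpha^*\,\Ind{\vC=\alpha(\vG)}$, and by \cref{lemma:abstraction-from-lh}(ii) each deterministic $\alpha\in\calA^*$ corresponds bijectively to a deterministic $p_\theta(\vC\mid\vX)$ that is constant over the support of $p(\vX\mid\vg)$. Hence, taking $K$ equal to the number of $\alpha\in\calA^*$ with nonzero weight (always $\le |\calA^*|$) and setting $\lambda_i=\omega_{\alpha_i}^*$ realizes the target $p_{\vomega^*}(\vC\mid\vG)$ as the convex combination $\sum_i \lambda_i p_{\theta_i}(\vC\mid\vX)$ pushed through $\vX\mapsto \vG$. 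Under invertibility (A1), since each $p_{\theta_i}(\vC\mid\vx)$ only depends on $\vx$ through $f(\vx)=\vg$, the conditional Shannon entropy is preserved: $H(\sum_i\lambda_i p_{\theta_i}(\vC\mid\vX)) = H(p_{\vomega^*}(\vC\mid\vG))$, because $-\int p(\vx)\sum_\vc p(\vc\mid f(\vx))\log p(\vc\mid f(\vx))\,\de\vx$ collapses to $-\sum_\vg p^*(\vg)\sum_\vc p(\vc\mid\vg)\log p(\vc\mid\vg)$ after marginalizing the fibers of $f$. So the map $(\vtheta,\vlambda)\mapsto \vomega^*$ is surjective onto the feasible set of the right-hand maximization and preserves the objective, while \cref{lemma:decomposition-of-p} gives the reverse direction; hence the two maxima coincide.

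The main obstacle is the careful bookkeeping in part (i): one must (a) invoke the bijection of \cref{lemma:abstraction-from-lh} to identify each $\alpha\in\calA^*$ with a concept extractor that is constant on the level sets of $f$, (b) check that the convex combination preserves this level-set constancy so that the entropy identity over $\vX$ versus $\vG$ goes through under A1, and (c) argue using \cref{lemma:decomposition-of-p} that restricting $\vomega^*$ to $\calA^*$ loses no generality among $\theta\in\Theta^*$, which is where determinism (A2) is used through \cref{thm:mc-det-opts}. Part (ii), by contrast, is a routine KL decomposition and needs no structural assumptions.
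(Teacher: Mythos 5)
Your proposal is correct and follows essentially the same route as the paper's proof: it establishes the correspondence between ensembles $(\vtheta, \vlambda)$ of deterministic optimal extractors and weight vectors $\vomega^*$ over $\calA^*$ via \cref{lemma:abstraction-from-lh}(\textit{ii}) and \cref{lemma:decomposition-of-p}, uses invertibility to collapse the entropy over $\vX$ onto the entropy over $\vG$ by marginalizing the fibers of $f$, and obtains the second identity by the same add-and-subtract KL decomposition. The only cosmetic difference is that you phrase the equality of maxima as a surjection-plus-objective-preservation argument, whereas the paper first fixes $\vtheta$ to enumerate all of $\calA^*$ and then reduces the $\vlambda$-maximization to the $\vomega^*$-maximization explicitly; the content is the same.
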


\begin{proof}
    We start with $p(\vC \mid \vX) = \sum_{i} \lambda_i p_{\theta_i} (\vC \mid \vX) $ given by a convex combination of optimal models with parameters $\theta_i$, each entailing a deterministic distribution $p_{\theta_i} (\vC \mid \vG) = \Ind{\vC = \alpha_i(\vG)}$.

    Recall that, by invertibility (\textbf{A1}), there exists $f:\vx \mapsto \vg$, entailing the inverse of $p^*(\vG \mid \vX)$.
    We know that by \cref{lemma:abstraction-from-lh} (\textit{ii}), if $p_{\theta_i}$ entails a deterministic distributions $p_{\theta_i}(\vC \mid \vG) = \Ind{\vC = \alpha_i(\vG)}$, then it is in one-to-one correspondence with $p_{\theta_i}(\vC \mid \vX)$. Formally, the latter is: 
    \[
        p_{\theta_i} (\vC \mid \vx') = \Ind{ \vC  = \alpha_i (\vg) }, \quad \forall \vx' \in \mathsf{supp}( p^*(\vX \mid \vg), \; \text{where} \; \vg = f(\vx)
    \]
    Now, from the above equation, we can rewrite $H(p_\vtheta(\vC \mid \vX))$ as follows: 
    \[
    \begin{aligned}
        H(p_\vtheta(\vC \mid \vX)) &= - \bbE_{p^*(\vx)} \Big[ \sum_{\vc \in \{ 0,1 \}^{k} } p_\vtheta(\vc \mid \vx) \log p_\vtheta(\vc \mid \vx) \Big]  \\
        &=  - \sum_{\vg \in \{ 0,1 \}^{k} } p^*(\vg) \bbE_{p^*(\vx \mid \vg)} \Big[ \sum_{\vc \in \{ 0,1 \}^{k} } \sum_{i=1}^K \lambda_i  p_{\theta_i} (\vc \mid \vx) \log  \sum_{j=1}^K \lambda_j p_{\theta_j} (\vc \mid \vx) \Big] \\
        &=  - \sum_{\vg \in \{ 0,1 \}^{k} } p^*(\vg)  \sum_{\vc \in \{ 0,1 \}^{k} } \bbE_{p^*(\vx \mid \vg)} \Big[ \sum_{i=1}^K \lambda_i p_{\theta_i} (\vc \mid \vx) \log  \sum_{j=1}^K \lambda_j p_{\theta_j} (\vc \mid \vx) \Big] \\
        &= - \sum_{\vg \in \{ 0,1 \}^{k} } p^*(\vg) \sum_{\vc \in \{ 0,1 \}^{k} } \bbE_{p^*(\vx \mid \vg)} \Big[ \sum_{i=1}^K \lambda_i  \Ind{\vc = \alpha_i(\vg)} \log  \sum_{j=1}^K \lambda_j \Ind{\vc = \alpha_j(\vg)} \Big] \\
        &= - \sum_{\vg \in \{ 0,1 \}^{k} } p^*(\vg) \sum_{\vc \in \{ 0,1 \}^{k} }  \sum_{i=1}^K \lambda_i  \Ind{\vc = \alpha_i(\vg)} \log  \sum_{j=1}^K \lambda_j \Ind{\vc = \alpha_j(\vg)}  \\
        &= - \sum_{\vg \in \{ 0,1 \}^{k} } p^*(\vg) \sum_{\vc \in \{ 0,1 \}^{k} } p_\vtheta(\vc \mid \vg) \log p_\vtheta (\vc \mid \vg) \\
        &= H(p_\vtheta(\vC \mid \vG))
    \end{aligned}
    \]
    where the second line follows from the fact that the expectation on the input variables can be written as $\bbE_{p^*(\vg)}[p^*(\vx \mid \vg)]$, and $p_\vtheta (\vC \mid \vG)$ is the distribution with convex weights $\vlambda$, where each $\lambda_i$ is associated to the reasoning shortcut $\alpha_i$, entailed by $\theta_i$. This means that maximizing $H(p_\vtheta(\vC \mid \vX))$ directly maximizes $H(p_\vtheta(\vC \mid \vG))$. 

    Next, suppose that $\vtheta$ is fixed and contains a number $K = |\calA^*|$ of members, such that each deterministic RS $\alpha \in \calA^*$ is captured by exactly one member $\theta_i \in \vtheta$.  This means that each $\theta_i$ captures $p_{\theta_i} (\vC \mid \vG) = \Ind{\vC = \alpha_i (\vG)}$, and it holds that if $\theta_i \neq \theta_j$, then $\alpha_i (\vg) \neq \alpha_j(\vg)$ for at least one $\vg \in \{0,1\}^k$.
    
    We prove that maximizing $\vlambda$ when $\vtheta$ is fixed and contains all possible deterministic RSs amounts to maximizing the combination of RSs. The proof follows a similar derivation to \cref{prop:ideal-obj}:
    \[
    \begin{aligned}
        \max_{ \vlambda, ||\vlambda||_1 = 1 } H(p_\theta(\vC \mid \vG))
        &= \max_{ \vlambda, ||\vlambda||_1 = 1 } - \sum_{\vg \in \{ 0,1 \}^{k} } p^*(\vg) \sum_{\vc \in \{ 0,1 \}^{k} } 
        p_\vtheta(\vc \mid \vg) \log p_\vtheta(\vc \mid \vg)  \\
        &= \max_{ \vlambda, ||\vlambda||_1 = 1 } - \sum_{\vg \in \{ 0,1 \}^{k} } p^*(\vg) \sum_{\vc \in \{ 0,1 \}^{k} } \left[ 
        \sum_{i = 1}^K \lambda_i \Ind{ \vc = \alpha_i(\vg) } \log \sum_{j = 1}^K \lambda_j \Ind{ \vc = \alpha_j(\vg) }
        \right]  \\
        &=  \max_{ \vomega^*, \ ||\vomega^*||_1=1} - \sum_{\vg \in \{ 0,1 \}^{k} } p^*(\vg) \sum_{\vc \in \{ 0,1 \}^{k} } \left[ 
        \sum_{\alpha \in \calA^*} \omega^*_\alpha \Ind{ \vc = \alpha(\vg) } \log \sum_{\alpha' \in \calA^*} \omega^*_{\alpha'} \Ind{ \vc = \alpha'(\vg) }
        \right] \\
        &=  \max_{ \vomega^*, \ ||\vomega^*||_1=1} - \sum_{\vg \in \{ 0,1 \}^{k} } p^*(\vg) \sum_{\vc \in \{ 0,1 \}^{k} } 
        p_{\vomega^*}(\vc \mid \vg) \log p_{\vomega^*}(\vc \mid \vg) \\
        &=  \max_{ \vomega^*, \ ||\vomega^*||_1=1} H(p_{\vomega^*} (\vC \mid \vG))
    \end{aligned}
    \]
    where in the third line we substituted $\lambda_i$ with $\omega^*_{\alpha}$ and the summation over the ordered components with the summation over $\alpha \in \calA^*$. 
    Notice that this also means that an ensemble containing all different deterministic RSs with parameters $\theta_i$ can express arbitrary combinations of them via $\vlambda$.
    
    Now, consider the case where a few elements of $\calA^*$ contribute to achieving maximum entropy for $p_{\vomega^*}(\vC \mid \vG)$. Therefore, there exists at least one $\omega_{\alpha'}^*=0$, while the remaining lead to the maximum entropy for $p_{\vomega^*}(\vC \mid \vG)$. 
    It holds that, similarly, the maximum of $H(p_\vtheta (\vC \mid \vG))$ can be obtained by considering a smaller number of components $\vtheta$ since the weight associated with a specific $\theta_j$ capturing $\alpha'$ must be $0$.
    This also means that the ensemble dimension $K$ can be strictly smaller than $|\calA^*|$, while still achieving maximal entropy. 
    
    We now maximize the entropy on $\vtheta$ and $\vlambda$ together:
    \[
        \max_{\vtheta, \vlambda} H(p_\vtheta (\vC \mid \vG))
    \]
    Since the number of components $K$ is upper-bounded by $|\calA^*|$, we can always find a solution by getting all different $\theta_i$, each capturing different deterministic distributions $\alpha_i$. 
    On the other hand, when a fewer number of $\alpha$'s are required, it suffices to find those $K$ components $\theta_i$ that are combined with a non-zero weight $\lambda_i$. In this case, $K < |\calA^*|$.
    This means, altogether, that:
    \[
        \max_{\vtheta, \vlambda} H(p_\vtheta (\vC \mid \vX)) = \max_{\vtheta, \vlambda} H(p_\vtheta (\vC \mid \vG)) = \max_{\vomega} H(p_\vomega (\vC \mid \vG)) 
    \]
    proving our first point.

    We proceed by analyzing the conditional entropy $H(\vC \mid \vX)$, which can be written as:
    \[
    \begin{aligned}
        H(\vC \mid \vX) &= - \int p(\vx) \sum_{\vc \in \{ 0,1\}^k } p(\vc \mid \vx ) \log p(\vc \mid \vx ) \de \vx \\
        &= - \int p(\vx) \sum_{\vc \in \{ 0,1\}^k } \sum_i \lambda_i p_{\theta_i} (\vc \mid \vx) \log \sum_j \lambda_j p_{\theta_j} (\vc \mid \vx) \de \vx \\
        &= \int p(\vx) \sum_{\vc \in \{ 0,1\}^k } \sum_i \lambda_i p_{\theta_i} (\vc \mid \vx) \left[ \log \frac{p_{\theta_i}(\vc \mid \vx) }{\sum_j \lambda_j p_{\theta_j} (\vc \mid \vx)} - \log p_{\theta_i}(\vc \mid \vx) \right] \de \vx \\
        &= \int p(\vx) \sum_i \lambda_i \big[  \KL ( p_{\theta_i}(\vc \mid \vx) \mid \mid \sum_j \lambda_j p_{\theta_j} (\vc \mid \vx) ) + H (p_{\theta_i}(\vc \mid \vx) ) \big] \de \vx
    \end{aligned}
    \]
    where in the third line we multiplied and divided for the members of the ensemble $p_{\theta_i}(\vc \mid \vx)$, and in the last line we grouped the expressions of the \KL divergence and of the conditional entropy.
    Therefore, for the maximization on $\vtheta$ and $\vlambda$:
    \[
        \max_{ \vlambda, \vtheta } H(\vC \mid \vX) = \max_{ \vlambda, \vtheta  } \int p(\vx) \sum_i \lambda_i \big[ \KL ( p_{\theta_i} (\vc \mid \vx)  \mid \mid \sum_j \lambda_j  p_{\theta_j} (\vc \mid \vx)) + H(p_{\theta_i}(\vC  \mid \vX)) \big]   \de \vx 
    \]
    as claimed. This concludes the proof.
\end{proof}

\newpage
\section{Additional Results}
\label{sec:all-results}

\subsection{\MNISTAdd}

We report here additional results for the experiments shown in \cref{sec:experiments}. Along with $\YECE$ and $\CECE$, we show also the performances of \method compared to other competitors in terms of the label accuracy (\YAcc) and concept accuracy (\CAcc), both in-distribution and out-of-distribution.

\begin{table*}[!h]
    \centering
    \scriptsize
    \caption{Complete evaluation on \MNISTHalf. The values on \YAcc \textit{in-distribution} shows that \method and all competitors achieve optimal predictions on labels. The values of \CAcc \textit{in-distribution}, on the other hand, show that all methods pick up a RS. This holds for \DPL, \SL, and \LTN. The pattern completely change out-of-distribution, where all methods struggle in terms of label accuracy $\YAccOod$.}

\begin{tabular}{lcccccccc}
    \toprule
        & \multicolumn{8}{c}{\MNISTHalf}
    \\
    \cmidrule(lr){2-9}
    \textsc{Method}
        & \textsc{\YAcc}
        & \textsc{\CAcc}
        & \textsc{\YECE}
        & \textsc{\CECE}
        & \textsc{\YAccOod}
        & \textsc{\CAccOod}
        & \textsc{\YECEOod}
        & \textsc{\CECEOod}
    \\
    \midrule
    \DPL & $0.98 \pm 0.01$ &  $0.43 \pm 0.01$ & $0.02 \pm 0.01$ & $0.69 \pm 0.01$ & $0.06 \pm 0.01$ & $0.39 \pm 0.01$ & $0.92 \pm 0.01$ & $0.87 \pm 0.01$
    \\
    \rowcolor[HTML]{EFEFEF}
    \DPL + \MCDrop &  $0.98 \pm 0.01$ & $0.43 \pm 0.01$ & $0.02 \pm 0.01$ &  $0.69 \pm 0.01$ & $0.06 \pm 0.01$ & $0.39 \pm 0.01$ & $0.91 \pm 0.01$ & $0.86 \pm 0.01$
    \\
    \DPL + \Laplace & $0.98 \pm 0.01$ & $0.43 \pm 0.01$ & $0.06 \pm 0.01$ & $0.65 \pm 0.01$ & $0.06 \pm 0.01$ & $0.39 \pm 0.01$ & $0.87 \pm 0.01$ & $0.82 \pm 0.01$
    \\
    \rowcolor[HTML]{EFEFEF}
    \DPL + \ProbCBM & $0.98 \pm 0.01$ & $0.43 \pm 0.01$ & $0.07 \pm 0.08$ & $0.64 \pm 0.08$ & $0.06 \pm 0.01$ & $0.39 \pm 0.01$ & $0.86 \pm 0.08$ & $0.80 \pm 0.08$
    \\
    \DPL + \DeepEns & $0.99 \pm 0.01$ & $0.43 \pm 0.01$ & $0.01 \pm 0.01$ &  $0.64 \pm 0.01$ & $0.06 \pm 0.01$ & $0.39 \pm 0.01$ & $0.83 \pm 0.13$ & $0.77 \pm 0.13$
    \\
    \rowcolor[HTML]{EFEFEF}
    \DPL + \method &  $0.99 \pm 0.01$ & $0.43 \pm 0.01$ & $0.09 \pm 0.02$ &  $0.37 \pm 0.01$ & $0.06 \pm 0.01$ & $0.39 \pm 0.01$ & $0.39 \pm 0.03$ & $0.38 \pm 0.02$
    \\
    \midrule
    \SL & $0.99 \pm 0.01$ & $0.43 \pm 0.01$ & $0.01 \pm 0.01$ & $0.71 \pm 0.01$ & $0.01 \pm 0.01$ & $0.39 \pm 0.01$ & $0.95 \pm 0.01$ & $0.88 \pm 0.01$
    \\
    \rowcolor[HTML]{EFEFEF}
    \SL + \MCDrop & $0.99 \pm 0.01$ & $0.43 \pm 0.01$ & $0.01 \pm 0.01$ & $0.70 \pm 0.01$ & $0.01 \pm 0.01$ & $0.39 \pm 0.01$ & $0.92 \pm 0.01$ & $0.88 \pm 0.01$
    \\
    \SL + \Laplace & $0.98 \pm 0.01$ & $0.43 \pm 0.01$ & $0.06 \pm 0.01$ & $0.59 \pm 0.02$ & $0.01 \pm 0.01$ & $0.39 \pm 0.01$ & $0.75 \pm 0.01$ & $0.75 \pm 0.02$
    \\
    \rowcolor[HTML]{EFEFEF}
    \SL + \ProbCBM & $0.99 \pm 0.01$ & $0.43 \pm 0.01$ & $0.01 \pm 0.01$ & $0.70 \pm 0.01$ & $0.01 \pm 0.01$ & $0.39 \pm 0.01$ & $0.91 \pm 0.01$ & $0.88 \pm 0.01$
    \\
    \SL + \DeepEns & $0.99 \pm 0.01$ & $0.43 \pm 0.01$ & $0.01 \pm 0.01$ & $0.64 \pm 0.08$ & $0.01 \pm 0.01$ &  $0.39 \pm 0.01$ & $0.87 \pm 0.05$ & $0.78 \pm 0.13$
    \\
    \rowcolor[HTML]{EFEFEF}
    \SL + \method & $0.99 \pm 0.01$ & $0.43 \pm 0.01$ & $0.01 \pm 0.01$ & $0.38 \pm 0.01$ & $0.01 \pm 0.01$ & $0.39 \pm 0.01$ & $0.75 \pm 0.01$ & $0.37 \pm 0.03$
    \\
    \midrule
    \LTN & $0.98 \pm 0.01$ & $0.42 \pm 0.01$ & $0.02 \pm 0.01$ & $0.70 \pm 0.01$ & $0.06 \pm 0.01$ & $0.39 \pm 0.01$ & $0.94\pm 0.01$ & $0.87\pm 0.01$
    \\
    \rowcolor[HTML]{EFEFEF}
    \LTN + \MCDrop & $0.98 \pm 0.01$ & $0.42 \pm 0.01$ & $0.01 \pm 0.01$ & $0.69 \pm 0.01$ & $0.06 \pm 0.01$ & $0.39 \pm 0.01$ & $0.93 \pm 0.01$ & $0.87 \pm 0.01$
    \\
    \LTN + \Laplace & $0.98 \pm 0.01$ & $0.43 \pm 0.01$ & $0.14 \pm 0.02$ & $0.55 \pm 0.02$ & $0.06 \pm 0.01$ & $0.39 \pm 0.01$ & $0.79 \pm 0.02$ & $0.73 \pm 0.02$
    \\
    \rowcolor[HTML]{EFEFEF}
    \LTN + \ProbCBM & $0.98 \pm 0.01$ & $0.43 \pm 0.01$ & $0.01 \pm 0.01$ & $0.69 \pm 0.01$ & $0.06 \pm 0.01$ & $0.39 \pm 0.01$ & $0.94 \pm 0.01$ & $0.86 \pm 0.01$
    \\
    \LTN + \DeepEns & $0.99 \pm 0.01$ & $0.42 \pm 0.01$ & $0.01 \pm 0.01$ & $0.69 \pm 0.01$ & $0.06 \pm 0.11$ & $0.39 \pm 0.01$ & $0.94 \pm 0.01$ & $0.87 \pm 0.01$ 
    \\
    \rowcolor[HTML]{EFEFEF}
    \LTN + \method & $0.99 \pm 0.01$ & $0.43 \pm 0.01$ & $0.06 \pm 0.01$ & $0.36 \pm 0.01$ & $0.08 \pm 0.01$ & $0.39 \pm 0.01$ & $0.36 \pm 0.01$ & $0.32 \pm 0.01$
    \\
    \bottomrule
\end{tabular}
%
%
%

    \label{tab:mnisthalf-full}
\end{table*}

We include next the results on the \MNISTShortcut. Likewise, \method when paired to all NeSy models shows drastic improvements in terms of \YECE and \CECE, both \textit{in} and \textit{out-of-distribution}.

\begin{table*}[!h]
    \centering
    \scriptsize
    \caption{Complete evaluation on \MNISTShortcut. All competitors struggle in terms of \YAcc \textit{in-distribution} when not paired to \SL, while \method shows sensible improvements when paired on both \DPL and \LTN. The accuracy on concepts \CAcc \textit{in-distribution} shows that all methods pick up a RS, despite being generally suboptimal. In the \textit{out-of-distribution} we observe a drastic degradation on both \YAccOod and \CAccOod.}

\begin{tabular}{lcccccccc}
    \toprule
        & \multicolumn{8}{c}{\MNISTShortcut}
    \\
    \cmidrule(lr){2-9}
    \textsc{Method}
        & \textsc{\YAcc}
        & \textsc{\CAcc}
        & \textsc{\YECE}
        & \textsc{\CECE}
        & \textsc{\YAccOod}
        & \textsc{\CAccOod}
        & \textsc{\YECEOod}
        & \textsc{\CECEOod}
    \\
    \midrule
    \DPL & $0.71 \pm 0.01$ & $0.01 \pm 0.01$ & $0.11 \pm 0.01$ & $0.81 \pm 0.01$ & $0.07 \pm 0.01$ & $0.07 \pm 0.01$ & $0.78 \pm 0.01$ & $0.85 \pm 0.01$
    \\
    \rowcolor[HTML]{EFEFEF}
    \DPL + \MCDrop & $0.72 \pm 0.01$ & $0.01 \pm 0.01$ & $0.09 \pm 0.01$ & $0.80 \pm 0.01$ & $0.07 \pm 0.01$ & $0.05 \pm 0.01$ & $0.77 \pm 0.01$ & $0.84 \pm 0.01$
    \\
    \DPL + \Laplace & $0.71 \pm 0.01$ & $0.01 \pm 0.01$ & $0.09 \pm 0.01$ & $0.78 \pm 0.01$ & $0.07 \pm 0.01$ & $0.01 \pm 0.01$ & $0.76 \pm 0.01$ & $0.83 \pm 0.01$
    \\
    \rowcolor[HTML]{EFEFEF}
    \DPL + \ProbCBM & $0.78 \pm 0.08$ & $0.11 \pm 0.11$ & $0.15 \pm 0.13$ & $0.65 \pm 0.08$ & $0.05 \pm 0.03$ & $0.09 \pm 0.01$ & $0.71 \pm 0.15$ & $0.72 \pm 0.13$ 
    \\
    \DPL + \DeepEns & $0.76 \pm 0.01$ & $0.01 \pm 0.01$ & $0.13 \pm 0.02$ & $0.69 \pm 0.06$ & $0.07 \pm 0.01$ & $0.05 \pm 0.01$ & $0.64 \pm 0.06$ & $0.70 \pm 0.07$
    \\
    \rowcolor[HTML]{EFEFEF}
    \DPL + \method & $0.93 \pm 0.03$ & $0.05 \pm 0.09$ & $0.21 \pm 0.03$ & $0.25 \pm 0.07$ & $0.03 \pm 0.03$ & $0.12 \pm 0.05$ & $0.46 \pm 0.03$ & $0.25 \pm 0.05$
    \\
    \midrule
    \SL & $0.97 \pm 0.01$ & $0.01 \pm 0.01$ & $0.02 \pm 0.01$ & $0.82 \pm 0.01$ & $0.01 \pm 0.01$ & $0.07 \pm 0.01$ & $0.97 \pm 0.01$ & $0.87 \pm 0.01$
    \\
    \rowcolor[HTML]{EFEFEF}
    \SL + \MCDrop & $0.98 \pm 0.01$ & $0.01 \pm 0.01$ & $0.01 \pm 0.01$ & $0.80 \pm 0.01$ &  $0.01 \pm 0.01$ & $0.05 \pm 0.01$ & $0.94 \pm 0.01$ & $0.85 \pm 0.01$
    \\
    \SL + \Laplace & $0.98 \pm 0.01$ & $0.01 \pm 0.01$ & $0.04 \pm 0.01$ & $0.73 \pm 0.01$ & $0.01 \pm 0.01$ & $0.01 \pm 0.01$ & $0.89 \pm 0.01$ & $0.78 \pm 0.01$
    \\
    \rowcolor[HTML]{EFEFEF}
    \SL + \ProbCBM & $0.98 \pm 0.01$ & $0.01 \pm 0.01$ & $0.02 \pm 0.01$ & $0.83 \pm 0.01$ & $0.01 \pm 0.01$ & $0.07 \pm 0.01$ & $0.97 \pm 0.01$ & $0.88 \pm 0.01$
    \\
    \SL + \DeepEns & $0.99 \pm 0.01$ & $0.01 \pm 0.01$ & $0.01 \pm 0.01$ & $0.77 \pm 0.07$ &  $0.01 \pm 0.01$ & $0.05 \pm 0.01$ & $0.93 \pm 0.02$ & $0.81 \pm 0.08$
    \\
    \rowcolor[HTML]{EFEFEF}
    \SL + \method & $0.99 \pm 0.01$ &  $0.01 \pm 0.01$ & $0.01 \pm 0.01$ & $0.34 \pm 0.02$ &  $0.01 \pm 0.01$ & $0.07 \pm 0.02$ & $0.85 \pm 0.01$ & $0.33 \pm 0.03$
    \\
    \midrule
    \LTN & $0.70 \pm 0.01$ & $0.28 \pm 0.05$ & $0.29 \pm 0.01$ & $0.64 \pm 0.05$ & $0.10 \pm 0.01$ & $0.09 \pm 0.01$ & $0.90\pm 0.01$ & $0.79 \pm 0.01$
    \\
    \rowcolor[HTML]{EFEFEF}
    \LTN + \MCDrop & $0.72 \pm 0.01$ & $0.28 \pm 0.05$ & $0.21 \pm 0.02$ & $0.62 \pm 0.04$ & $0.11 \pm 0.01$ & $0.14 \pm 0.01$ & $0.85 \pm 0.01$ & $0.77 \pm 0.01$
    \\
    \LTN + \Laplace & $0.72 \pm 0.01$ & $0.24 \pm 0.14$ & $0.13 \pm 0.05$ & $0.61 \pm 0.12$ & $0.10 \pm 0.04$ & $0.26 \pm 0.09$ & $0.79 \pm 0.01$ & $0.74 \pm 0.03$
    \\
    \rowcolor[HTML]{EFEFEF}
    \LTN + \ProbCBM & $0.73 \pm 0.01$ & $0.01 \pm 0.01$ & $0.27 \pm 0.01$ & $0.85 \pm 0.02$ & $0.01 \pm 0.01$ & $0.09 \pm 0.01$ & $0.99 \pm 0.01$ & $0.89 \pm 0.01$
    \\
    \LTN + \DeepEns & $0.77 \pm 0.05$ & $0.23 \pm 0.07$ & $0.13 \pm 0.05$ & $0.32 \pm 0.05$ & $0.06 \pm 0.04$ & $0.08 \pm 0.04$ & $0.61 \pm 0.07$ & $0.43 \pm 0.10$ 
    \\
    \rowcolor[HTML]{EFEFEF}
    \LTN + \method & $0.89 \pm 0.06$ & $0.22 \pm 0.08$ & $0.11 \pm 0.02$ & $0.11 \pm 0.07$ & $0.13 \pm 0.02$ & $0.08 \pm 0.02$ & $0.22 \pm 0.03$ & $0.13 \pm 0.02$
    \\
    \bottomrule
\end{tabular}
    \label{tab:mnistshort-full}
\end{table*}

\newpage

\subsection{\BOIA}

We report here the complete evaluation on \BOIA. The values on \mFY show that \method does not worsen sensibly the scores w.r.t. \DPL and \DPL paired with \DeepEns, despite being trained with an extra term (conflicting in principle with the optimization on label accuracy). \Laplace and \ProbCBM, on the other hand, perform worse compared to other methods. In terms of \mFC, both \ProbCBM and \method improve the scores compared to \DPL alone. 

\begin{table*}[!h]
    \centering
    \scriptsize
    \caption{Full results on \BOIA.}

\begin{tabular}{lcccccccc}
    \toprule
        & \multicolumn{7}{c}{\BOIA}
    \\
    \cmidrule(lr){2-8}
    \textsc{Method}
        & \textsc{\mFY}
        & \textsc{\mFC}
        & \textsc{\mYECE}
        & \textsc{\mCECE}
        & \textsc{\FSECE}
        & \textsc{\RECE}
        & \textsc{\LECE}
    \\
    \midrule
    \DPL & $0.72 \pm 0.01$ & $0.34 \pm 0.01$ & $0.08 \pm 0.01$ & $0.84 \pm 0.01$ & $0.75 \pm 0.17$ & $0.79 \pm 0.05$ & $0.59 \pm 0.32$
    \\
    \rowcolor[HTML]{EFEFEF}
    \DPL + \MCDrop & $0.72 \pm 0.01$ & $0.34 \pm 0.01$ & $0.07 \pm 0.01$ & $0.83 \pm 0.01$ & $0.72 \pm 0.19$ & $0.76 \pm 0.08$ & $0.55 \pm 0.33$
    \\
    \DPL + \Laplace & $0.67 \pm 0.03$ & $0.34 \pm 0.01$ & $0.12 \pm 0.03$ & $0.85 \pm 0.01$ & $0.84 \pm 0.10$ & $0.87 \pm 0.04$ & $0.67 \pm 0.19$
    \\
    \rowcolor[HTML]{EFEFEF}
    \DPL + \ProbCBM & $0.68 \pm 0.01$ & $0.42 \pm 0.01$ & $0.12 \pm 0.01$ & $0.68 \pm 0.01$ & $0.26 \pm 0.01$ & $0.26 \pm 0.02$ & $0.11 \pm 0.02$
    \\
    \DPL + \DeepEns  & $0.72 \pm 0.01$ & $0.35 \pm 0.01$ & $0.10 \pm 0.01$ & $0.79 \pm 0.01$ & $0.62 \pm 0.03$ & $0.71 \pm 0.10$ & $0.37 \pm 0.12$
    \\
    \rowcolor[HTML]{EFEFEF}
    \DPL + \method & $0.70 \pm 0.01$ & $0.42 \pm 0.01$ & $0.06 \pm 0.01$ & $0.58 \pm 0.01$ & $0.14 \pm 0.01$ & $0.10 \pm 0.01$ & $0.02 \pm 0.01$
    \\
    \bottomrule
\end{tabular}
    \label{tab:boia-full}
\end{table*}

\subsection{\Kandinsky}

We include here the evaluation curves for the active experiment on both the \YAcc and \CAcc for \DPL paired with the entropy strategy (in \textcolor{goldenrod}{yellow}), with the random baseline (in \textcolor{blue}{blue}), and with \method (in \textcolor{red}{red}).

\begin{figure}[!h]
    \centering
    \begin{tabular}{cc}
        \includegraphics[width=0.475\linewidth]{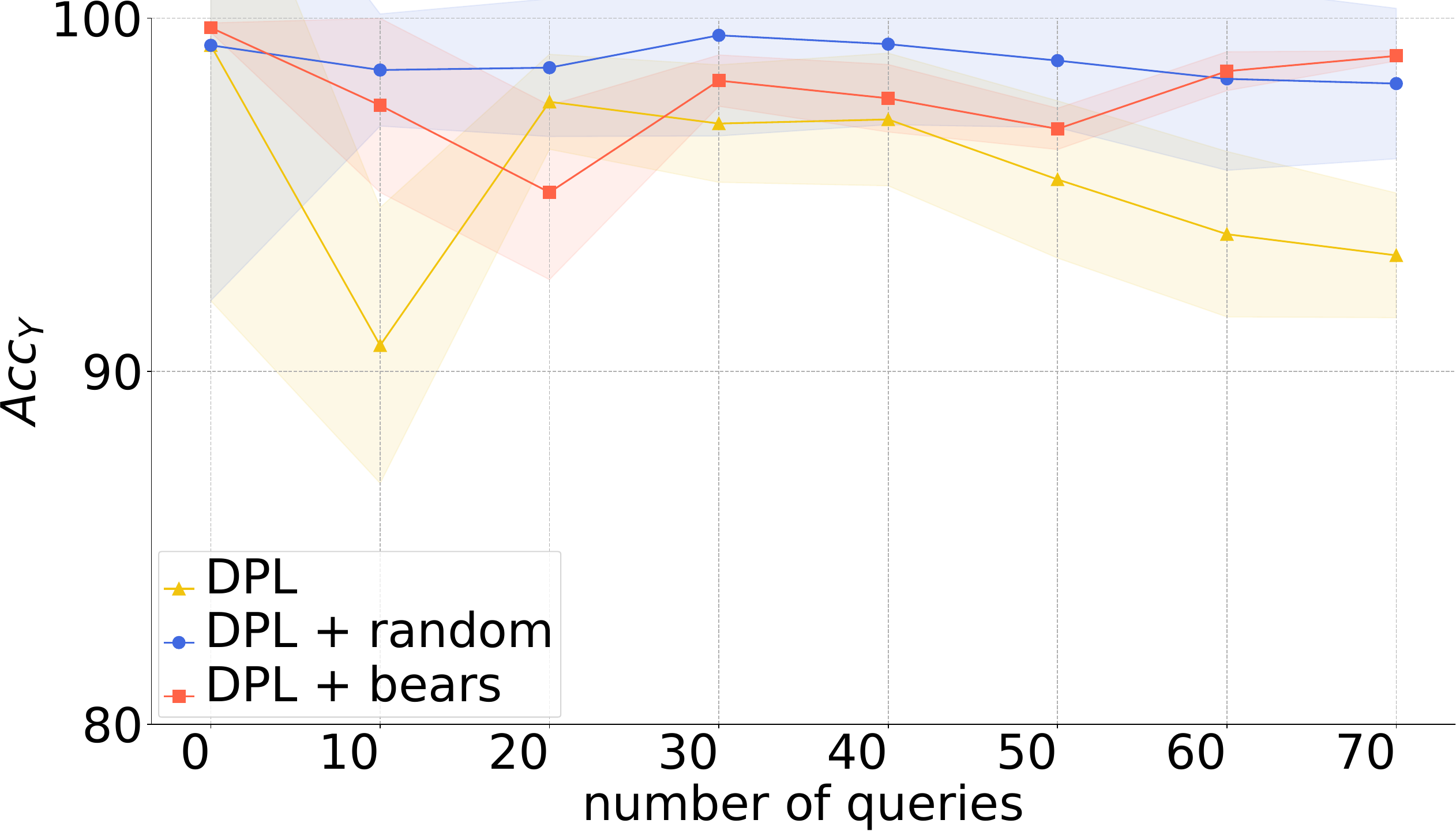}
        &
        \includegraphics[width=0.475\linewidth]{figures/results/kandinsky/i-o_kand_bears-fixed}
    \end{tabular}
    \caption{\textbf{\method allows selecting informative concept annotations faster.} (left) label accuracy. (\textit{right}) concept accuracy.} 
    \label{fig:kand-results-complete}
\end{figure}

\newpage

\subsection{Concept-wise Entropy scores for \MNISTHalf}

We report the entropy scores for each concept for all NeSy models we tested. \method performs as desired, whereas the runner-up, \Laplace, struggles to put low-entropy on $0$, especially when paired with \SL and \LTN. 

\begin{figure}[!h]
    \centering
    \begin{tabular}{cc}
    \includegraphics[width=0.475\linewidth]{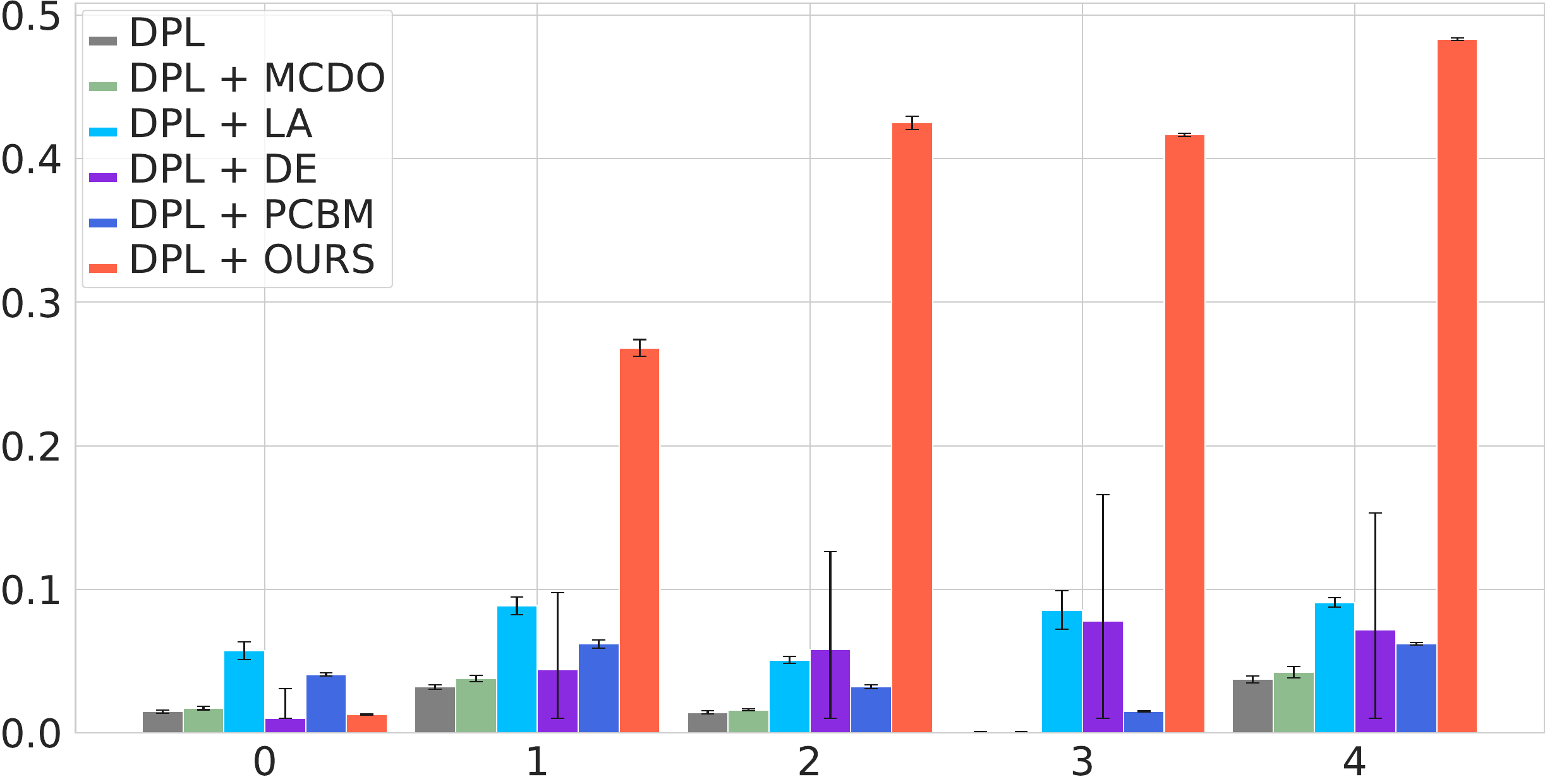} &
    \includegraphics[width=0.475\linewidth]{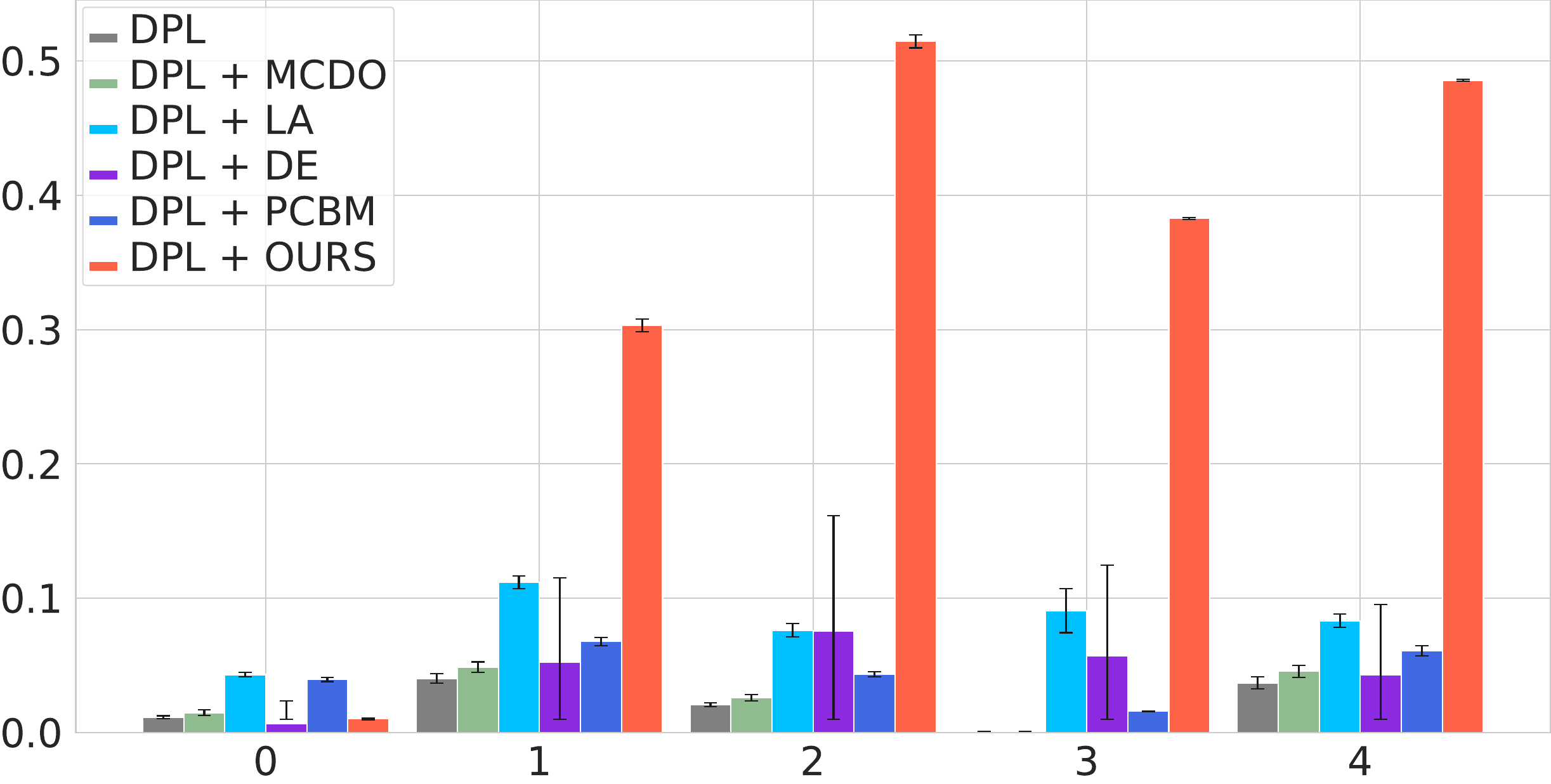}
    \end{tabular}
    \caption{\textbf{\DPL + \method shows high entropy for concepts affected by RSs while it does not for others in out-of-distribution settings.} (\textit{left} In distribution. (\textit{right}) Out-of-distribution} 
    \label{fig:dpl-halfmnist-entropy}
\end{figure}

\vspace{2em}

\begin{figure}[!h]
    \centering
    \begin{tabular}{cc}
        \includegraphics[width=0.475\linewidth]{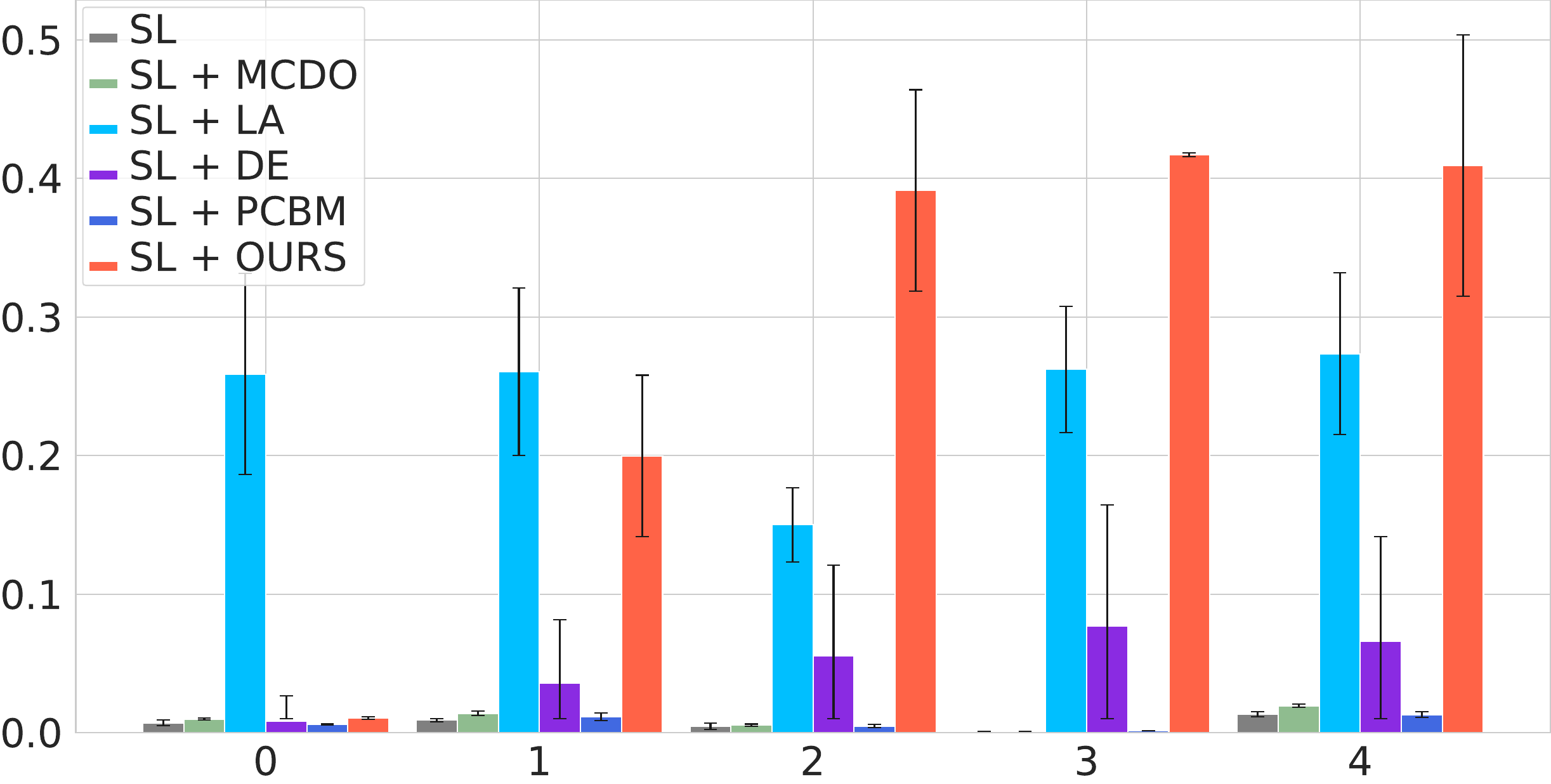}
        &
        \includegraphics[width=0.475\linewidth]{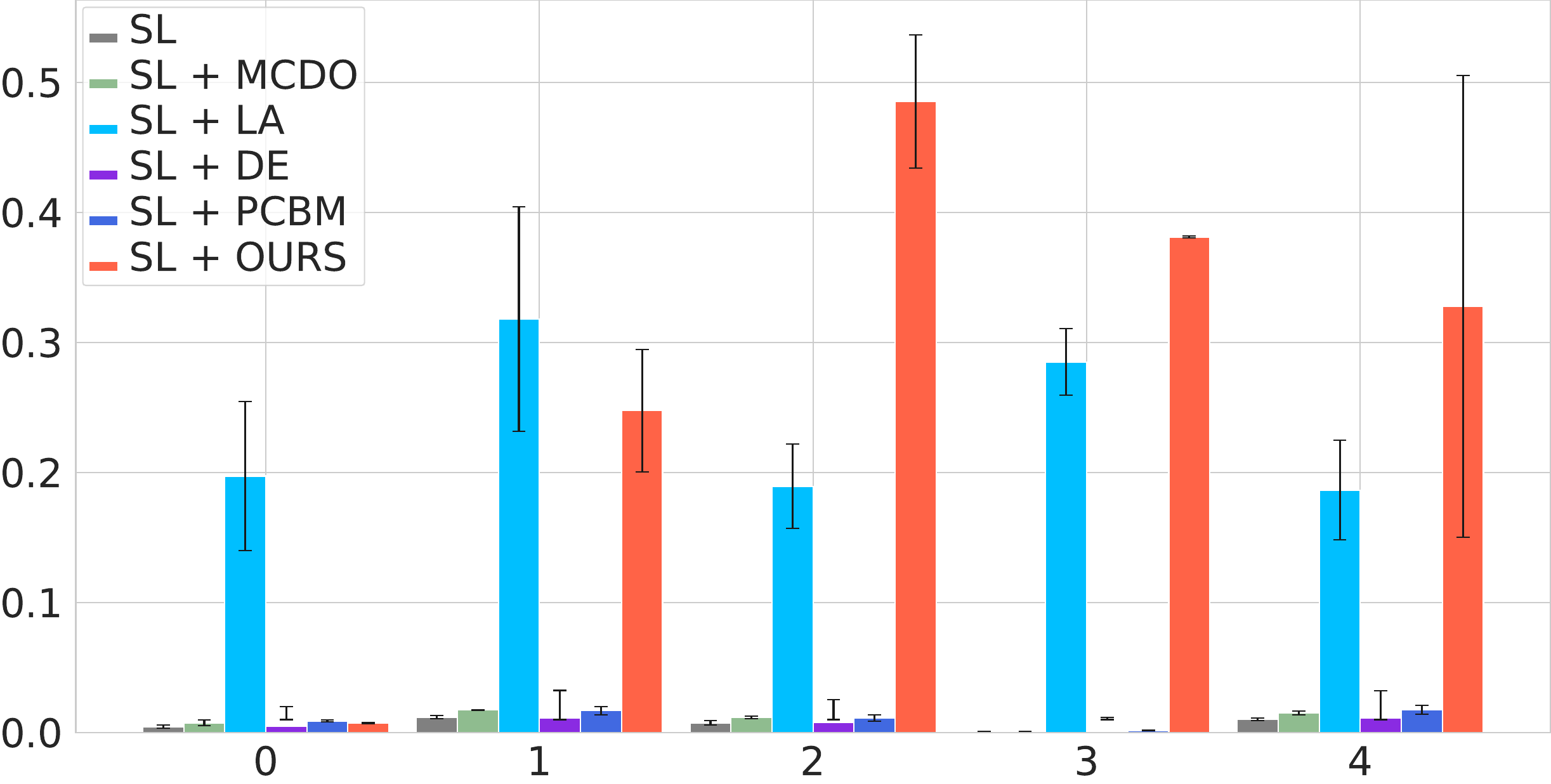}
    \end{tabular}
    \caption{\textbf{\SL + \method shows high entropy for concepts affected by RSs while it does not for others.} (\textit{left}): in-distribution. (\textit{right}): out-of-distribution.} 
    \label{fig:sl-halfmnist-entropy}
\end{figure}

\vspace{2em}

\begin{figure}[!h]
    \centering
    \begin{tabular}{cc}
        \includegraphics[width=0.475\linewidth]{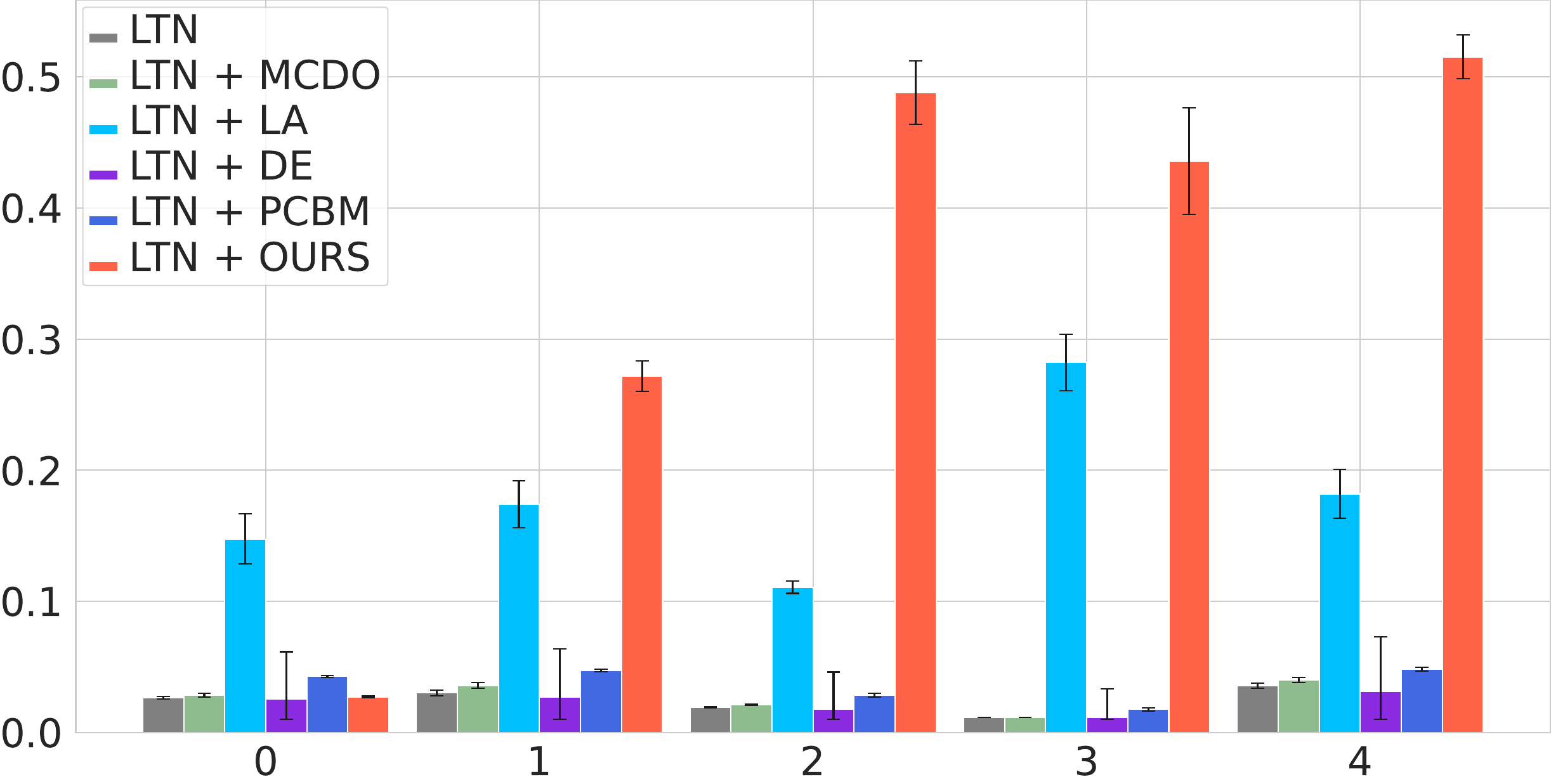}
        &
        \includegraphics[width=0.475\linewidth]{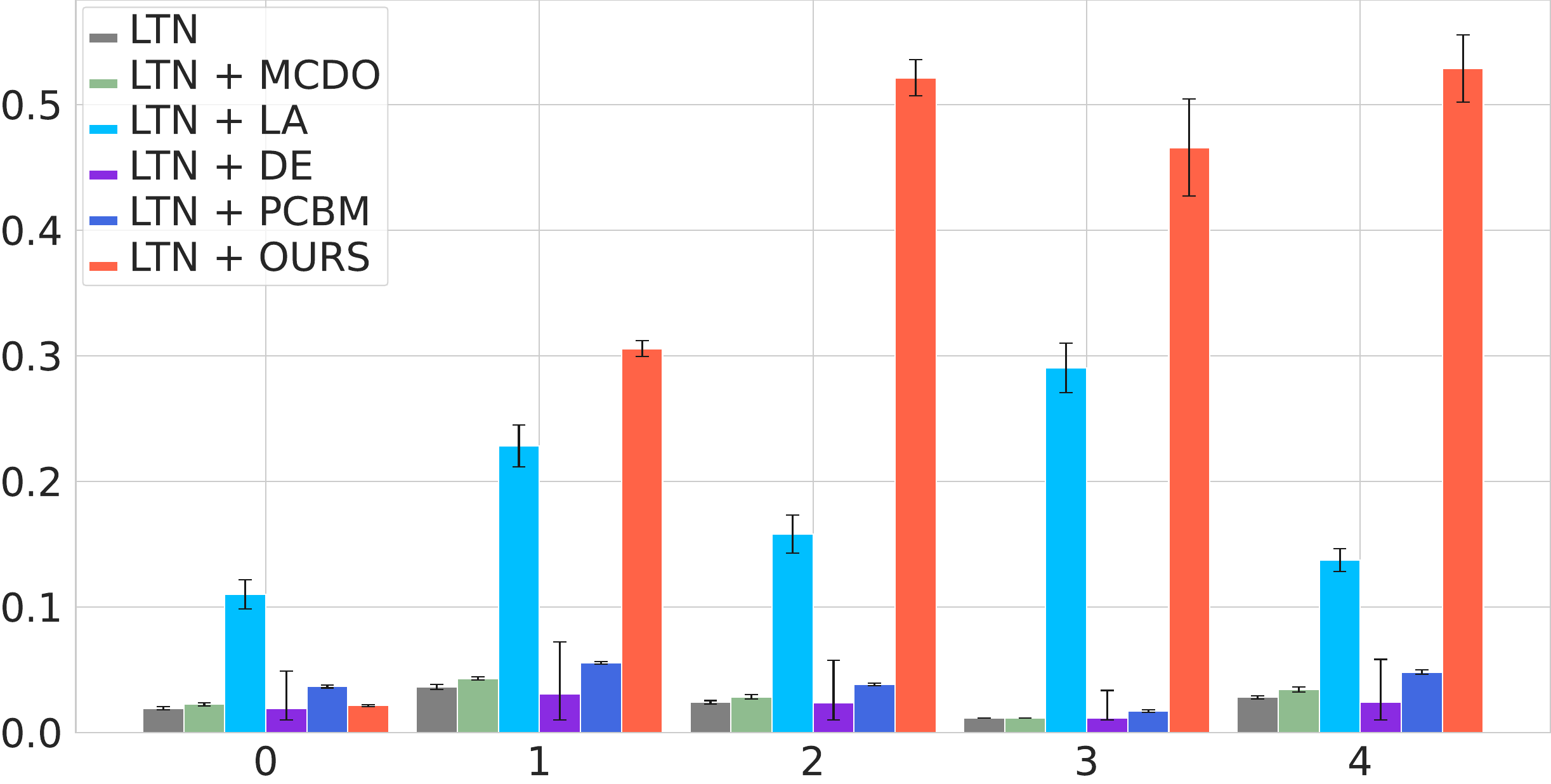}
    \end{tabular}
    \caption{\textbf{\LTN + \method shows high entropy for concepts affected by RSs while it does not for others.} (\textit{left}): in-distribution. (\textit{right}): out-of-distribution.} 
    \label{fig:ltn-halfmnist-entropy}
\end{figure}

\newpage

\subsection{Confusion Matrices \Kandinsky}

We report the confusion matrices (CMs) for the active learning experiment on \Kandinsky dataset. At the beginning, \DPL picks a RS showing that only few concept vectors $\vc$ can be used to solve the classification task. At the last iteration, corresponding to collecting a total of $70$ objects with concept annotation, \DPL and \DPL + \method show very different CMs. Both show that colors have learned correctly, although the concept annotation collected with \method make \DPL align more to the diagonal, corresponding to the intended solution.

\begin{figure}[!h]
    \centering
    \includegraphics[width=0.4\linewidth]{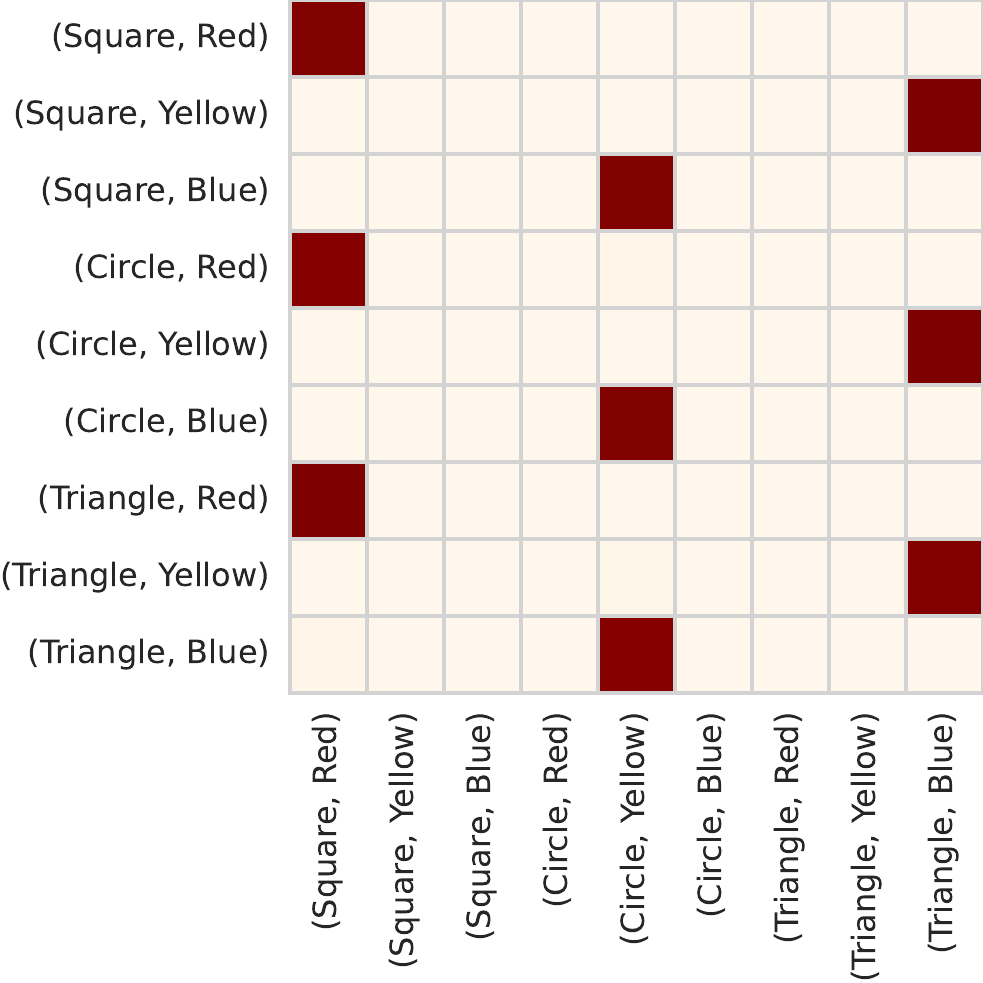}
    \caption{\textbf{\DPL at iteration $0$ in active learning settings on \Kandinsky}}
    \label{fig:dpl-halfmnist-active}
\end{figure}

\begin{figure}[!h]
    \centering
    \begin{tabular}{ccc}
        \includegraphics[width=0.4\linewidth]{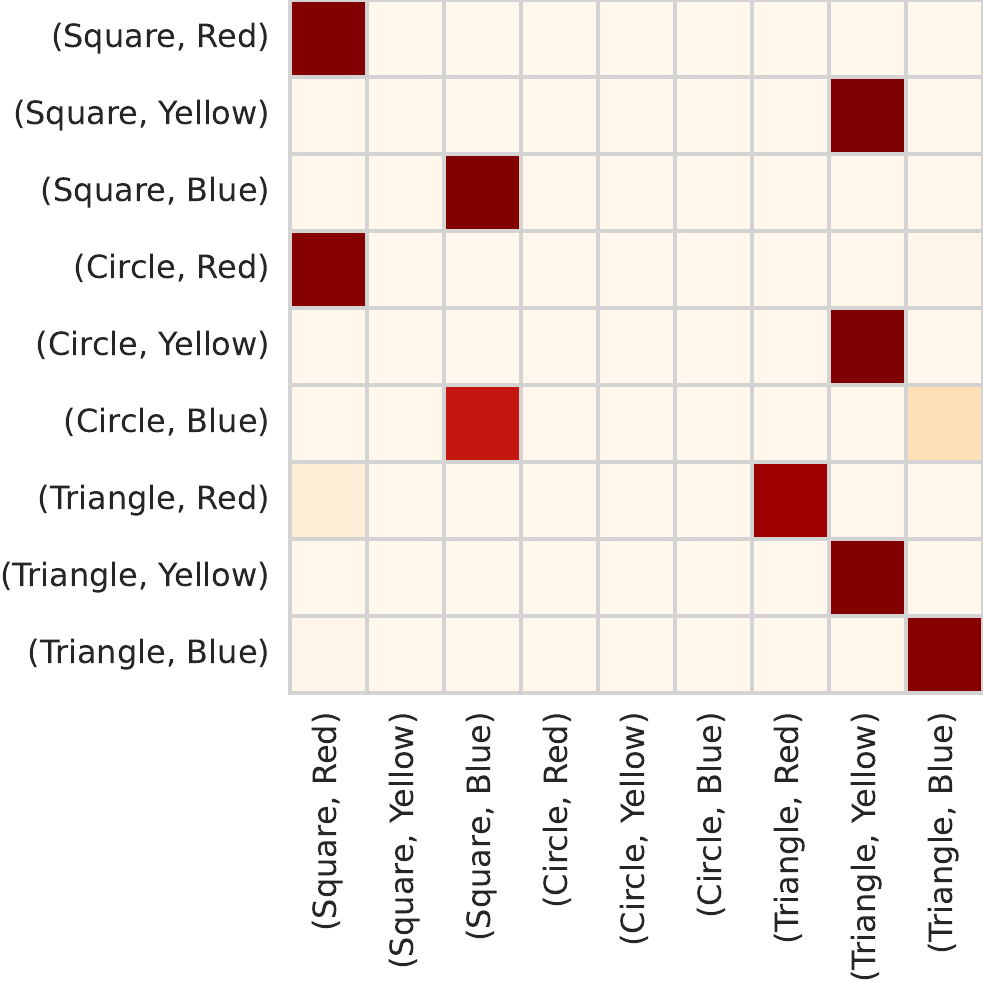}
        & &
        \includegraphics[width=0.4\linewidth]{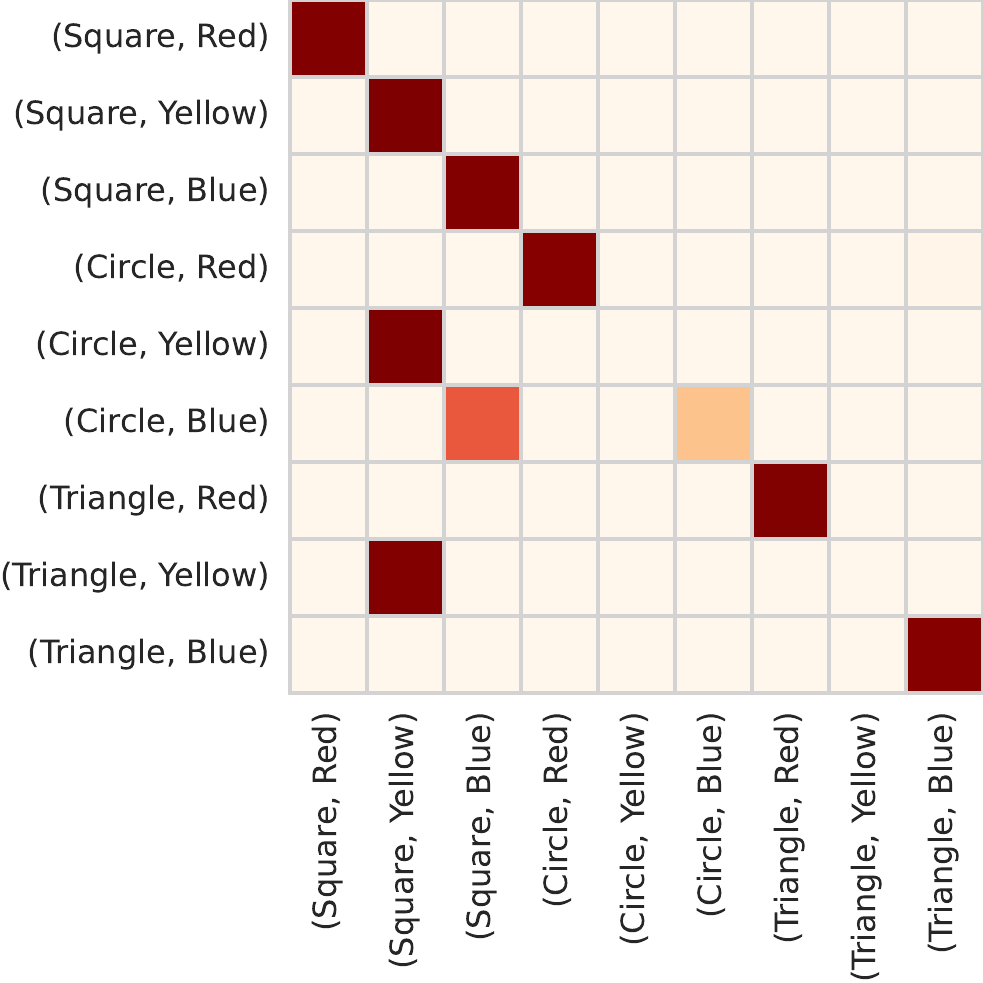}
    \end{tabular}
    \caption{\textbf{Iteration $70$ in active learning settings on \Kandinsky.} Right: \DPL Left: \DPL + \method}
    \label{fig:dpl-halfmnist-active-end}
\end{figure}

\newpage
\subsection{Confusion Matrices on \BOIA}

\begin{figure}[!h]
    \centering
    \includegraphics[width=\linewidth]{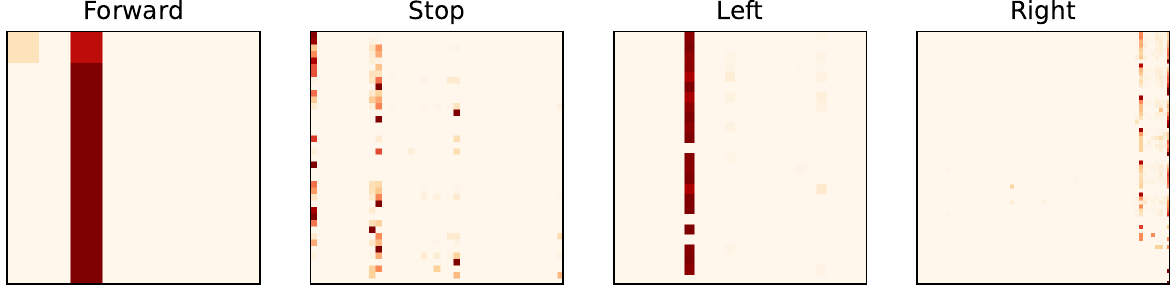}
    \caption{\textbf{\DPL confusion matrices per concept classes on \BOIA}}
    \label{fig:dpl-frequentist-boia-cm}
\end{figure}

\begin{figure}[!h]
    \centering
    \includegraphics[width=\linewidth]{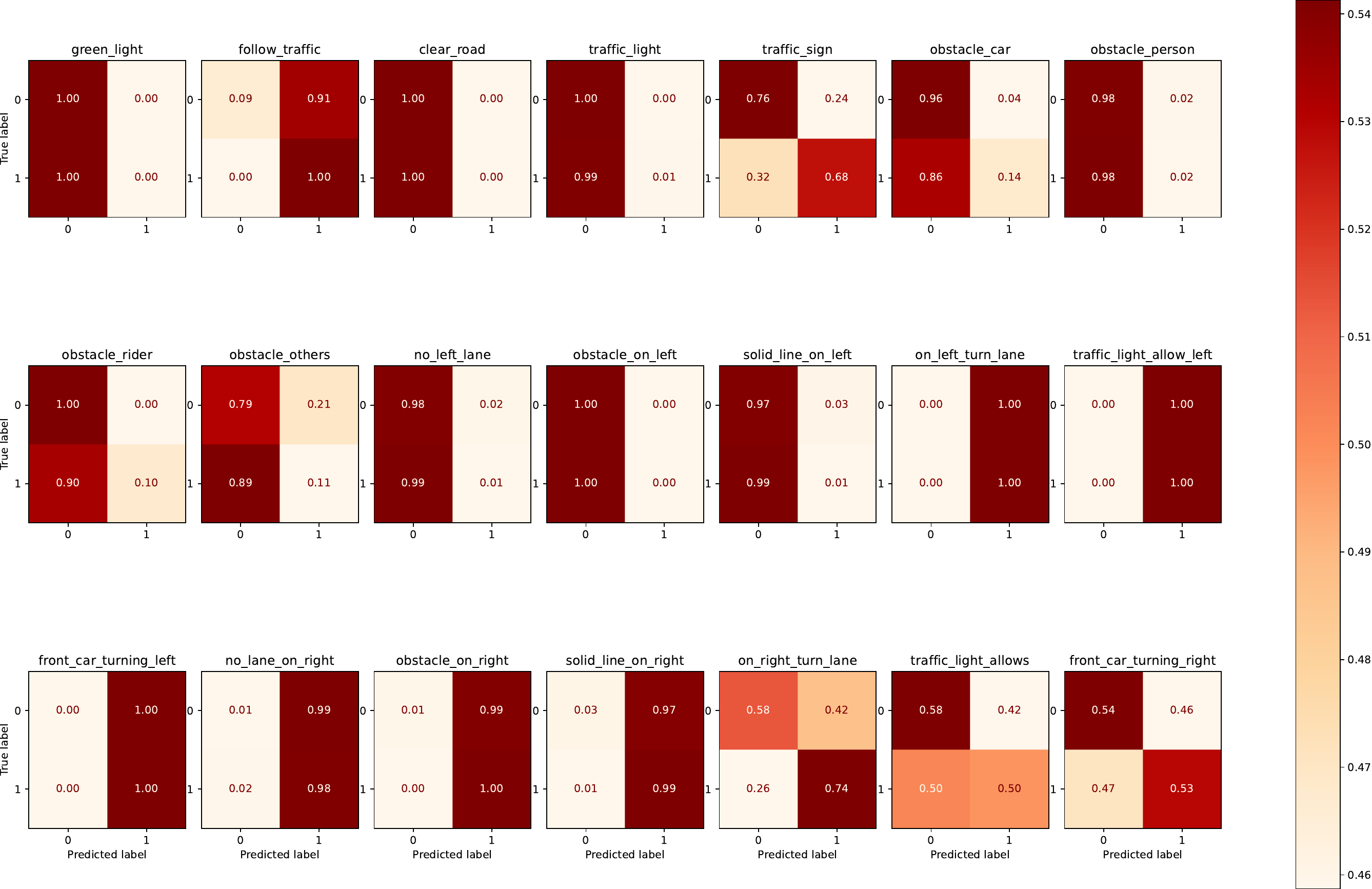}
    \caption{\textbf{\DPL multilabel confusion matrix on \BOIA}}
    \label{fig:dpl-frequentist-boia-cm-2}
\end{figure}

\newpage

\begin{figure}[!h]
    \centering
    \includegraphics[width=\linewidth]{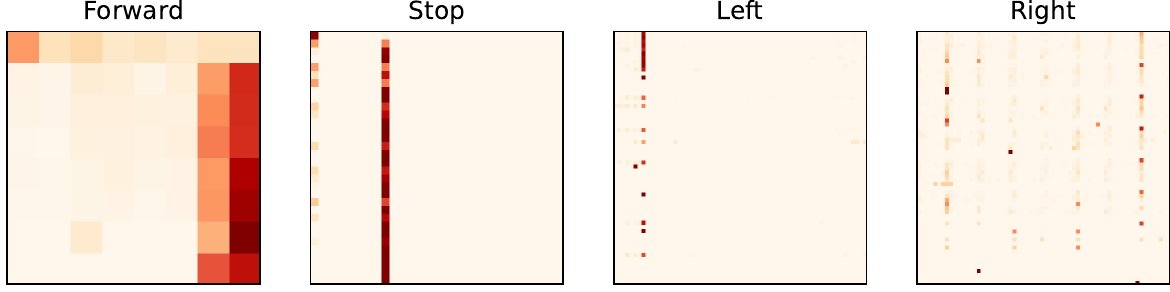}
    \caption{\textbf{\DPL + \method confusion matrices per concept classes on \BOIA}}
    \label{fig:biretta-frequentist-boia-cm}
\end{figure}

\begin{figure}[!h]
    \centering
    \includegraphics[width=\linewidth]{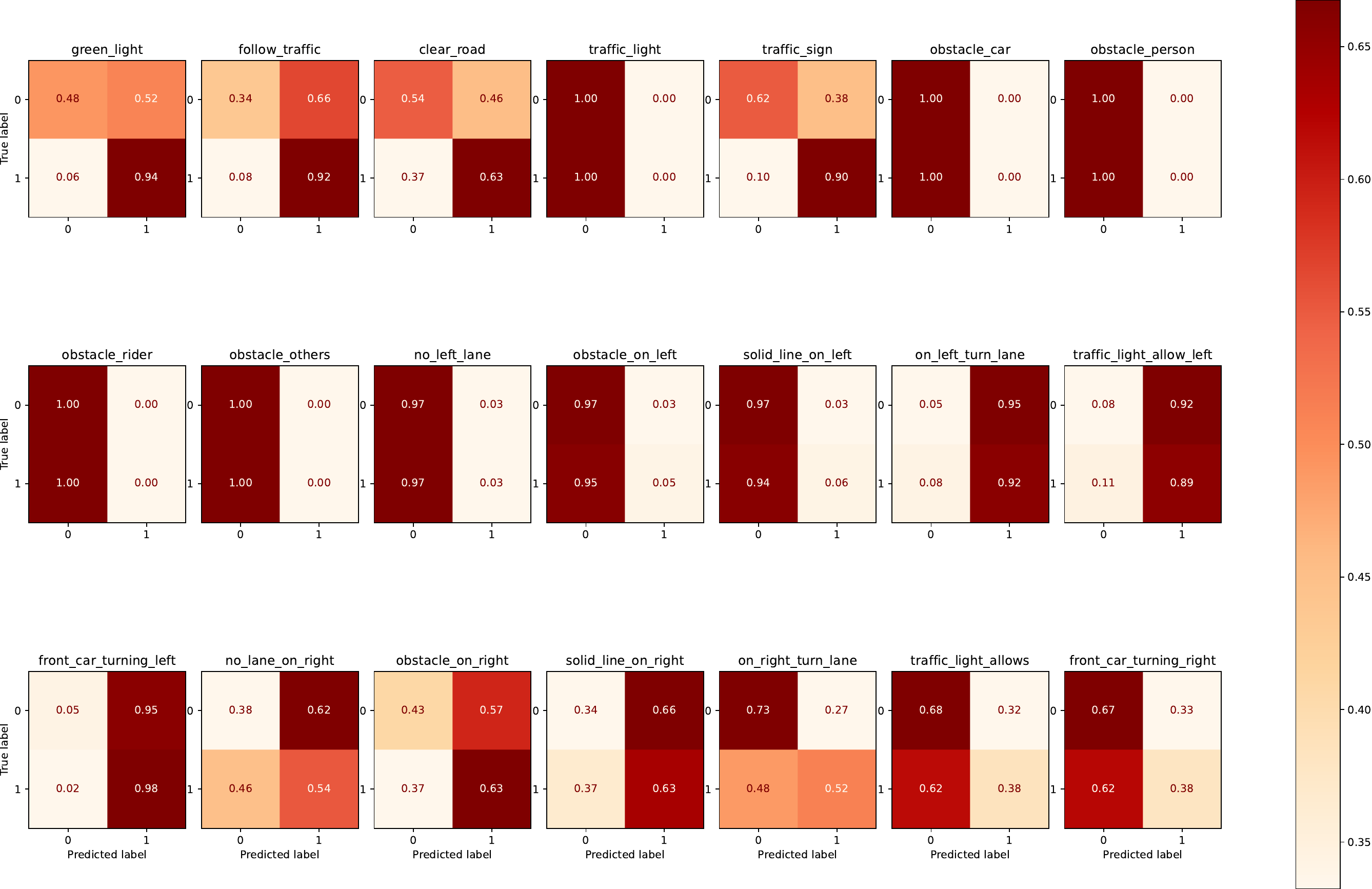}
    \caption{\textbf{\DPL + \method multilabel confusion matrix on \BOIA}}
    \label{fig:biretta-frequentist-boia-cm-2}
\end{figure}

\newpage
\subsection{Confusion Matrices on \MNISTHalf}

\begin{figure}[!h]
    \centering
    \begin{tabular}{ccc}
        \includegraphics[width=0.4\linewidth]{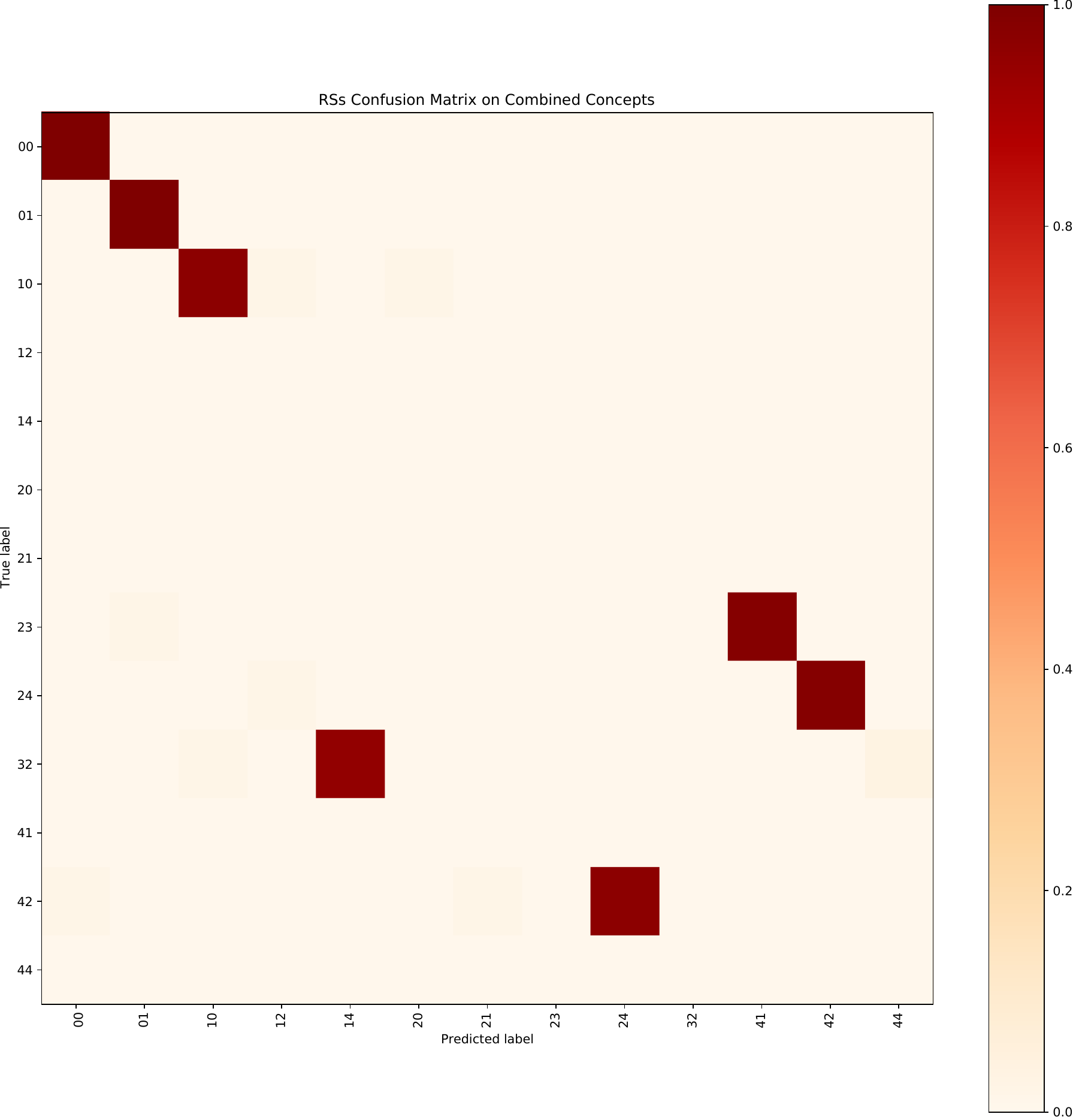} 
        & \textcolor{white}{POLLO} &
        \includegraphics[width=0.4\linewidth]{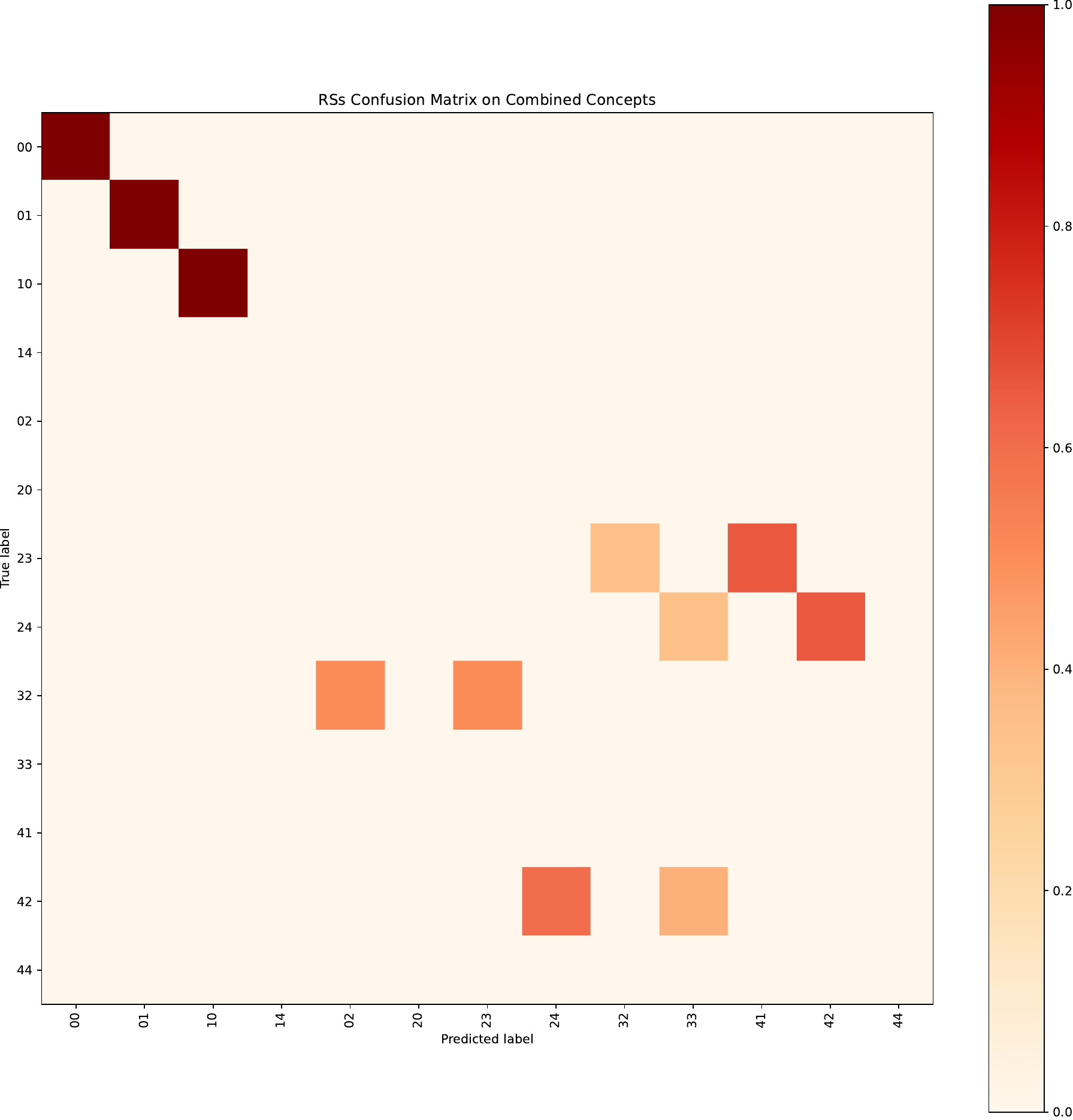}
    \end{tabular}
    
    \caption{\textbf{(\textit{left)} \DPL  and (\textit{right}) \DPL + \method concepts confusion matrix on \MNISTHalf}}
    \label{fig:dpl-biretta-mnist-cm}
\end{figure}

\end{document}